%% file: main.tex
\documentclass{article}

\usepackage{microtype}
\usepackage{graphicx}
\usepackage{subcaption}
\usepackage{booktabs} % for professional tables

\usepackage{hyperref}

\usepackage[preprint]{icml2026}

\usepackage{amsmath}
\usepackage{amssymb}
\usepackage{mathtools}
\usepackage{amsthm}

\usepackage[capitalize,noabbrev]{cleveref}

\theoremstyle{plain}
\newtheorem{theorem}{Theorem}[section]

\newtheorem{lemma}[theorem]{Lemma}
\newtheorem{corollary}[theorem]{Corollary}
\theoremstyle{definition}
\newtheorem{definition}[theorem]{Definition}

\theoremstyle{remark}

\usepackage[textsize=tiny]{todonotes}

\usepackage{adjustbox}
\usepackage{wrapfig}
\usepackage{enumitem}
\usepackage[table]{xcolor}
\usepackage{caption}
\usepackage{multirow}

\usepackage{xspace}
\newcommand{\mName}{{DSGD}\xspace}

\icmltitlerunning{Dynamic Scaled Gradient Descent for Stable Fine-Tuning for Classifications}

\begin{document}

\twocolumn[
  \icmltitle{Dynamic Scaled Gradient Descent for Stable Fine-Tuning for Classifications}

  % It is OKAY to include author information, even for blind submissions: the
  % style file will automatically remove it for you unless you've provided
  % the [accepted] option to the icml2026 package.

  % List of affiliations: The first argument should be a (short) identifier you
  % will use later to specify author affiliations Academic affiliations
  % should list Department, University, City, Region, Country Industry
  % affiliations should list Company, City, Region, Country

  % You can specify symbols, otherwise they are numbered in order. Ideally, you
  % should not use this facility. Affiliations will be numbered in order of
  % appearance and this is the preferred way.
  \icmlsetsymbol{equal}{*}

  \begin{icmlauthorlist}
    \icmlauthor{Nghia Bui}{njit}
    \icmlauthor{Lijing Wang}{njit}
  \end{icmlauthorlist}

  \icmlaffiliation{njit}{Department of Data Science, New Jersey Institute of Technology, New Jersey, USA}

  \icmlcorrespondingauthor{Lijing Wang}{lijing.wang@njit.edu}

  % You may provide any keywords that you find helpful for describing your
  % paper; these are used to populate the "keywords" metadata in the PDF but
  % will not be shown in the document
  \icmlkeywords{Large Pretrained Model, Fine-Tuning, Learning Stability, Random Seeds, Gradient Cancellation, Collapsed Failure}

  \vskip 0.3in
]

% this must go after the closing bracket ] following \twocolumn[ ...

% This command actually creates the footnote in the first column listing the
% affiliations and the copyright notice. The command takes one argument, which
% is text to display at the start of the footnote. The \icmlEqualContribution
% command is standard text for equal contribution. Remove it (just {}) if you
% do not need this facility.

% Use ONE of the following lines. DO NOT remove the command.
% If you have no special notice, KEEP empty braces:
\printAffiliationsAndNotice{}  % no special notice (required even if empty)
% Or, if applicable, use the standard equal contribution text:
% \printAffiliationsAndNotice{\icmlEqualContribution}

\begin{abstract}
Fine-tuning pretrained models has become a standard approach to adapting pretrained knowledge to improve the accuracy on new sparse, imbalance datasets. However, issues arise when optimization falls into a collapsed state, where the model gets stuck, leading to degraded performance and unstable training. One possible reason for this is the cancellation of gradients across training examples. To address this problem, we propose a novel algorithm, dynamic scaled gradient descent (\mName), that directly modifies the gradients returned by training examples, specifically, scaling down the gradients of correctly classified examples using a dynamic scaler. This strategy offers both theoretical and empirical advantages in improving training stability. Experiments on a variety of benchmark datasets, spanning multiple tasks and large pretrained models, demonstrate that our method consistently reduces performance variance and surpasses the accuracy of existing approaches.
\end{abstract}

\input{sections/introduction}

\input{sections/problem}

\input{sections/method}

\input{sections/theory}

\input{sections/experiment}

\input{sections/conclusion}

% Acknowledgements should only appear in the accepted version.
% \section*{Acknowledgements}

% \textbf{Do not} include acknowledgements in the initial version of the paper
% submitted for blind review.

% If a paper is accepted, the final camera-ready version can (and usually should)
% include acknowledgements.  Such acknowledgements should be placed at the end of
% the section, in an unnumbered section that does not count towards the paper
% page limit. Typically, this will include thanks to reviewers who gave useful
% comments, to colleagues who contributed to the ideas, and to funding agencies
% and corporate sponsors that provided financial support.

\section*{Impact Statement}
This paper presents work whose goal is to advance the field of Machine
Learning. There are many potential societal consequences of our work, none
which we feel must be specifically highlighted here.

% In the unusual situation where you want a paper to appear in the
% references without citing it in the main text, use \nocite
% \nocite{langley00}

\bibliography{ref}
\bibliographystyle{icml2026}

%%%%%%%%%%%%%%%%%%%%%%%%%%%%%%%%%%%%%%%%%%%%%%%%%%%%%%%%%%%%%%%%%%%%%%%%%%%%%%%
%%%%%%%%%%%%%%%%%%%%%%%%%%%%%%%%%%%%%%%%%%%%%%%%%%%%%%%%%%%%%%%%%%%%%%%%%%%%%%%
% APPENDIX
%%%%%%%%%%%%%%%%%%%%%%%%%%%%%%%%%%%%%%%%%%%%%%%%%%%%%%%%%%%%%%%%%%%%%%%%%%%%%%%
%%%%%%%%%%%%%%%%%%%%%%%%%%%%%%%%%%%%%%%%%%%%%%%%%%%%%%%%%%%%%%%%%%%%%%%%%%%%%%%
\newpage
\appendix
\onecolumn
\input{sections/appendix}
%%%%%%%%%%%%%%%%%%%%%%%%%%%%%%%%%%%%%%%%%%%%%%%%%%%%%%%%%%%%%%%%%%%%%%%%%%%%%%%
%%%%%%%%%%%%%%%%%%%%%%%%%%%%%%%%%%%%%%%%%%%%%%%%%%%%%%%%%%%%%%%%%%%%%%%%%%%%%%%

\end{document}

%% file: sections/introduction.tex
\section{Introduction}\label{sec:introduction}

The blossom of deep learning in recent years has led to the success of large pretrained models (LPMs), such as BERT~\cite{devlin2019bert}, RoBERTa~\cite{liu2019roberta}, LLaMA~\cite{touvron2023llama} in natural language processing (NLP) and ViT in computer vision~\cite{dosovitskiy2020image}, where they can be finetuned for a wide range of downstream tasks with remarkable performance.
Despite their success, fine-tuning LPMs often suffers from high sensitivity to random seeds~\cite{dodge2020fine,mosbach2020stability, zhang2020revisiting,cao2019learning}, a phenomenon known as training instability, causing significant performance variation across runs~\cite{bethard2022we,hua2021noise,wang-etal-2023-two,bui2025assessing}, as shown in Figure~\ref{fig:collapsed_proportion}.
As a result, obtaining reliable performance requires multiple training runs, a process that is both computationally prohibitive and impractical for real-time applications.
Training stability is a well-documented challenge, particularly during fine-tuning, where pretrained representations become sensitive to factors such as weight initialization, data ordering, and hyperparameter choices. Prior work has studied this phenomenon in NLP~\cite{mosbach2020stability, zhang2020revisiting, dodge2020fine, phang2018sentence, wang-etal-2023-two, nishida-etal-2025-instability, hidey-etal-2022-reducing,bui2025assessing} and computer vision~\cite{summers2021nondeterminism, pecher-etal-2024-fighting}, attributing instability to distribution shifts between pretraining and downstream data, as well as optimization nondeterminism in deep learning pipelines~\cite{picard2021torch, hardt2016train, mosbach-2023-analyzing, zhang2020revisiting}.

\begin{figure}[t]
    \centering
    \includegraphics[width=\linewidth]{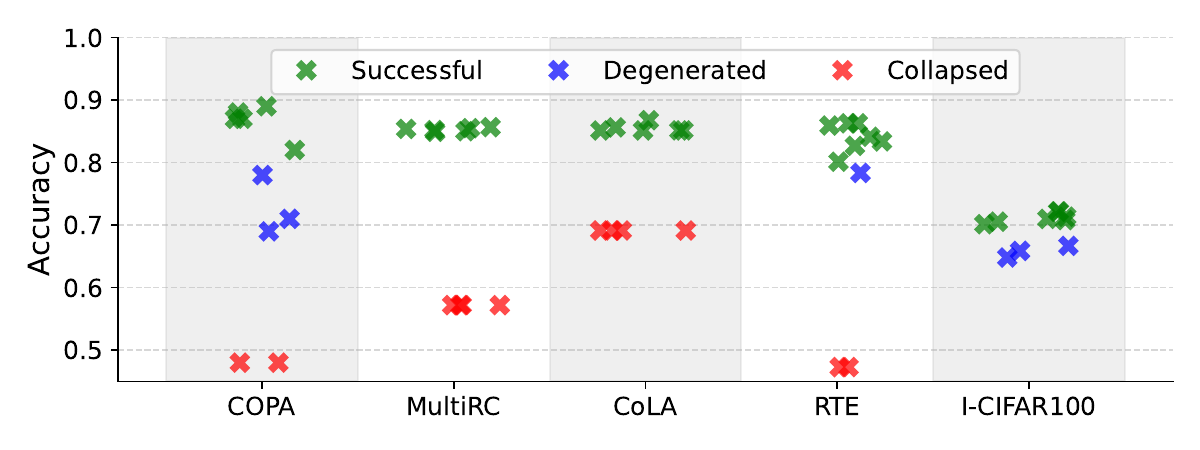}
    \caption{Training failures (either collapsed state or degenerate solution) on various tasks when fine-tuning pretrained models across 10 random seeds. Each task uses the same hyperparameters, with only the random seed varying between runs.}
    \label{fig:collapsed_proportion}
\end{figure}

To reduce performance variation induced by random seeds, few methods have been proposed. While ensembles and their variations~\cite{wang-etal-2023-two, wang2020wisdom, nishida-etal-2025-instability,izmailov2018averaging, nishida-etal-2025-instability} are theoretically grounded and often the most effective at lowering variance, they are computationally (either training time or storage) prohibitive for large-scale deployment. Data-centric solutions like increasing dataset size \cite{dodge2020fine} are often impractical, and optimization guidelines such as extended training \cite{mosbach2020stability} still result in high variance. 
Noise-based techniques~\cite{hua2021noise, wu2022noisytune} typically seek to improve generalization and stability of LPMs in fine-tuning by injecting noise but lack a targeted mechanism to directly stabilize the fine-tuning optimization process and can increase divergence.
Consequently, stabilizing fine-tuning of LPMs remains largely a process of trial and error, with the core issue of seed-induced variance still unresolved.

In this work, we attribute fine-tuning instability to gradient conflicts across training examples. Under certain random seeds, conflicting gradients arising from parameter initialization or data ordering may cancel each other, causing premature convergence even when training error remains high. Consequently, a model fails to learn meaningful patterns, resulting in drastically different outputs across runs.
Although no prior work has been directly working on solving gradient cancellation across training examples, the gradient conflict resolvers from multitask learning \cite{sener2018multi, yu2020gradient, liu2021conflict, chen2020just} were proposed to reduce inter-task gradient conflict through projection or weighting schemes; however, these methods require expensive separate backward passes and rely on detecting significant negative cosine similarity between gradients, failing when gradients are orthogonal or small-magnitude. 

We propose a \textbf{D}ynamic \textbf{S}caled \textbf{G}radient \textbf{D}escent (\textbf{\mName}) algorithm for classification that explicitly targets optimization instability during fine-tuning. We adaptively scale gradients from correctly classified examples to alleviate gradient cancellation, leading to improved convergence and fine-tuning stability under seed-induced randomness. We also provide theoretical guarantee and empirical evidence to validate the proposed algorithm. 
\mName is a drop-in replacement for standard optimizers: \textit{no additional backward passes, no architectural changes, no task-specific tuning}, and \textit{negligible overhead}. 
To our knowledge, \textbf{\mName is the first method to provably and empirically avoid seed-induced collapsed failure during gradient-based learning}.

\textbf{Major Contributions:} 
\textbf{(1)} identifies gradient conflicts between correctly and wrongly classified examples as the key cause of a failed run during fine-tuning; 
\textbf{(2)} proposes \mName, a lightweight and proactive optimization algorithm that dynamically downscales the gradient contributions of correctly classified examples within each mini-batch to mitigate gradient conflicts; 
\textbf{(3)} provides a formal theoretical motivation for \mName via gradient decomposition; 
% Analysis shows that preserving gradients from wrongly classified examples while modulating gradients from correctly classified ones aligns with theoretical optimality; 
\textbf{(4)} proves \mName ensures convergence to better stationary points and a tighter stability upper bound than standard gradient descent, formally guaranteeing improved accuracy and stability; 
\textbf{(5)} demonstrates consistent and significant improvements in accuracy and stability over baselines across 14 diverse NLP and vision tasks, confirming broad applicability.

%% file: sections/problem.tex
\begin{figure*}[t]
    \centering
    \begin{subfigure}{0.24\linewidth}
        \centering
        \includegraphics[width=\linewidth]{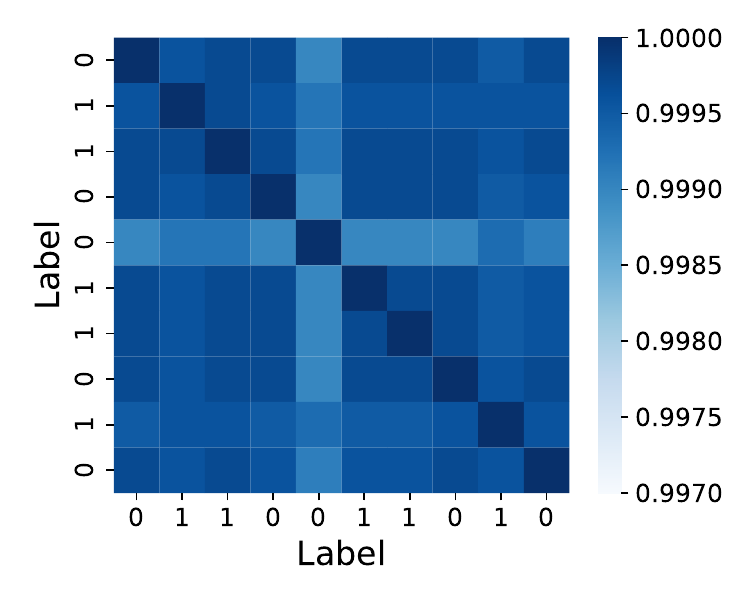}
        \caption{Collapsed state.}
        \label{fig:sim_feat}
    \end{subfigure}
    \begin{subfigure}{0.24\linewidth}
        \centering
        \includegraphics[width=\linewidth]{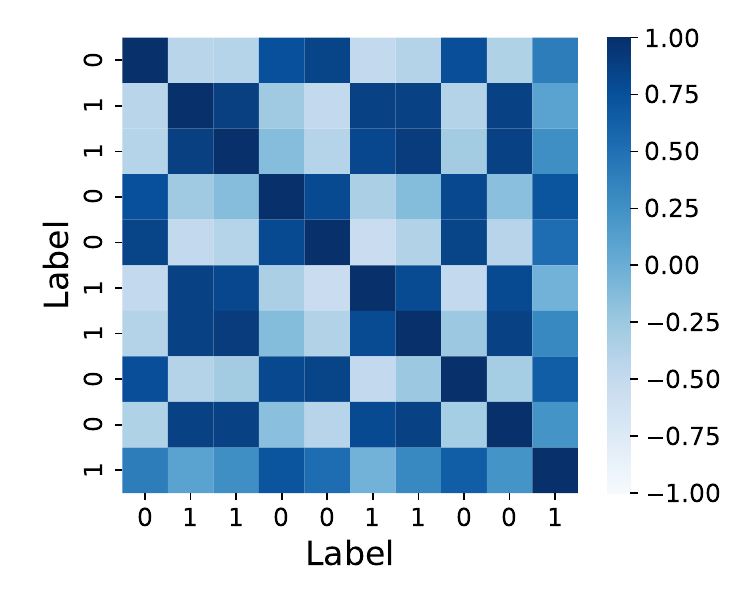}
        \caption{Successful run.}
        \label{fig:sim_feat_s}
    \end{subfigure}
    \begin{subfigure}{0.24\linewidth}
        \centering
        \includegraphics[width=\linewidth]{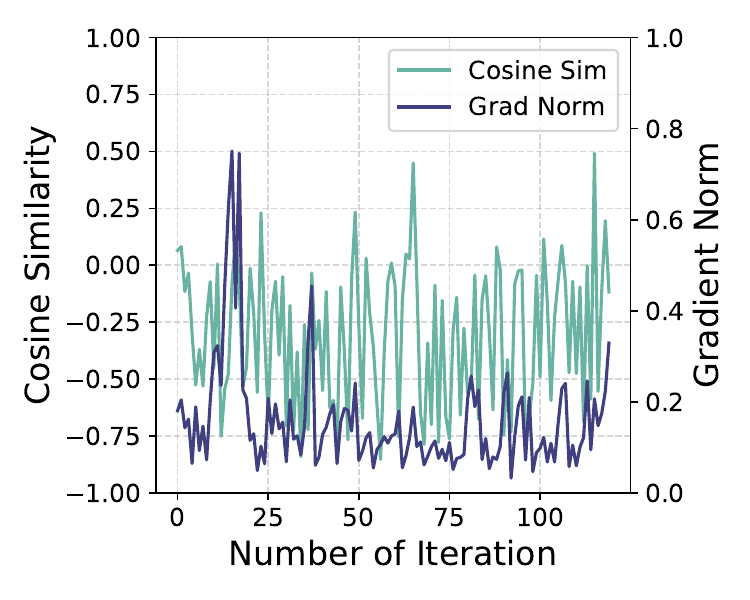}
        \caption{Failed run.}
        \label{fig:cosine_conflict}
    \end{subfigure}
    \begin{subfigure}{0.24\linewidth}
        \centering
        \includegraphics[width=\linewidth]{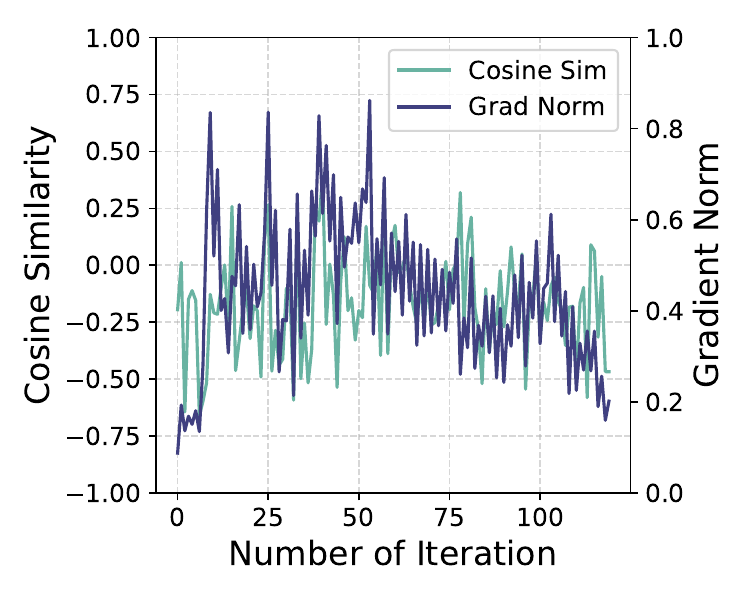}
        \caption{Successful run.}
        \label{fig:cosine_conflict_suc}
    \end{subfigure}
    \caption{(a) and (b) are cross-cosine similarity of \texttt{[CLS]} token representations for 10 randomly selected examples (5 per class) when fine-tuning \texttt{\small RoBERTa-large} on COPA (a binary classification task), in a collapsed state (random seed 132) and a successful run (random seed 42), respectively. Axes list the corresponding labels. Representations in collapsed states become uniformly similar, while successful runs show higher within-class than between-class similarity. (c) and (d) show gradient norms of all training examples and cosine similarities between gradients of correctly and incorrectly classified training examples within a batch in a failed run and successful run respectively, measured by mean of 10 randomly selected dense layers over training iterations. }
    \label{fig:failed_suc_comparison}
\end{figure*}

\section{Phenomenon}\label{sec:problem}
% This section defines key terminology for the paper, our discussion and analysis focus on classifications.

\begin{definition}
\label{def:convergence}
    \textbf{Convergence:} The state during training when model parameters stabilize, with the loss function exhibiting negligible further decrease. Convergence may result in either a high-performing solution, a degenerate solution, or a collapsed state, depending on the optimization trajectory and loss landscape.
\end{definition}

\begin{definition}
\label{def:degenerate-solution}
    \textbf{Degenerate Solution:} A suboptimal outcome of optimization where the model produces varied predictions but fails to learn meaningful decision boundaries, resulting in poor generalization performance despite non-trivial predictive behavior.
\end{definition}

\begin{definition}
\label{def:collapsed-state}
    \textbf{Collapsed State:} A pathological failure mode of optimization where the model's predictions saturate to a single class label for all inputs, representing a severe form of model collapse with zero predictive variance.
\end{definition}

\begin{definition}
\label{def:gradient-cancellation}
    \textbf{Gradient Cancellation:} A phenomenon occurring during optimization where the expected gradient of training examples within a batch can be approximately zero, leading to no effective update.
\end{definition}

\textbf{Phenomenon:} As shown in Figure~\ref{fig:collapsed_proportion}, when fine-tuning a LPM, certain random seeds can lead to a \textit{training failure}, either a collapsed state (red crosses) or a degenerate solution (blue crosses). This sensitivity results in significant performance variance across different runs. 

\textbf{Underlying phenomenon:} To investigate the reasons for a training failure, we compute the pairwise cosine similarity matrix of \texttt{[CLS]} token representations from a random subset of training examples after fine-tuning. The results are shown in Figure~\ref{fig:sim_feat} for a collapsed state and \ref{fig:sim_feat_s} for a successful run. We can see that in a collapsed state, the \texttt{[CLS]} representations lose their class-discriminative structure and converge to nearly identical vectors, resulting in uniformly high similarity scores. This indicates a failure to learn meaningful feature separation.

Furthermore, in Figure~\ref{fig:cosine_conflict} and ~\ref{fig:cosine_conflict_suc}, we analyze the gradient norm (blue curve) of the within-batch training examples and the cosine similarity (green curve) between the gradients of correctly and wrongly classified examples. Consistent with the observation of~\cite{mosbach2020stability} that gradient norms are markedly lower in failed runs, our results show a similar trend (i.e., close to 0 in most iterations). This phenomenon can be attributed to gradient cancellation.
Cosine similarity reveals substantial gradient conflicts in the failed run, where 90\% of iterations yield negative values and over 30\% fall below -0.5 (as low as -0.85). In contrast, successful run exhibits milder conflicts, with most similarities above -0.25 and only a few below -0.5. 
This consistent opposition in failed runs results in gradient cancellation during the update step of parameters. This widespread cancellation in the early training can lead to a collapsed state or a degenerate solution.

%% file: sections/method.tex
\section{Proposed Solution: \mName}
\label{sec:solution}
\textbf{Motivation:} 
We consider a learning in the context of minibatch training, where within a batch the distribution of classes of examples can be highly imbalanced. This frequently occurs either due to inherent dataset imbalance or simply the randomness of mini-batch sampling~\cite{dodge2020fine}. In such a scenario, models can collapse to a trivial solution where predictions are dominated by the majority class within the batch. In this collapsed state, gradients from the correctly classified examples form a dominant signal that overwhelms and cancels out the crucial error signal from misclassified minority examples. This prevents the model from learning meaningful features for minority classes. This will repeat for each batch training leading to the model trapped in the collapsed state. 
To prevent this, we must break the gradient cancellation, especially early in training.

\textbf{Solution:} 
Assume that we are given $N$ data points in the training dataset $S$: $z_1, z_2, \dots z_N$ where $z_i= (x_i, y_i)$ drawn i.i.d from an unknown distribution $\mathcal{D}$, where $x_i \in \mathcal{X}$ is an input and $y_i \in \mathcal{Y}$ is the corresponding one-hot encoded label. Let $f_\theta: \mathcal{X} \to \mathcal{Y}$ be a model parameterized by  $\theta \in \Theta$, and let $\mathcal{L}(\theta, z)$ be the loss function. Let $\theta_t$ denote model parameters at training iteration $t$, $\eta$ the learning rate, and $\mathcal{B} \subset \mathcal{D}$ a batch of $m$ examples. The standard gradient descent (GD) update at iteration $t$ is $\theta_{t+1} = \theta_t - \eta \frac{1}{m} G$, where 
$G = \sum_{i=1}^m \nabla_\theta \mathcal{L}_i$ denotes the total gradient over the entire batch. 
We define $\mathcal{B}_W$ and $\mathcal{B}_C$ as the subset of wrongly and correctly classified examples in $\mathcal{B}$ respectively. 
We introduce a \textbf{D}ynamical \textbf{S}caled \textbf{G}radient \textbf{D}escent (\textbf{\mName}) algorithm that reduces the influence of correctly classified examples within each batch. Our modified update rule is as follows:

\begin{equation}
    \label{equ:gradscale}
    \theta_{t+1} = \theta_t - \eta \frac{1}{m}\left( G_W + \gamma_{t} G_C \right),
\end{equation}
where $G_W$ and $G_C$ are the total gradients from $\mathcal{B}_W$ and $\mathcal{B}_C$ respectively.
The dynamic scaling factor $\gamma_t \in [0, \tau]$ is defined as a function:
\( \gamma_t = \zeta(t, T, \tau)\), where $t$ denotes the current training iteration, $T$ is the total number of training iterations, and $\tau \leq 1$ 
is a hyperparameter that controls the maximum allowed scaling. A smaller $\tau$ imposes 
a stricter upper bound, thereby more aggressively downscaling the gradient contribution 
from correctly classified examples. 
Various $\gamma_t$ functions are implemented in Section~\ref{subsec:implementation}.

%% file: sections/theory.tex
\section{Theoretical Foundations}
\label{sec:theory}
In this section, we analyze the theoretical properties of \mName. We prove that, under mild assumptions, our method ensures that the optimization process does not stall at poor solutions but instead converges to a more desirable stationary point. Moreover, we prove that our method provides a tighter stability bound than standard gradient descent (GD).
Proofs of the theorems in this section can be found in Appendix~\ref{apx:prfs}.

\subsection{Theoretical Motivation}
Consider a single layer neural network, let $p_i = \sigma(o_i)$ where $o_i = \omega x_i + b$ denote the output of the model, where $\theta = \{\omega, b\}$ are learnable parameters and $\sigma(\cdot)$ is the softmax function.  
$\mathcal{L}$ is the linear combination loss decomposed as:
\begin{equation}
    \label{equ:loss_decompose}
    \mathcal{L} =  \mathcal{L}_C + \mathcal{L}_W ,
\end{equation}
where $\mathcal{L}_C$ and $\mathcal{L}_W$ denote the loss of $\mathcal{B}_C$ and $\mathcal{B}_W$, respectively.
Assume the model parameter is updated using stochastic GD (SGD) with batch size $m=1$~\cite{hardt2016train} and $S$ is linear separable.
We begin by establishing key properties of the two loss components, $\mathcal{L}_W$ and $\mathcal{L}_C$.

\begin{definition}[Zero Training Error]
\label{def:zero-training-error}
A classifier parametrized by $\theta$ is said to attain \emph{zero training error} on $S = \{(x_i, y_i)\}_{i=1}^N$ if $ \forall i,\arg\max\{y_i\} = \arg\max\{p_i\}$.
\end{definition}

\begin{lemma}[Property of $\mathcal{L}_W$] \label{lem:property_lw}
If gradient-based optimization converges to a stationary point of $\mathcal{L}_{W}$, then the resulting classifier attains zero training error (which does not require zero training loss). Consequently, optimizing $\mathcal{L}_{W}$ leads to a parameter $\theta^{*}$ for which every training example is correctly classified.
\end{lemma}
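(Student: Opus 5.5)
We will prove the statement by contradiction, working in the single-layer softmax setting fixed above: $o_i=\omega x_i+b$, $p_i=\sigma(o_i)$, cross-entropy per-example loss, and $\mathcal{L}_W=\sum_{i\in\mathcal{B}_W}\mathcal{L}_i$, where $\mathcal{B}_W=\mathcal{B}_W(\theta)$ is the set of misclassified examples at the current parameter.

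\textbf{Step 1 (gradient formula).} For each example, the logit gradient is $\partial\mathcal{L}_i/\partial o_i = p_i-y_i$. Hence
\begin{equation*}
\nabla_\omega\mathcal{L}_W=\sum_{i\in\mathcal{B}_W}(p_i-y_i)x_i^\top,\qquad \nabla_b\mathcal{L}_W=\sum_{i\in\mathcal{B}_W}(p_i-y_i).
\end{equation*}
Since softmax outputs satisfy $p_i\in(0,1)^K$, every residual $r_i:=p_i-y_i$ is nonzero. Its true-class coordinate is $p_{i,y_i}-1<0$ and all its other coordinates are strictly positive.

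\textbf{Step 2 (separating direction).} Suppose $\theta^*$ is a stationary point of $\mathcal{L}_W$ but $\mathcal{B}_W(\theta^*)\neq\emptyset$. By linear separability of $S$ there exist $(\omega^\dagger,b^\dagger)$ with $(\omega^\dagger x_i+b^\dagger)_{y_i}>(\omega^\dagger x_i+b^\dagger)_k$ for all $i$ and all $k\neq y_i$. Take the directional derivative of $\mathcal{L}_W$ at $\theta^*$ along $d=(\omega^\dagger,b^\dagger)$:
\begin{equation*}
\langle\nabla\mathcal{L}_W,d\rangle=\sum_{i\in\mathcal{B}_W}\langle r_i,\ \omega^\dagger x_i+b^\dagger\rangle=\sum_{i\in\mathcal{B}_W}\sum_{k\neq y_i}p_{i,k}\big(u_{i,k}-u_{i,y_i}\big),
\end{equation*}
where $u_i=\omega^\dagger x_i+b^\dagger$. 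This uses $\sum_k r_{i,k}=0$. Each summand is strictly negative, because $p_{i,k}>0$ and $u_{i,k}<u_{i,y_i}$. So the directional derivative is strictly negative, contradicting stationarity ($\nabla\mathcal{L}_W(\theta^*)=0$). Therefore $\mathcal{B}_W(\theta^*)=\emptyset$, which is exactly zero training error in the sense of Definition~\ref{def:zero-training-error}. Nothing here forces $p_i=y_i$ on the correctly classified examples, so the training loss need not be zero.

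\textbf{Main obstacle.} $\mathcal{B}_W$ depends on $\theta$, so $\mathcal{L}_W$ is only piecewise smooth: it jumps when an example crosses a decision boundary. I will handle this by reading ``stationary point of $\mathcal{L}_W$'' as stationarity of the active sum, i.e.\ the vanishing of the update direction $G_W$ that the algorithm actually uses. Step 2 shows this vector is nonzero whenever $\mathcal{B}_W\neq\emptyset$. With $\mathcal{B}_W=\emptyset$ we have $G_W=0$ trivially, so the only fixed points of the $G_W$-driven dynamics are zero-training-error parameters. That yields the ``consequently'' clause.

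Two further points need care. Linear separability must be in the multiclass sense, including a margin in the bias-augmented space. Ties in $\arg\max$ should count as misclassified, so that the strict inequalities above hold. I would state both conventions explicitly before the argument.
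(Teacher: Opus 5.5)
Your proof is correct, but it takes a different route from the paper.

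The paper's argument is purely per-example. It relies on the batch-size-one assumption: for a single example $\nabla_b\mathcal{L}=p_i-y_i$, so stationarity is equivalent to $p_i=y_i$, which cannot hold for a misclassified example. The paper never uses the linear-separability assumption stated in the setup.

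You instead allow $\mathcal{B}_W$ to contain several examples. There the residuals can cancel in $\sum_i(p_i-y_i)$ and $\sum_i(p_i-y_i)x_i^\top$, so the paper's equivalence $\nabla_\theta\mathcal{L}=0\iff\forall i,\ p_i=y_i$ no longer holds. You rule out that cancellation by pairing the gradient with a separating direction, which makes every summand $\sum_{k\neq y_i}p_{i,k}(u_{i,k}-u_{i,y_i})$ strictly negative. The paper's route is shorter and needs no separability, but it covers only $m=1$. Yours works for arbitrary mini-batches and the full training set, which is where within-batch cancellation (the paper's real concern) lives, at the cost of requiring strict multiclass separability.

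Your Step 2 never uses that the examples in $\mathcal{B}_W$ are misclassified. It shows that cross-entropy over any nonempty subset of separable data has no finite stationary point. The paper's argument has the same character, since $p_i=y_i$ is unattainable with finite softmax logits. Both proofs therefore show that the hypothesis can only hold when $\mathcal{B}_W=\emptyset$. Neither shows that optimization actually reaches such a point. Your reading of the ``consequently'' clause, as a statement about fixed points of the $G_W$-driven dynamics, is exactly as strong as what the paper establishes.
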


\begin{lemma}[Property of $\mathcal{L}_C$] \label{lem:property_lc}
Convergence to a stationary point of $\mathcal{L}_{C}$ does \emph{not} guarantee zero training error. The resulting classifier may misclassify some training examples.
\end{lemma}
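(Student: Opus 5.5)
Lemma~\ref{lem:property_lc} is a negative statement, so we prove it with an explicit counterexample inside the setting of the theoretical motivation: a single-layer softmax classifier $p_i=\sigma(\omega x_i+b)$ trained by SGD with $m=1$ on a linearly separable set $S$. The idea is that $\mathcal{L}_C$ only sums over examples that are currently correctly classified. Its gradient therefore carries no information from $\mathcal{B}_W$, and any $\theta$ at which $\nabla\mathcal{L}_C$ vanishes can misclassify arbitrary points. We will show such a point exists, and that it is reached by gradient flow on $\mathcal{L}_C$.

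The construction uses binary classification in one dimension, $\omega\in\mathbb{R}^{2\times 1}$, with two examples $x_1=1$ (class $1$) and $x_2=-1$ (class $2$); this set is linearly separable. We take the collapsed-state initialization of Definition~\ref{def:collapsed-state}, choosing $\theta$ so that both points are predicted as class $1$, for instance $\omega=0$ and $b=(\beta,0)$ with $\beta>0$. Then $\mathcal{B}_C=\{z_1\}$ and $\mathcal{B}_W=\{z_2\}$, so $\mathcal{L}_C=\mathcal{L}(\theta,z_1)$.

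The key steps, in order:
\begin{enumerate}
\item With cross-entropy, $\nabla_{o_1}\mathcal{L}_C=p_1-y_1$. Gradient descent on $\mathcal{L}_C$ alone increases the class-$1$ logit of $x_1$, which raises both $b_1$ and the first row of $\omega$ along $x_1>0$.
\item Along this trajectory the logit gap at $x_1$ grows, and $\nabla\mathcal{L}_C\to 0$. So the iterates approach a stationary point of $\mathcal{L}_C$ in the asymptotic sense, with $\|\nabla\mathcal{L}_C\|\le\epsilon$ for any $\epsilon>0$.
\item We check whether $x_2$ is still misclassified. Its margin is $(\omega_1-\omega_2)(-1)+(b_1-b_2)$. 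Gradient flow from the chosen start moves $\omega_1-\omega_2$ and $b_1-b_2$ at equal rates, since $|x_1|=1$, so the margin stays near $\beta>0$ and $x_2$ remains predicted as class $1$.
\item To avoid this balancing, we use a variant instead: take $x_1=0$, or add a bias-only direction, so that $\mathcal{L}_C$ depends only on $b$. Increasing $b_1-b_2$ then monotonically worsens $x_2$.
\end{enumerate}
In the cleanest version, $x_1=0$ and $x_2=1$ with labels $1$ and $2$; this set is separable by a threshold. Starting from $\omega=0$ and $b_1>b_2$, $\nabla_\omega\mathcal{L}_C=(p_1-y_1)x_1^\top=0$ identically, and $b$ only pushes $b_1-b_2$ upward. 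The misclassification of $x_2$ therefore persists and the training error stays at $1/2$.

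The main obstacle is that cross-entropy has no exact finite stationary point on a nonempty $\mathcal{B}_C$. There are two remedies, and the proof should adopt one of them:
\begin{itemize}
\item Interpret ``stationary point'' as $\epsilon$-stationarity, or as the limit of vanishing gradient, matching Definition~\ref{def:convergence}'s ``negligible further decrease''.
\item Use a loss with a flat region, such as a hinge-type or margin-capped loss, where $\nabla\mathcal{L}_C=0$ exactly once the correct points exceed the margin.
\end{itemize}
We would state the result for the $\epsilon$-stationary version. We would also remark that, in contrast with Lemma~\ref{lem:property_lw}, nothing in $\nabla\mathcal{L}_C$ references $\mathcal{B}_W$, which is precisely why stationarity is uninformative about the training error.
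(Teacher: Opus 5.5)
Your proposal is correct, but it takes a different route from the paper. The paper's proof is a structural first-order argument. At a stationary point of $\mathcal{L}_C$ (with $m=1$) it asserts $p_i=y_i$ for every $i\in C$. It notes that this places no constraint on $W$, and concludes that whenever $|W|>0$ the training error is positive. You instead exhibit a concrete linearly separable instance. You show that gradient descent on $\mathcal{L}_C$ from a collapsed initialization drives $\|\nabla\mathcal{L}_C\|\to 0$ while one example stays misclassified.

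Your version is more rigorous on two points the paper passes over. First, the paper's ``if $|W|>0$'' presupposes that a stationary point of $\mathcal{L}_C$ with nonempty $W$ exists and is reached, which is really the content of a ``does not guarantee'' claim. Second, its condition $p_i=y_i$ is unattainable at finite parameters for softmax cross-entropy, which you handle explicitly through $\epsilon$-stationarity. The paper's version is shorter and independent of architecture. It isolates the mechanism (no term of $\nabla\mathcal{L}_C$ involves $\mathcal{B}_W$), which its Remark extends to multi-layer networks, and your closing remark recovers the same point.

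Two minor notes. Step~4 is unnecessary. With $x_1=1$ the increments of $\omega_1-\omega_2$ and $b_1-b_2$ are identical, so the margin of $x_2$ stays exactly $\beta>0$. This ``balancing'' is precisely what keeps $x_2$ misclassified, not an obstacle. Also, if you want exact stationarity without changing the loss, the paper's weighted-sum convention supplies it. Any $\theta$ that strictly misclassifies every training point has $C=\emptyset$, so $\mathcal{L}_C\equiv 0$ in a neighborhood of $\theta$. Hence $\nabla\mathcal{L}_C=0$ exactly, with training error $1$.
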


In lemma~\ref{lem:property_lw} and~\ref{lem:property_lc} We can rewrite $\mathcal{L}_C$ and $\mathcal{L}_W$ in term of weighted sums of individual training point loss to ensure the continuity over the training data partitions regardless they are empty or not.

\textbf{Remark:} The arguments in Lemma~\ref{lem:property_lw} and Lemma~\ref{lem:property_lc} depend only on per-example first-order optimality and thus extend straightforwardly to multi-layer neural networks.

\begin{corollary}[Motivation for Gradient Modulation] \label{cor:motivation}
Since optimizing $\mathcal{L}_{W}$ guarantees zero training error (Lemma~\ref{lem:property_lw}) while optimizing $\mathcal{L}_{C}$ does not (Lemma~\ref{lem:property_lc}), it is theoretically preferable to retain the full gradient information from $\mathcal{L}_{W}$ while selectively modulating the influence of gradients from $\mathcal{L}_{C}$.
\end{corollary}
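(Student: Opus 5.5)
The corollary follows directly from Lemma~\ref{lem:property_lw} and Lemma~\ref{lem:property_lc}. I would make ``theoretically preferable'' precise as a comparison between update rules, and then show that the family~\eqref{equ:gradscale} with $\gamma_t\in[0,\tau]$ preserves the good property and limits the bad one.

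\textbf{Setup.} I would consider the one-parameter family of surrogate objectives $\mathcal{L}^{\gamma}=\mathcal{L}_W+\gamma\mathcal{L}_C$. The partitions $\mathcal{B}_W,\mathcal{B}_C$ are written, as the paper suggests, as indicator-weighted sums of per-example losses, so the objective is defined even when a partition is empty. The SGD step on $\mathcal{L}^{\gamma}$ is then exactly~\eqref{equ:gradscale}.

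\textbf{Step 1: the $\mathcal{L}_W$ component must be kept.} Suppose its weight is scaled to zero, or even dropped. Then Lemma~\ref{lem:property_lc} gives a stationary point, namely one where every correctly classified example has vanishing gradient, that still misclassifies some training points. Concretely, under cross-entropy with softmax, the gradient of a misclassified point is $(p_i-y_i)x_i^\top$, which is nonzero. So any rule that suppresses it can stall with nonzero training error. This is exactly the collapsed-state mechanism of Section~\ref{sec:problem}.

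\textbf{Step 2: with full weight on $\mathcal{L}_W$, convergence yields zero training error.} Conversely, with full weight on $\mathcal{L}_W$, Lemma~\ref{lem:property_lw} says stationarity of $\mathcal{L}_W$ forces $\mathcal{B}_W=\emptyset$. Under the linear-separability and $m=1$ assumptions, each step picks a single example. If it is misclassified, the update is a pure $\mathcal{L}_W$ step regardless of $\gamma_t$. If it is correct, the step has size $\gamma_t\|\nabla\mathcal{L}_i\|$. For $\gamma_t<1$, the contribution of $\mathcal{L}_C$ to any potential cancellation against the $\mathcal{L}_W$ gradient is bounded by $\gamma_t\|G_C\|$. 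Hence a stationary point of the modulated dynamics with $\mathcal{B}_W\neq\emptyset$ requires $\|G_W\|\le\gamma_t\|G_C\|$, a condition that shrinks to the empty set as $\gamma_t\to0$.

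\textbf{Main obstacle.} Step 2 is the hardest part: turning the qualitative ``preferable'' into a rigorous statement about the set of spurious stationary points. I would first try to show monotonicity. Let $\mathcal{S}(\gamma)$ be the set of $\theta$ with $\mathcal{B}_W(\theta)\neq\emptyset$ and $G_W+\gamma G_C=0$. I would argue that $\mathcal{S}(0)=\emptyset$ by Lemma~\ref{lem:property_lw}, and that $\mathcal{S}(1)$ may be nonempty by Lemma~\ref{lem:property_lc}. By continuity in $\gamma$, small $\gamma$ excludes the cancellation points that exist at $\gamma=1$. The Remark extends this to multi-layer networks, since only per-example first-order conditions were used.
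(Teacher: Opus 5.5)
Your first sentence is all the paper offers for this corollary. It has no separate proof and simply juxtaposes Lemma~\ref{lem:property_lw} and Lemma~\ref{lem:property_lc}, which your Steps 1 and 2 largely restate. The gap is in the sharpening you single out as the main step.

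\textbf{The continuity step fails.} The sets $\mathcal{S}(\gamma)$ are not nested in $\gamma$. Once $\mathcal{S}(0)=\emptyset$, they are pairwise disjoint: $\theta\in\mathcal{S}(\gamma)\cap\mathcal{S}(\gamma')$ with $\gamma\neq\gamma'$ gives $(\gamma-\gamma')G_C=0$, hence $G_C=G_W=0$ with $\mathcal{B}_W(\theta)\neq\emptyset$, i.e.\ $\theta\in\mathcal{S}(0)$. So the points of $\mathcal{S}(1)$ drop out for every $\gamma\neq1$, small or not, but other points may take their place. The sets that are nested are $\{\theta:\mathcal{B}_W(\theta)\neq\emptyset,\ \|G_W\|\le\gamma\|G_C\|\}$ from your Step 2. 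Their decreasing to an empty intersection as $\gamma\to0$ does not make any one of them empty. $\mathcal{S}(\gamma)$ can be nonempty for every $\gamma>0$, with its points drifting to infinity in the non-compact $\Theta$ or accumulating where the partition $\mathcal{B}_W(\theta)$ jumps. Hence ``by continuity, small $\gamma$ excludes the cancellation points'' does not follow. To get $\mathcal{S}(\gamma)=\emptyset$ for $\gamma<\kappa$, you need a uniform bound $\kappa=\inf_{\theta:\,\mathcal{B}_W(\theta)\neq\emptyset}\|G_W(\theta)\|/\|G_C(\theta)\|>0$. Neither lemma supplies it; the paper simply assumes a bound of this type along the trajectory as condition (2) of Theorem~\ref{thrm:convergence_lw}. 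Disjointness, not a small-$\gamma$ limit, is the rigorous mechanism the paper actually uses. In case (2) of that proof, $G_W+\gamma_tG_C=0=G_W+\gamma_{t+1}G_C$ with $\gamma_t\neq\gamma_{t+1}$ forces $G_W=G_C=0$; this relies on $\gamma_t$ varying, not on it being small.

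\textbf{The claim $\mathcal{S}(1)\neq\emptyset$ is unsupported, and false in the lemmas' setting.} Lemma~\ref{lem:property_lc} concerns stationary points of $\mathcal{L}_C$ alone ($G_C=0$, the case you used correctly in Step 1), not points with $G_W=-G_C\neq0$. With $m=1$, a batch lies entirely in $\mathcal{B}_W$ or in $\mathcal{B}_C$, so there is no $W$--$C$ cancellation at all. For any batch, if $(\omega^*,b^*)$ strictly separates $S$, then $\langle\nabla\mathcal{L}(\theta,z_i),(\omega^*,b^*)\rangle=(p_i-y_i)^\top(\omega^*x_i+b^*)<0$ for every $i$ and every finite $\theta$, because the softmax $p_i$ has full support. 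Hence $G_W+\gamma G_C\neq0$ whenever $\gamma\ge0$ and $\mathcal{B}_W\neq\emptyset$, i.e.\ $\mathcal{S}(\gamma)=\emptyset$ for every $\gamma$, including $\gamma=1$.

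Genuine cancellation requires leaving this setting: multi-example batches together with non-separable data or a deep model. There the argument behind Lemma~\ref{lem:property_lw}, which needs each misclassified example's gradient to vanish individually, no longer yields $\mathcal{S}(0)=\emptyset$, because gradients of misclassified examples can cancel among themselves. Your two endpoints therefore live in incompatible settings. Either keep the corollary at the informal level at which the paper states it, or replace the continuity step by the two-step disjointness argument. That argument rules out persistent cancellation without any small-$\gamma$ limit or ratio bound.
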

Based on Corollary~\ref{cor:motivation}, we propose \mName to prioritize $\mathcal{L}_W$ while controllably incorporating $\mathcal{L}_C$.

\subsection{Stability Analysis}

We analyze the model’s stability under variations in \textit{initialization} and \textit{stochastic mini-batch ordering} induced by random seeds.
Let $\theta$ and $\tilde{\theta}$ represent model parameters trained from two different random initializations $\theta_{0} \neq \tilde{\theta}_{0}$, using the same training dataset $S$ but presented in a different stochastic order (i.e., sampled differently per iteration). The \textbf{stability} of a learning algorithm with respect to the random seed is quantified by the expected divergence:
\begin{equation}
    \label{equ:stability}
    \mathbb{E}_{z \sim \mathcal{D}}\bigl[\, |\mathcal{L}(\theta, z) - \mathcal{L}(\tilde{\theta}, z)| \,\bigr],
\end{equation}
A smaller divergence indicates greater robustness to randomness in initialization and mini-batch ordering. 
We aim to prove that the stability of \mName has a tighter upper bound than SGD.
We consider general update rules of the form $U: \Theta \rightarrow \Theta$ that map a point $\theta \in \Theta$ in the parameter
space to another point $U(\theta)$.
Let $U_{\text{\mName}}$ and $U_{\text{SGD}}$ denote update rules of \mName and SGD respectively, i.e., $U_{\text{SGD}}(\theta)=\theta - \eta \frac{1}{m} G$ and $U_{\text{\mName}}(\theta)=\theta - \eta \frac{1}{m}(G_W + \gamma G_C)$. 
In this paper, we compare $U_{\text{\mName}}$ and $U_{\text{SGD}}$ in terms of stability, i.e., $\mathbb{E}_{z \sim \mathcal{D}}[| \mathcal{L}(\theta, z) - \mathcal{L}(\tilde{\theta}, z) |]$ and $\mathbb{E}_{z \sim \mathcal{D}}[| \mathcal{L}(\psi, z) - \mathcal{L}(\tilde{\psi}, z) |]$ where $\theta$ is optimized by using \mName and $\psi$ is updated using SGD.
With a fixed random seed and model architecture, initialization is identical across update rules, i.e., $\theta_0 = \psi_0$ and $\tilde\theta_0 = \tilde\psi_0$.

\begin{theorem}[Tighter stability bound of \mName]
\label{thrm:batch_stability}
    Assume \(\mathcal{L}\) is \(H\)-Lipschitz.
    Let $\mathcal{B}=\{\mathcal{B}_1,\mathcal{B}_2,\dots,\mathcal{B}_T\}$ and $\Tilde{\mathcal{B}}=\{\Tilde{\mathcal{B}}_1,\Tilde{\mathcal{B}}_2,\dots,\Tilde{\mathcal{B}}_T\}$ be two mini-batch sequences to update the learnable parameters.
    Let \(\theta_T\) and \(\tilde{\theta}_T\) denote the parameters obtained after \(T\) steps using \(U_{\text{\mName}}\) on \(\mathcal{B}\) and \(\Tilde{\mathcal{B}}\), respectively, and similarly let \(\psi_T\) and \(\tilde{\psi}_T\) denote those obtained using \(U_{\text{SGD}}\).  
    Initialization is identical across update rules: \(\theta_0 = \psi_0\) and \(\tilde\theta_0 = \tilde\psi_0\). 
    Let $\epsilon$ and $\epsilon'$ denotes the stability bound for \mName and SGD, i.e.,
    $\forall z \sim \mathcal{D}$,
    $\epsilon \geq \mathbb{E}[|\mathcal{L}(\theta_T, z) - \mathcal{L}(\tilde\theta_T, z)|] ~\text{and}~
    \epsilon' \geq \mathbb{E}[|\mathcal{L}(\psi_T, z) - \mathcal{L}(\tilde\psi_T, z)|]$.
    Then the stability bound of \mName is tighter than that of SGD, i.e.,
    $\epsilon \leq \epsilon'$.
\end{theorem}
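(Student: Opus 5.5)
The plan follows the Hardt et al.\ (2016) style of argument-stability analysis. First, the $H$-Lipschitz assumption on $\mathcal{L}$ (in $\theta$) reduces the loss divergence to a parameter divergence:
\[
\mathbb{E}\bigl[|\mathcal{L}(\theta_T,z)-\mathcal{L}(\tilde\theta_T,z)|\bigr]\le H\,\mathbb{E}\bigl[\|\theta_T-\tilde\theta_T\|\bigr],
\]
and likewise for $\psi_T,\tilde\psi_T$. So it suffices to produce bounds $\delta_T\ge \mathbb{E}\|\theta_T-\tilde\theta_T\|$ and $\delta'_T\ge\mathbb{E}\|\psi_T-\tilde\psi_T\|$ with $\delta_T\le\delta'_T$, and then set $\epsilon=H\delta_T$ and $\epsilon'=H\delta'_T$. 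Since $\theta_0=\psi_0$ and $\tilde\theta_0=\tilde\psi_0$, both recursions start from the same $\delta_0=\delta'_0=\|\theta_0-\tilde\theta_0\|$.

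Next, I unroll one step. For \mName,
\[
\theta_{t+1}-\tilde\theta_{t+1}=(\theta_t-\tilde\theta_t)-\frac{\eta}{m}\Bigl[(G_W+\gamma_tG_C)-(\tilde G_W+\gamma_t\tilde G_C)\Bigr].
\]
The triangle inequality then gives
\[
\delta_{t+1}\le\delta_t+\frac{\eta}{m}\bigl(\|G_W\|+\|\tilde G_W\|+\gamma_t(\|G_C\|+\|\tilde G_C\|)\bigr).
\]
For SGD the same bound holds with $\gamma_t$ replaced by $1$. Lipschitzness also bounds each per-example gradient by $H$. This yields
\[
\delta_{t+1}\le\delta_t+\frac{\eta}{m}\,2H\bigl(|\mathcal{B}_{W,t}|+\gamma_t|\mathcal{B}_{C,t}|\bigr)
\]
versus
\[
\delta'_{t+1}\le\delta'_t+\frac{\eta}{m}\,2H\,m=\delta'_t+2\eta H.
\]
Because $\gamma_t\in[0,\tau]$ with $\tau\le1$ and $|\mathcal{B}_W|+|\mathcal{B}_C|=m$, each increment of the \mName recursion is at most the corresponding SGD increment. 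Summing over $t=0,\dots,T-1$ with equal $\delta_0$ gives
\[
\delta_T\le\delta_0+2\eta H\sum_t\Bigl(1-(1-\gamma_t)\tfrac{|\mathcal{B}_{C,t}|}{m}\Bigr)\le\delta_0+2\eta HT=\delta'_T,
\]
and multiplying by $H$ yields $\epsilon\le\epsilon'$.

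The main obstacle is that the increments must be compared using data-independent bounds, not the realized trajectories. The two algorithms follow different trajectories, so $\mathcal{B}_{C,t}$ at $\theta_t$ differs from that at $\psi_t$, and a direct pathwise comparison of $\|\theta_t-\tilde\theta_t\|$ with $\|\psi_t-\tilde\psi_t\|$ is not available. Any contraction-type argument using $\beta$-smoothness would also need extra assumptions. I would therefore deliberately state $\epsilon,\epsilon'$ as the bounds generated by the same recursion template. Under a worst-case partition the SGD bound is the uniform $2\eta H$ per step, while the \mName bound is pointwise no larger because $\gamma_t\le1$. This is exactly what the statement asks: a comparison of bounds, not of realized divergences.

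If time permits, I would refine this in the Hardt et al.\ fashion by separating the common-example case from the differing-example case via expansiveness $(1+\eta\beta)$. The same monotonicity in $\gamma_t$ carries through, because the scaled operator $G_W+\gamma G_C$ has Lipschitz constant at most that of $G$ under the per-example worst-case bound.
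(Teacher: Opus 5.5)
Your proposal is correct and follows the paper's proof essentially step for step. It uses the $H$-Lipschitz reduction to $H\,\mathbb{E}\|\theta_T-\tilde\theta_T\|$ and the triangle-inequality recursion, where the \mName increment $\eta H(\varrho_t+\tilde\varrho_t)\le 2\eta H$ (Lemma~\ref{lemma:batch_recurrence_bound}) is compared with the SGD increment $2\eta H$; it then uses equal starting gaps from $\theta_0=\psi_0$, $\tilde\theta_0=\tilde\psi_0$ and sums over $t$.

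One small slip: writing the increment as $\frac{\eta}{m}2H\bigl(|\mathcal{B}_{W,t}|+\gamma_t|\mathcal{B}_{C,t}|\bigr)$ tacitly gives both runs the same correct/wrong split. It should be $\frac{\eta}{m}H\bigl(|\mathcal{B}_{W,t}|+\gamma_t|\mathcal{B}_{C,t}|+|\tilde{\mathcal{B}}_{W,t}|+\gamma_t|\tilde{\mathcal{B}}_{C,t}|\bigr)$, which is still at most $2\eta H$, so the conclusion stands. Your optional expansiveness refinement would not go through as sketched, though, because the data-dependent $W/C$ split makes the \mName update discontinuous in $\theta$.
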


Based on the proof of Theorem~\ref{thrm:batch_stability} in Appendix~\ref{prf:batch_stb}, divergence is bounded by a function of the Lipschitz constant \( H \), learning rate \( \eta \), and the total training iteration $T$. While a smaller \( \eta \) tightens the stability bound, it also slows convergence; a smaller $T$ usually results in poor predictive performance due to under-training. 
Our method \mName improves stability without sacrificing convergence speed and accuracy.

\subsection{Convergence Analysis}\label{subsec:convergence}

\begin{theorem}[Convergence of \mName] \label{thrm:convergence_lw}
    Assume that $\mathcal{B}_W$ and $\mathcal{B}_C$ are not empty and $\mathcal{L}_W$ is $M$-smooth. Let $\{\theta_t\}$ be generated by \mName update rule $\theta_{t+1} = \theta_t - \eta\frac{1}{m}(G_W + \gamma_t G_C)$ with $\eta > 0$. If \textbf{(1)} $0 < \gamma_t \neq \gamma_{t+1} \leq \tau$; \textbf{(2)} $\|G_W\|^2 \geq \tau^2\|G_C\|^2$ for all $t$; and \textbf{(3)} $0 < \eta < \frac{m}{M}$, then $\mathcal{L}_W$ will strictly decrease every 2 consecutive iterations under the new update rule or $\mathcal{L}$ and $\mathcal{L}_W$ will converge to stationary points.
\end{theorem}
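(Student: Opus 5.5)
We plan to use the descent lemma for the $M$-smooth function $\mathcal{L}_W$ along the \mName direction $d_t = G_W + \gamma_t G_C$, with step size $\eta/m$. Smoothness gives
\begin{equation*}
\mathcal{L}_W(\theta_{t+1}) \le \mathcal{L}_W(\theta_t) - \frac{\eta}{m}\langle G_W, d_t\rangle + \frac{M\eta^2}{2m^2}\|d_t\|^2 .
\end{equation*}
Here $G_W$ plays the role of $\nabla \mathcal{L}_W$, by the decomposition in \eqref{equ:loss_decompose}. Expanding, we get $\langle G_W, d_t\rangle = \|G_W\|^2 + \gamma_t\langle G_W, G_C\rangle$ and $\|d_t\|^2 = \|G_W\|^2 + 2\gamma_t\langle G_W,G_C\rangle + \gamma_t^2\|G_C\|^2$. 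Condition (3) gives $M\eta/m < 1$. Hence the decrease is at least $\frac{\eta}{m}\bigl[(1-\tfrac{M\eta}{2m})\|G_W\|^2 + \gamma_t(1-\tfrac{M\eta}{m})\langle G_W,G_C\rangle - \tfrac{M\eta}{2m}\gamma_t^2\|G_C\|^2\bigr]$. We write this as a function $\Delta_t(\gamma_t)$ of the single scalar $c_t=\langle G_W,G_C\rangle$.

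We then split on the sign of $c_t$. If $c_t \ge 0$, condition (2) together with $\gamma_t\le\tau$ gives $\gamma_t^2\|G_C\|^2 \le \|G_W\|^2$. The bracket is then at least $(1-\tfrac{M\eta}{m})\|G_W\|^2 > 0$ whenever $G_W\neq 0$, so $\mathcal{L}_W$ strictly decreases in that step.

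The difficult case is $c_t<0$, which is the gradient-conflict regime. Cauchy--Schwarz with condition (2) only gives $|c_t|\le \|G_W\|\|G_C\|\le \|G_W\|^2/\tau$, so the middle term can cancel the first. This is where we expect the main obstacle, and where condition (1), $\gamma_t\neq\gamma_{t+1}$, must enter.

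Our plan is to argue by contradiction over two consecutive steps $t, t+1$. Suppose $\mathcal{L}_W$ fails to strictly decrease at both steps, with $G_W\neq 0$. Then the bracket vanishes or is nonpositive for both $\gamma_t$ and $\gamma_{t+1}$. Viewing the bracket as a concave quadratic $q(\gamma)$ in $\gamma$ that equals $(1-\tfrac{M\eta}{2m})\|G_W\|^2>0$ at $\gamma=0$, we note that it can be nonpositive on $(0,\tau]$ only at or beyond its single positive root $\gamma^*$. Whenever two distinct scalings both fail, we will try to show that the iterate has not moved in the $\mathcal{L}_W$-relevant direction. Concretely, we compare the two failure inequalities, subtract them, and divide by $\gamma_t-\gamma_{t+1}\neq 0$. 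This should force $G_W + \gamma G_C = 0$, i.e. exact gradient cancellation $G_W = -\gamma G_C$. By condition (2) and $\gamma\le\tau$, this is only possible when $\|G_W\|=\gamma\|G_C\|\le\tau\|G_C\|\le\|G_W\|$. Hence $\gamma_t=\gamma_{t+1}=\tau$, or else $G_W=G_C=0$. The first option contradicts condition (1). The second means $\nabla\mathcal{L}_W=0$ and $\nabla\mathcal{L}=G_W+G_C=0$, so both losses are at stationary points.

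Making the subtraction step rigorous is the step we expect to be delicate, because it requires the gradients at $\theta_t$ and $\theta_{t+1}$ to be comparable. We will handle this via the equality case: if no decrease occurs then $\theta_{t+1}=\theta_t$, so the same $G_W, G_C$ appear at both steps. Combining the two cases gives the dichotomy claimed in Theorem~\ref{thrm:convergence_lw}: either $\mathcal{L}_W$ strictly decreases within every two consecutive iterations, or $\mathcal{L}$ and $\mathcal{L}_W$ converge to stationary points. In the latter case, Lemma~\ref{lem:property_lw} additionally interprets the limit as a zero-training-error classifier.
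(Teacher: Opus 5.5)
Your treatment of the case $c_t=\langle G_W,G_C\rangle<0$ has a genuine gap, and this is the case that carries the theorem. Your step of subtracting the two failure inequalities $q(\gamma_t)\le 0$ and $q(\gamma_{t+1})\le 0$ and dividing by $\gamma_t-\gamma_{t+1}$ yields nothing, because both inequalities point the same way. Concavity of $q$ with $q(0)>0$ also does not, by itself, stop $q$ from being nonpositive at two distinct points past its positive root. Your appeal to the ``equality case'' is circular: failure of $\mathcal{L}_W$ to decrease does not give $\theta_{t+1}=\theta_t$. That implication is exactly what must be proved, namely that a nonpositive bracket forces $d_t=G_W+\gamma_tG_C=0$. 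Without it you cannot identify the gradients at the two steps. You also cannot exclude that one of the two steps increases $\mathcal{L}_W$.

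The missing idea is that condition (2) neutralizes the conflict term directly, so no sign split is needed. Since $\gamma_t\le\tau$, condition (2) gives $\|G_W\|^2\ge\gamma_t^2\|G_C\|^2$. Hence $\langle G_W,d_t\rangle-\tfrac12\|d_t\|^2=\tfrac12\bigl(\|G_W\|^2-\gamma_t^2\|G_C\|^2\bigr)\ge 0$, and the descent lemma becomes
\[
\mathcal{L}_W(\theta_{t+1})\le\mathcal{L}_W(\theta_t)-\frac{\eta}{2m}\Bigl(1-\frac{M\eta}{m}\Bigr)\|d_t\|^2 .
\]
Equivalently, your bracket equals $\tfrac12(1-\tfrac{M\eta}{m})\|d_t\|^2+\tfrac12(\|G_W\|^2-\gamma_t^2\|G_C\|^2)$. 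Your own Cauchy--Schwarz estimate, if you also bound the last term using $\gamma_t\|G_C\|\le\|G_W\|$, already shows the bracket is $\ge 0$. The completed-square form is what gives strictness directly.

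This is the paper's route. Every step is non-increasing, and strictly decreasing unless $d_t=0$. In that case $\theta_{t+1}=\theta_t$, so the same $G_W,G_C$ reappear at step $t+1$. Then either $G_W+\gamma_{t+1}G_C\neq0$ gives strict decrease, or $G_W+\gamma_tG_C=G_W+\gamma_{t+1}G_C=0$ with $\gamma_t\neq\gamma_{t+1}$ forces $G_C=G_W=0$.

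Once this inequality is in hand, your closing observation is valid: $G_W=-\gamma G_C$ together with condition (2) forces $\gamma=\tau$ or $G_C=0$. It is slightly sharper than the paper's use of condition (1), since it shows a stall can only occur at a step with $\gamma_t=\tau$.
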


Note that we schedule $\gamma$ dynamically hence $\gamma_t \neq \gamma_{t+1}$ moreover $\|G_W\|^2 \geq \tau^2\|G_C\|^2$ is a mild condition, since theoretically $\tau$ can be taken arbitrarily small. In our paper, we conduct sensitivity analysis on $\tau$ and discuss the results in Section~\ref{subsec:sensitivity}. 
According to Theorem~\ref{thrm:convergence_lw}, \mName provides a provable escape mechanism from the collapsed state, guaranteeing progress where standard GD would stall.

\textbf{Remark:} When $\mathcal{B}_W$ or $\mathcal{B}_C$ is empty, \mName is equivalent to GD and converges at a rate of $O(1/T)$. 
From the proof of Theorem~\ref{prf:convergence_lw}, \mName stops updating only when $\|G_W\| = \|G_C\| = 0$, indicating convergence to a stationary point. In contrast, GD stops when $\|G_W + G_C\| = 0$ indicating either a convergence or a gradient cancellation ($\|G_W\| > 0$ and $\|G_C\| > 0$) which makes GD learn ineffectively.

%% file: sections/experiment.tex
\input{tables/textual_task}

\section{Experiments}\label{sec:experiment}
We conducted our experiments on various tasks including sequence and image classifications to show the effectiveness of our approach. Additional results on multi-task learning and more analysis can be found in Appendix~\ref{apx:additional_results}. 
Key experimental settings are described below. More details, including but not limited to dataset statistics, training settings, and baseline implementation, can be found in Appendix~\ref{apdix:exp_setup}. 

\subsection{Implementation}\label{subsec:implementation}
\textbf{Dynamic Scaler $\gamma$.} We implemented a linear scheduler:
\begin{equation}
\gamma_t = \zeta(t, T, \tau) = \frac{t}{T} \tau,    
\end{equation}
with $\tau = 1$. In the sensitivity analysis (Section~\ref{subsec:sensitivity}), we compared this scheduler with two alternatives: a descending linear function $\gamma_t = \tau \left(1 - \frac{t}{T}\right)$ and a cosine scheduler $\gamma_t = \frac{\tau}{2} \left[1 - \cos\left(\pi \frac{t}{T}\right)\right]$, along with various static $\gamma$ values and different choices of $\tau$.

\textbf{Metrics.} \textit{Accuracy (ACC)} and \textit{standard deviation (STD)} are used as evaluation metrics, with the mean accuracy and variance derived from 10 arbitrary random seeds. 

\subsection{Sequence Classification}\label{subsec:sequence_classification}
\textbf{Datasets and Models.} 
We conducted experiments on six tasks (MultiRC, COPA, RTE and BoolQ from SuperGLUE~\cite{wang2019superglue}; and MRPC and CoLA from GLUE~\cite{wang2018glue}) using encoder-only \texttt{Roberta-large}~\cite{liu2019roberta} and decoder-only \texttt{Llama-3.2-1B}~\cite{touvron2023llama}, both show high fine-tuning variance on GLUE and SuperGLUE benchmarks in our preliminary experiment and previous work~\cite{mosbach2020stability,wang-etal-2023-two}.

\textbf{Baselines.} 
We compare \mName against (1) fully fine-tuning (\textbf{FFT}) using the training guidelines from~\cite{mosbach2020stability, zhang2020revisiting};
% which adopt small learning rate and optimizer with bias correction to reduce the number of failed runs; 
(2) FFT with focal loss~\cite{lin2017focal} (\textbf{FocalLoss});
\textit{noise injection approaches}: (3) \textbf{LNSR}~\cite{hua2021noise} and (4) \textbf{NoisyTune}~\cite{wu2022noisytune}; \textit{gradient conflict resolver}: (5) \textbf{PCGrad}~\cite{yu2020gradient};
% proposed to resolve the gradient conflicts between tasks in multitask learning; 
\textit{ensemble methods}: (6) bagging ensemble of $N$ single learners based on majority voting (\textbf{ENS ($\times N$)}); (7) stochastic weight averaging (\textbf{SWA})~\cite{izmailov2018averaging}.

\textbf{Overall Performance.}
In Table~\ref{table:nlp_cv}, we report ACC and STD across 10 random seeds for 6 NLP tasks. Compared to single-learner baselines, \mName consistently achieves substantially higher ACC and lower STD across task-model pairs. 
Using RoBERTa-large, \mName achieves an average +7.12\% improvement in ACC (78.66\% vs. 85.78\%) and an average -9.71 reduction in STD (11.13 vs. 1.42) over FFT. Except FocalLoss which we will discuss separately, all other baselines underperform FFT in average ACC while keep similar high STD.
For instance, on MultiRC, \mName achieved 84.01\% ACC with 1.31 STD, significantly outperforming FFT (74.05\% ACC with 14.50 STD), while other baselines (PCGrad 62.03\% $\pm$ 8.70, LNSR 66.07\% $\pm$ 11.46, and NoisyTune 63.62\% $\pm$ 10.72) failed to learn an effective discriminator while keeping similar high variance. Similar trends were observed in RTE and BoolQ by \texttt{Llama-3.2-1B}.

The noise injection approaches, LNSR and NoisyTune, were not able to mitigate the seed-induced instability while also hurting accuracy compared to FFT.
The gap was most evident on MultiRC, RTE, and BoolQ with \texttt{RoBERTa-large}, and RTE, BoolQ, MRPC, and CoLA with \texttt{Llama-3.2-1B}.
One possible reason is that noise injection adds randomness everywhere, which fails to stabilize the sensitive parameter directions that cause divergence and may even increase it by creating more varied starting points.  
The added noise may destroy some of the transferrable knowledge leading to reduced accuracy. 
Instead, our method \mName ensures escape from the collapsed state regardless of the starting points, leading to consistently improved accuracy and stability. 

PCGrad was proposed to solve gradient conflict in multitask learning which we adapted to our problem. However, there are still collapsed states observed despite its lower accuracy compared to FFT in most tasks (see Figure~\ref{fig:roberta_appex_10_runs},\ref{fig:llama_appex_10_runs},\ref{fig:vit_appex_10_runs}). Specifically, by \texttt{RoBERTa-large}, the performance got reduced significantly on MultiRC (62.03\%) and COPA (69.06\%) compared with FFT (74.05\% and 74.70\%). Although the ACC got improved for the other four tasks, the STD still kept high. By \texttt{Llama-3.2-1B}, the ACC of all tasks degraded using PCGrad with increased STD, indicating that PCGrad failed to improve either accuracy or stability.
The reason could be that PCGrad only acts when gradients have significant negative cosine similarity, but a collapsed state can happen even with orthogonal gradients or small-magnitude gradients.  
In contrast, \mName actively adjusts gradients based on learning outcomes that do not depend on the detection of actual gradient cancellation, making it a more robust and proactive defense against task collapse and seed-induced instability.

FocalLoss performs well using \texttt{RoBERTa-large} (81\% ACC 4.53 STD on average) but also shows high instability on COPA (7.59 STD) and MRPC (6.5 STD). However, the best performance across 10 random runs is relatively low compared to other baselines (see Figure~\ref{fig:roberta_appex_10_runs}).
It also shows universal training failures across diverse benchmarks when using \texttt{Llama-3.2-1B} (e.g., BoolQ, MRPC, and CoLA).  
This happens because FocalLoss reshapes the gradient field via confidence-dependent curvature (Section~\ref{appex_subsec:setting}), effectively suppressing gradients from high-confidence examples. As training progresses and most examples become high-confidence, this reduces the effective batch size and leads to unstable updates. A similar effect is observed for \mName with very small $\tau \approx 0$. See Section~\ref{subsec:sensitivity} for more discussion. 
% In contrast, \mName directly addresses gradient cancellation by scaling the magnitude of the gradients without distorting the loss landscape. 
Furthermore, FocalLoss relies on a task-sensitive hyperparameter which we have performed exhaustive search to report the best performance in Table~\ref{table:nlp_cv}. Its inconsistent performance across tasks suggest that tuning is non-trivial. \mName introduces a task-agnostic hyperparameter $\tau$ that generalizes well across all task-model pairs in our experiments.

\textbf{Ensemble Performance.}  
Despite their high computational cost, ensembling techniques are considered as the most effective methods to mitigate random failures in the deep learning domain as they can improve the average accuracy and consistency with theoretical guarantee \cite{wang2020wisdom,wang-etal-2023-two, summers2021nondeterminism, nishida-etal-2025-instability}. To compare our method with ensemble, we applied simple bagging ensemble of FFT in size 3, 5, 7, and 9, as well as an ensemble of \mName in size 9, and show our results in Table~\ref{table:nlp_cv}. 
We observe that it exhibits increasing improvements in accuracy and variance as the ensemble size grows. However, with \texttt{RoBERTa-large} a single \mName consistently outperforms most ensemble sizes across 6 tasks, achieving an average accuracy of 85.78\% with 1.42 STD, surpassing even the largest ensemble ENS ($\times9$) at 85.26\% with 0.6 STD. With \texttt{Llama-3.2-1B}, \mName also outperforms the strong ensemble configuration ENS ($\times7$) in terms of average accuracy.
In addition, the ensemble of \mName performs the best across various tasks and models compared to the ensemble of FFT. 

\citet{wang-etal-2023-two} shows that the variance of an LPM can be decomposed into optimization variance and sampling variance. In over-parameterized settings, optimization variance vanishes as the ensemble size increases while sampling variance remains unchanged, implying an upper bound on variance reduction that strongly depends on ensemble size. This is consistent with our empirical results.  
For instance, with \texttt{RoBERTa-large}, we observe no failed runs only once the ensemble size exceeds task-specific thresholds (e.g., 5 for MultiRC, 3 for MRPC, and 7 for CoLA), with similar behavior observed for \texttt{Llama-3.2-1B}. This threshold is the minimum ensemble size needed to avoid failed runs and is related to the failure rate of individual components on each task. 
% For instance, in Figure~\ref{fig:collapsed_proportion}, there are 4 out of 10 runs failed for MultiRC and we observe no failure runs until ENS ($\times7$). 
However, because ensemble components are sampled randomly, there is no deterministic rule to guarantee failure-free ensembles purely based on size. The threshold should therefore be interpreted as an empirical estimate rather than a strict guarantee.
% By contrast, \mName introduces only one task-agnostic hyperparameter $\tau$ that does not act as a discrete success–failure switch, unlike ensemble size. 
In addition, an ensemble classifier may not necessarily be better than the performance of the best component (Figure~\ref{fig:ens_appendix}).
The "best" FFT component can itself be a failure run selected by random chance, which caps the ensemble's potential, and if all components fail, the ensemble fails as well.  
% In contrast, Theorem~\ref{thrm:convergence_lw} guarantees that our method \mName converges to a robust solution independent of initial component quality. 
In contrast, \mName as a single learner is free of these issues. 
More importantly, compared to an ensemble of size $N$, our method requires only $1/N$ training time of the ensemble. More computational cost analysis is discussed in Section~\ref{subsec:computational_cost}. 

Beyong bagging ensemble, we evaluated SWA which averages nearby checkpoints within a local optimization basin. SWA fails to stabilize performance when the model converges to a collapsed state, as nearby checkpoints remain trapped in the same suboptimal solution. Empirically, SWA fails to mitigate instability in several NLP tasks, such as exhibiting high variances of 11.27 on MultiRC, 13.61 on RTE, 9.61 on BoolQ using \texttt{RoBERTa-large}.

\subsection{Image Classification}

\input{tables/visual_task}

To assess \mName for image classification, we design our experiments with the following configurations and provide justification of our settings in Appendix Section~\ref{apdix:image_classification}.

\textbf{Datasets and Models and Baselines.} Following~\cite{cao2019learning}, we created imbalanced versions of CIFAR-10 and CIFAR-100 by reducing training samples per class while preserving the original validation sets. We consider two imbalance ratios (100:1 and 50:1) and two imbalance patterns: long-tailed (exponential decay)~\cite{cui2019class} and step imbalance~\cite{azizzadenesheli2019regularized}, resulting in four distinct datasets per benchmark.
We used Vision Transformer-based model \texttt{ViT-base} ~\cite{dosovitskiy2020image} as the pretrained model and adapted sequence classification baselines for image classification tasks. 

\textbf{Overall Performance.} Table~\ref{table:vision_result} presents single-learner results across eight datasets. On the more challenging 100:1 imbalance ratio, PCGrad fails to improve model performance in terms of both accuracy and variance, FocalLoss exhibits substantially high variance on CIFAR-10:step (14.22 STD) and CIFAR-100:long-tailed (10.70 STD), while LNSR and NoisyTune yield inconsistent results that improving some tasks but degrading others. In contrast, \mName consistently enhances both accuracy and stability over FFT in a large margin, outperforming all baselines. 
With the easier 50:1 imbalance, FFT already yields strong accuracy and low STD. Most baselines still provide improvements, except NoisyTune and FocalLoss on CIFAR-10:long-tailed which underperform FFT. \mName achieves the best or the second best performance in most tasks. 
Overall, existing baselines show inconsistent gains over FFT, whereas \mName delivers consistent improvements across tasks, achieving superior accuracy and stability.

\subsection{Computational Cost}
\label{subsec:computational_cost}

Figure~\ref{fig:cost} compares accuracy, standard deviation, and training time across all methods using \texttt{RoBERTa-large} on MultiRC. We observe that although baseline methods can mitigate instability for some tasks, they come with significant trade-offs in efficiency, performance, or both.
Single-learner baselines like LNSR, NoisyTune, and FocalLoss require no additional time or storage compared to FFT, yet they underperform in both accuracy and variance, especially using large-scale \texttt{Llama-3.2-1B}. 
PCGrad requires 2 backward passes to compute the gradient of correctly and wrongly classified groups separately hence the training time is nearly doubled.
Bagging ensemble achieves both better accuracy and variance at the expense of linearly increasing computational cost and storage space with ensemble size, while SWA requires extra storage space to save nearby checkpoints yet delivers inconsistent performance gains compared to FFT.  
For some tasks, such as RTE and MRPC, \mName outperforms even the largest ensemble (ENS ($\times9$)) in accuracy while matching its low standard deviation, yet requires no extra time or storage, achieving the best overall performance, similar trends are observed across other tasks.

\begin{figure}[t]
    \centering
    \begin{adjustbox}{max width=\linewidth}
    \includegraphics[width=\linewidth]{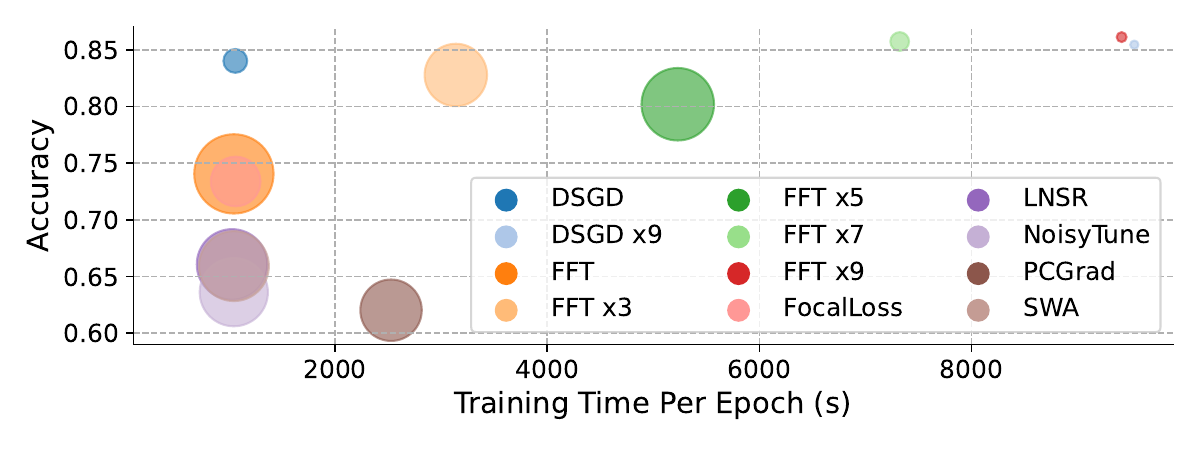}
    \end{adjustbox}
    \caption{Training time per epoch of each methods on MultiRC. Size of the dot denotes standard deviation. 
    % \textcolor{red}{change color to magneta}
    }
    \label{fig:cost}
\end{figure}

\subsection{Sensitivity Analysis and Discussion}
\label{subsec:sensitivity}

\begin{figure*}[t]
    \centering
    \begin{subfigure}{0.245\linewidth}
        \centering
        \includegraphics[width=\linewidth]{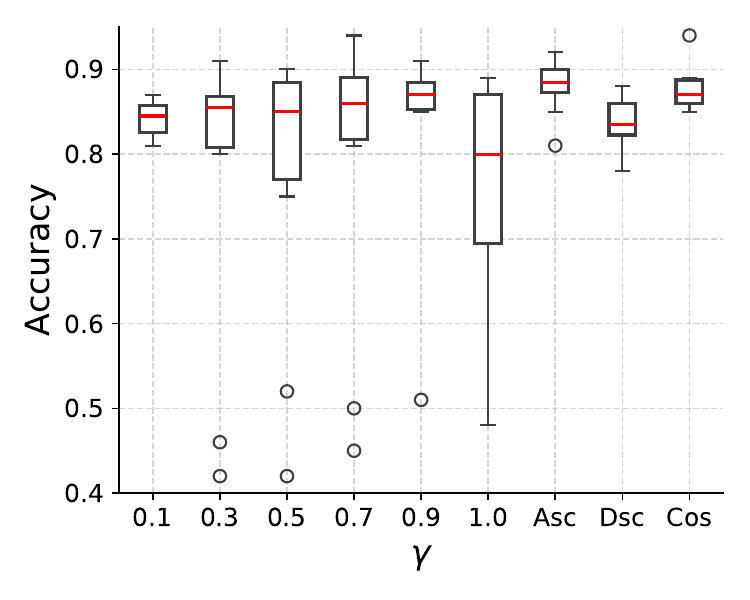}
        \caption{Various scaler $\gamma$ functions.}
        \label{fig:scaler_type}
    \end{subfigure}
    \centering
    \begin{subfigure}{0.245\linewidth}
        \centering
        \includegraphics[width=\linewidth]{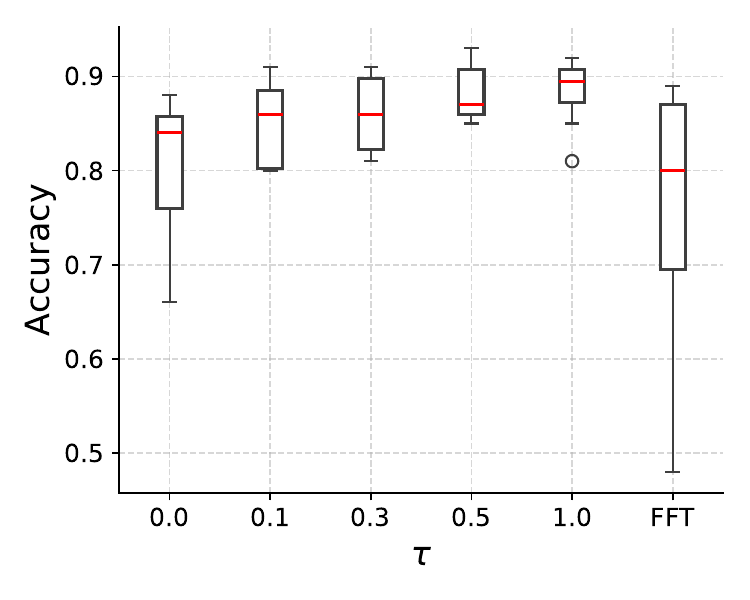}
        \caption{Various $\tau$ settings.}
        \label{fig:tau_scaler}
    \end{subfigure}
    \centering
    \begin{subfigure}{0.245\linewidth}
        \centering
        \includegraphics[width=\linewidth]{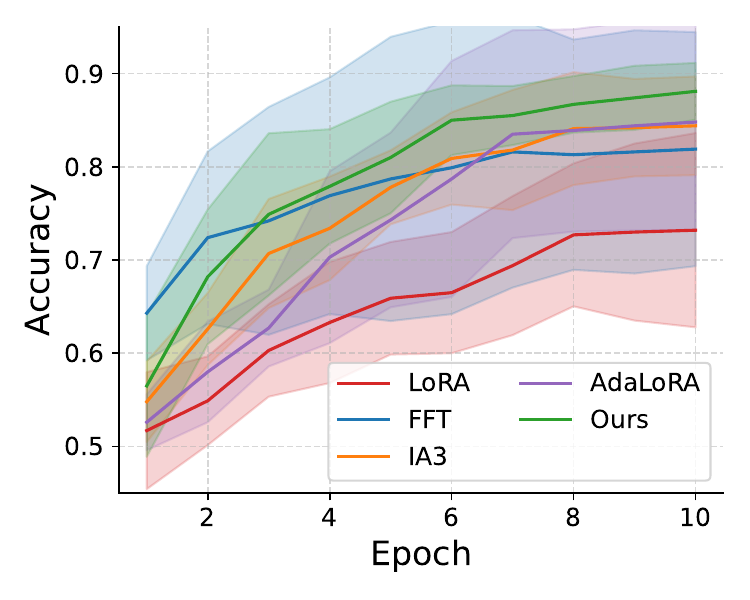}
        \caption{PEFT.}
        \label{fig:peft_comparison}
    \end{subfigure}
    \centering
    \begin{subfigure}{0.245\linewidth}
        \centering
        \includegraphics[width=\linewidth]{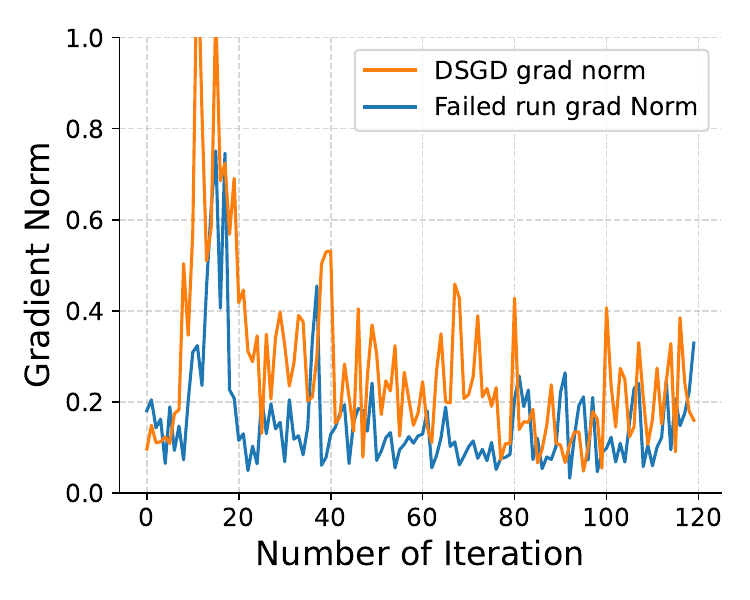}
        \caption{Gradient comparison.}
        \label{fig:gradient_comparison}
    \end{subfigure}
    \caption{Sensitivity analysis conducted using \texttt{RoBERTa-large} on COPA dataset. }
\end{figure*}

In this section we analyze multiple aspects of \mName using \texttt{RoBERTa-large} on COPA.

\textbf{Scaler $\gamma$.}
The superior of dynamic scaler is theoretically proved in Theorem~\ref{thrm:convergence_lw} and ~\ref{thrm:batch_stability}. In our experiment, a linearly ascending scaler consistently outperforms existing methods in both accuracy and stability. To further explore the influence of the scaling strategy, we compare static scalers $\gamma \in \{0.1, 0.3, 0.5, 0.7, 0.9\}$ with two dynamic alternatives: a linear descending function and a cosine scheduler. The results shown in Figure~\ref{fig:scaler_type} show that compared to dynamic scalers, using a static one exhibits larger performance variations and we can observe collapsed states for $\gamma$ above 0.1. While $\gamma=0.1$ avoids collapse, it achieves lower average accuracy. 
On the other hand, dynamic scalers show consistently improved accuracy and stability over FFT, i.e., no collapsed states. Among them, the linearly ascending scaler and cosine scaler perform similarly best. These findings align with the experimental results in Section~\ref{sec:experiment} and the theoretical analysis in Section~\ref{sec:theory}. 

\textbf{Dynamic Scaler Upper Bound $\tau$.}
The hyperparameter $\tau$, introduced in Section~\ref{sec:solution}, governs the maximum scaling amplitude; lower values produce a more aggressive downscaling effect.
To investigate the impact of $\tau$, we vary $\tau \in \{0, 0.1, 0.3, 0.5, 1\}$. As shown in Figure~\ref{fig:tau_scaler}, larger $\tau$ yields better mean accuracy and lower variance. With any $\tau \leq 1$, the performance is consistently higher and more stable than FFT ($\gamma=1$).
This occurs because larger $\tau$ incorporates more signals from correctly classified examples, analogous to increasing the effective batch size, thereby boosting performance and updating stability.
Hence, we set $\tau=1$ in all experiments.

\textbf{Practical Guidelines.} FocalLoss and \mName with configurations like descending linear $\gamma$ or small $\tau$ (0 or 0.1) lead to lower accuracy and higher variance compared to \mName with an ascending linear or cosine $\gamma$ with $\tau=1$. As the model gains confidence, the former settings apply near-zero scaling to "easy" examples, drastically reducing the effective batch size. This results in unstable updates and degraded performance. Conversely, the later settings retain more gradient signal throughout training, ensuring more stable and effective optimization.

\textbf{Size of Trainable Parameters.}
To investigate whether fine-tuning a smaller set of trainable parameters can improve model stability, in Figure~\ref{fig:peft_comparison}, we compare \mName against several parameter-efficient fine-tuning (PEFT) methods~\cite{han2024parameter} including LoRA~\cite{hu2022lora}, AdaLoRA~\cite{zhang2023adalora}, and IA3~\cite{liu2022few}, which are designed to improve fine-tuning efficiency by updating only a subset of model parameters.
Compared to FFT ($\sim$350M learnable parameters), all PEFT methods LoRA ($\sim$787.5K), AdaLoRA  ($\sim$6.2M), IA3 ($\sim$173K), and our method \mName ($\sim$350M) initially exhibit lower accuracies, which gradually rise at different rates over epochs. All methods except LoRA surpass FFT after epoch 7. LoRA exhibits the weakest performance, with accuracy significantly lower than FFT. While the variance of FFT and PEFT methods grows with training time, PEFT approaches maintain relatively lower variance than FFT throughout. Nevertheless, collapsed states and degenerate solutions still occur, suggesting that reducing the number of trainable parameters can mitigate instability at beginning but its effect diminishes as training progresses.
In contrast, \mName exceeds FFT after the third epoch and maintains the best accuracy throughout training while continuously decreasing the variance. These findings align with Theorems \ref{thrm:convergence_lw} and \ref{thrm:batch_stability}, confirming that \mName ensures superior convergence and stability.

\textbf{Mitigation of Gradient Cancellation.}
As discussed in Section~\ref{sec:problem} and shown in Figures~\ref{fig:cosine_conflict} and \ref{fig:cosine_conflict_suc}, failure runs arise from gradient conflicts between correctly and incorrectly classified examples within a batch. \mName alleviates this issue by dynamically downscaling the gradients of correctly classified examples. As shown in Figure~\ref{fig:gradient_comparison}, the resulting gradient norm (orange curve) increases substantially over training compared to the original failed run (blue curve), indicating that \mName effectively mitigates gradient cancellation.
Figure~\ref{fig_apx:failed_suc_comparison} shows more examples.

%% file: tables/textual_task.tex
\begin{table*}[t]
\centering
\begin{adjustbox}{max width=\textwidth}
\begin{tabular}{l|cccccc|cc}
\toprule
\texttt{RoBERTa-large}&MultiRC & COPA   & RTE    & BoolQ   & MRPC & CoLA  & ACC($\uparrow$) & STD($\downarrow$)\\
\midrule
FFT                      &\underline{74.05 $\pm$ 14.50}& 74.70 $\pm$ 15.71 & 76.17 $\pm$ 15.44 & 83.20 $\pm$ 7.39  & 84.90 $\pm$ 5.33&78.95 $\pm$ 8.46  & 78.66&11.13\\
FocalLoss& 73.38 $\pm$ 5.73& 80.40 $\pm$ 7.59& 77.29 $\pm$ 1.70& 81.22 $\pm$ 0.46& 86.52 $\pm$ 6.50& \underline{83.82 $\pm$ 5.20}& \underline{81.10}&\underline{4.53}\\
LNSR&66.07 $\pm$ 11.46& \underline{82.00 $\pm$ 12.58}& 77.44 $\pm$ 15.91& 73.41 $\pm$ 11.86& \underline{88.36 $\pm$ 1.55}&80.75 $\pm$ 8.03  & 78.00&10.23\\
NoisyTune&63.62 $\pm$ 10.72& *55.00 $\pm$ 0.00*& 70.65 $\pm$ 15.53& 71.04 $\pm$ 11.46& 86.91 $\pm$ 6.62&80.67 $\pm$ 8.01  & 71.31&8.72\\
PCGrad&62.03 $\pm$ 8.70& 69.60 $\pm$19.11& \underline{81.05 $\pm$ 10.08}& \underline{83.22 $\pm$ 7.40}& 86.67 $\pm$ 6.52&  81.80 $\pm$ 6.84& 77.39&9.77\\
\rowcolor{gray!20} \mName                     &\textbf{84.01 $\pm$ 1.31}& \textbf{88.10 $\pm$ 3.21}& \textbf{83.61 $\pm$ 1.59}& \textbf{85.69 $\pm$ 0.20}& \textbf{88.38 $\pm$ 1.17}&\textbf{84.92 $\pm$ 1.05}& \textbf{85.78}&\textbf{1.42}\\
\midrule
SWA& 65.92 $\pm$ 11.27& \underline{87.70 $\pm$ 4.06}& 78.45 $\pm$ 13.61& 80.36 $\pm$ 9.61& 88.19 $\pm$ 1.21& 82.46 $\pm$ 7.06& 80.51&7.80\\
ENS ($\times$3)&82.77 $\pm$ 8.99&        82.90 $\pm$ 4.72&        79.86 $\pm$ 3.01 &        85.86 $\pm$ 0.35 & 86.74 $\pm$ 1.40&78.24 $\pm$ 7.86  & 82.82&4.35\\
ENS ($\times$5)&80.19 $\pm$ 12.12&        84.80 $\pm$ 3.43&        80.51 $\pm$ 2.58&        86.13 $\pm$ 0.14 & 87.25 $\pm$ 0.90&79.76 $\pm$ 7.36  & 82.79&4.71\\
ENS ($\times$7)&\underline{85.72 $\pm$ 0.81}&        85.10 $\pm$ 2.13&        79.89 $\pm$ 1.92&        86.20 $\pm$ 0.15 & 87.82 $\pm$ 0.50&80.77 $\pm$ 6.17  & 84.85&1.94\\
ENS ($\times$9)&\textbf{86.11 $\pm$ 0.23}&        {86.40 $\pm$ 1.17}&        \underline{81.16 $\pm$ 1.06}&        \underline{86.31 $\pm$ 0.08}& \underline{88.04 $\pm$ 0.32}&\underline{83.54 $\pm$ 0.75}& \underline{85.26}&\underline{0.60}\\
\rowcolor{gray!20} \mName ($\times$9)&85.44 $\pm$ 0.16& \textbf{90.80 $\pm$ 0.79}& \textbf{84.77 $\pm$ 0.41}& \textbf{86.57 $\pm$ 0.08}& \textbf{88.53 $\pm$ 0.28}&\textbf{86.18 $\pm$ 0.21}& \textbf{87.04}&\textbf{0.32}\\
\midrule
\texttt{Llama-3.2-1B}&MultiRC & COPA   & RTE    & BoolQ   & MRPC & CoLA & ACC($\uparrow$) & STD($\downarrow$)\\
\midrule
FFT                      &62.50 $\pm$ 0.80& 79.40 $\pm$ 6.08& \underline{79.49 $\pm$ 1.83}& 73.98 $\pm$ 3.85 & \underline{86.69 $\pm$ 1.05}&\underline{84.07 $\pm$ 0.83}  & \underline{77.69}&2.41\\
FocalLoss & 56.94 $\pm$ 0.49& 76.00 $\pm$ 7.72& 49.39 $\pm$ 2.97& *62.17 $\pm$ 0.00*& *68.38 $\pm$ 0.00*& *69.13 $\pm$ 0.00*& 63.66&\underline{1.86}\\
LNSR&\underline{65.33 $\pm$ 4.46}& \underline{82.10 $\pm$ 3.14}& 74.90 $\pm$ 2.70& 66.81 $\pm$ 1.24& 82.89 $\pm$ 3.15&78.67 $\pm$ 3.65  & 75.12&3.06\\
NoisyTune&60.85 $\pm$ 0.93 & *55.00 $\pm$ 0.00*& 74.69 $\pm$ 2.92& 67.64 $\pm$ 1.01& 84.76 $\pm$ 2.08&80.91 $\pm$ 0.44& 70.64&\textbf{1.23}\\
PCGrad&60.76 $\pm$ 3.24& 76.40 $\pm$ 2.95& 73.83 $\pm$ 2.95& \underline{67.77 $\pm$ 1.21}& 82.84 $\pm$ 3.39&  80.09 $\pm$ 1.62& 73.61&2.56\\
\rowcolor{gray!20} \mName                   &\textbf{67.17 $\pm$ 4.70}& \textbf{83.30 $\pm$ 2.06}& \textbf{81.55 $\pm$ 1.22}& \textbf{75.43 $\pm$ 3.55}& \textbf{87.52 $\pm$ 0.82}&\textbf{84.75 $\pm$ 0.59}& \textbf{79.95
}&2.16\\
\midrule
SWA& 62.53 $\pm$ 1.01& 55.50 $\pm$ 5.51& 76.17 $\pm$ 1.47& 67.29 $\pm$ 1.64& 82.62 $\pm$ 3.79& 79.95 $\pm$ 2.02& 70.67&2.57\\
ENS ($\times$3)&63.44 $\pm$ 0.42&        81.10 $\pm$ 4.79&        81.91 $\pm$ 1.06&        76.12 $\pm$ 2.90 & 87.65 $\pm$ 0.83&84.94 $\pm$ 0.76  & 79.20&1.79\\
ENS ($\times$5)&64.10 $\pm$ 0.33&        77.60 $\pm$ 9.09&        82.78 $\pm$ 0.83&        76.52 $\pm$ 1.76 & 87.75 $\pm$ 0.66&85.25 $\pm$ 0.34  & 79.00&2.17\\
ENS ($\times$7)&63.79 $\pm$ 0.32& 81.30 $\pm$ 7.18& \underline{83.14 $\pm$ 0.30}&76.57 $\pm$ 0.72 & \underline{88.48 $\pm$ 0.54}&85.11 $\pm$ 0.33  & 79.73&1.56\\
ENS ($\times$9)&\underline{64.20 $\pm$ 0.16}& \underline{83.70 $\pm$ 3.56}& 82.96 $\pm$ 0.28&\underline{76.64 $\pm$ 0.55}& 88.26 $\pm$ 0.69&\underline{85.30 $\pm$ 0.23}& \underline{80.18}&\underline{0.91}\\
\rowcolor{gray!20} \mName ($\times$9)&\textbf{69.11 $\pm$ 0.88}& \textbf{83.90 $\pm$ 3.28}& \textbf{83.54 $\pm$ 0.42}& \textbf{78.09 $\pm$ 0.19}& \textbf{89.26$\pm$ 0.34}&\textbf{85.60 $\pm$ 0.24}& \textbf{81.58}&\textbf{0.89}\\
\bottomrule
\end{tabular}
\end{adjustbox}
\caption{Comparison of \mName with baselines on 6 sequence classification benchmark datasets, evaluated using ACC ($\%$) and STD. Results are averaged over 10 runs, $**$ indicates all collapsed states. The last two columns show the average ACC and STD of 6 NLP tasks. The best performance (based on accuracy) is in \textbf{bold}, and the second best is \underline{underlined}.}
\label{table:nlp_cv}
\end{table*}

%% file: tables/visual_task.tex
\begin{table*}[t]
\centering
\begin{adjustbox}{max width=\linewidth}
\begin{tabular}{l|cc|cc|cc|cc}
\toprule
 \texttt{ViT-base}& \multicolumn{2}{c|}{CIFAR-10: long-tailed}& \multicolumn{2}{c|}{CIFAR-10: step} &  \multicolumn{2}{c|}{CIFAR-100: long-tailed}& \multicolumn{2}{c}{CIFAR-100: step}\\
 \multicolumn{1}{l|}{Ratio} & 100 & 50  & 100 & 50    & 100 & 50  & 100 &50    \\
\midrule
 FFT                                  &     88.47 $\pm$ 1.77&    96.46 $\pm$ 0.44  & 83.27 $\pm$ 2.13 & 96.09 $\pm$ 0.51    & \underline{69.55 $\pm$ 2.71} & 85.53 $\pm$ 0.48  & 56.73 $\pm$ 1.58 &84.30 $\pm$ 0.48   \\
 FocalLoss &\textbf{89.32 $\pm$ 0.68}&95.29 $\pm$ 3.58&70.31 $\pm$ 14.22&\textbf{97.15 $\pm$ 0.57}&64.68 $\pm$ 10.70&87.20 $\pm$ 2.21&57.19 $\pm$ 2.49& 86.60 $\pm$ 4.17\\
 LNSR& 88.05 $\pm$ 1.58&\textbf{97.27 $\pm$ 0.14} & \underline{84.03 $\pm$ 0.98}& 96.90 $\pm$ 0.25& 64.90 $\pm$ 1.29& \textbf{87.90 $\pm$ 0.28} & \underline{57.26 $\pm$ 0.78}&87.72 $\pm$ 0.22  \\
 NoisyTune& 87.88 $\pm$ 1.67&88.94 $\pm$ 9.39  & 68.86 $\pm$ 15.40& 96.80 $\pm$ 0.24   & 68.05 $\pm$ 3.75& \underline{87.89 $\pm$ 0.24}& 57.17 $\pm$ 1.11&\underline{87.89 $\pm$ 0.24}\\
 PCGrad& 86.63 $\pm$ 2.44& 96.85 $\pm$ 0.19 & 82.05 $\pm$ 3.57& 96.84 $\pm$ 0.15   & 68.13 $\pm$ 1.47& 87.02 $\pm$ 0.37 & 50.01 $\pm$ 1.18&87.24 $\pm$ 0.37  \\
 \rowcolor{gray!20} \mName                                 & \underline{89.23 $\pm$ 0.95}& \underline{97.15 $\pm$ 0.17}& \textbf{84.18 $\pm$ 0.66}& \underline{96.95 $\pm$ 0.42}   & \textbf{70.33 $\pm$ 1.18}& 87.72 $\pm$ 0.31  & \textbf{59.91 $\pm$ 0.44}&\textbf{87.93 $\pm$ 0.37}  \\
\bottomrule
\end{tabular}
\end{adjustbox}
\caption{Comparison with baselines on CIFAR-10 and CIFAR-100 with various imbalance settings.}
\label{table:vision_result}
\end{table*}

%% file: sections/conclusion.tex
\section{Conclusion}
This work investigates the seed-induced instability of fine-tuning LPMs on classification tasks. To mitigate this issue, we propose \mName, a modified gradient descent algorithm that directly downscales gradients from correctly classified training examples to reduce gradient conflicts within mini-batches. Through both theoretical analysis and extensive empirical evaluation, we demonstrate that \mName achieves significantly greater stability and higher accuracy than existing baselines.

\textbf{Limitation}: Our approach focuses on correcting the cumulative effects of seed-induced randomness during the optimization process, including not only parameter initialization and data ordering, but also dropout masks, optimizer state initialization, and hardware-level non-determinism which collectively contribute to run-to-run variance. However, a limitation of \mName is that it is designed specifically for gradient-based fine-tuning. It does not address other sources of instability, such as prompt formatting in prompt tuning~\cite{he2024does} and in-context example selection in in-context learning~\cite{gupta2023coverage}.

%% file: sections/appendix.tex
% \section{Related Work}
\input{sections/relatedwork}

\section{Preliminaries}

% \begin{definition}
% \label{def:convergence}
%     \textbf{Convergence:} The state during training when model parameters stabilize, with the loss function exhibiting negligible further decrease. Convergence may result in either a high-performing solution, a degenerate solution, or a collapsed state, depending on the optimization trajectory and loss landscape.
% \end{definition}

\begin{definition}
    \label{def:L-Lipschitz}
    (M-Lipschitz function). A function $f$ is M-Lipschitz if $\forall \theta \in \Theta$ and $\forall z \in \mathcal{D}$: $\| \nabla f(\theta, z) \| \leq M$ which also implies that: $|f(\theta, z) - f(\theta', z)| \leq M\| \theta - \theta' \|$.
\end{definition}

\begin{definition}
    \label{def:lipschitz_notation}
    A function $f \in C^{k, \alpha}_L(\mathbb{R}^n)$ i.e. derivatives up to order $k$ exist and are continuous. All $k$-th order derivatives satisfied: 
    $|\nabla^kf(x) - \nabla^kf(y)| \leq L\|x-y\|^\alpha$.
\end{definition}

\begin{definition}
    \label{def:smooth_func}
    A function $f: \mathbb{R}^n \rightarrow \mathbb{R}$ is $L$-smooth $\Leftrightarrow$ $f \in C^{1,1}_L(\mathbb{R}^n)$.
\end{definition}

\section{Lemmas}\label{apx:lemmas}

\begin{lemma} (Descent lemma~\cite{10.5555/2670022}) \label{lem:taylor_exp} 
Assume that $f$ is $M$-smooth i.e. $f \in C^{1,1}_M(\mathbb{R}^n)$, for any $x, y \in \mathbb{R}^n$ we have: 
\begin{equation}
    |f(y)-f(x)- \langle f'(x), y-x\rangle| \leq \frac{M}{2} \|y-x\|^2.
\end{equation}
\end{lemma}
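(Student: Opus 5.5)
We prove this by integrating the gradient along the segment from $x$ to $y$ and then using the Lipschitz continuity of $f'$ that $M$-smoothness provides. By Definition~\ref{def:smooth_func}, $f\in C^{1,1}_M(\mathbb{R}^n)$ means that $f$ is continuously differentiable and $\|f'(u)-f'(v)\|\le M\|u-v\|$ for all $u,v$. We interpret the norm in Definition~\ref{def:lipschitz_notation} as the Euclidean norm on gradients.

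First, parametrize the segment by $\phi(s)=f(x+s(y-x))$ for $s\in[0,1]$. Since $f$ is $C^1$, the chain rule gives that $\phi$ is $C^1$ with $\phi'(s)=\langle f'(x+s(y-x)),\,y-x\rangle$. The fundamental theorem of calculus then yields
\begin{equation*}
f(y)-f(x)=\int_0^1 \langle f'(x+s(y-x)),\,y-x\rangle\,ds .
\end{equation*}
Subtracting $\langle f'(x),y-x\rangle=\int_0^1\langle f'(x),y-x\rangle\,ds$ gives
\begin{equation*}
f(y)-f(x)-\langle f'(x),y-x\rangle=\int_0^1 \langle f'(x+s(y-x))-f'(x),\,y-x\rangle\,ds .
\end{equation*}

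Next, take absolute values and move them inside the integral. Applying Cauchy--Schwarz to the integrand bounds it by $\|f'(x+s(y-x))-f'(x)\|\,\|y-x\|$. The Lipschitz property of $f'$ then bounds this by $M s\|y-x\|^2$. Integrating, $\int_0^1 Ms\,ds=M/2$, which gives the claimed bound $\frac{M}{2}\|y-x\|^2$.

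There is no real obstacle here. The only point needing care is that Definition~\ref{def:lipschitz_notation} must be read with $k=1$ and $\alpha=1$, so that it says exactly that $f'$ is $M$-Lipschitz, and that continuity of $f'$ justifies applying the fundamental theorem of calculus to $\phi$.
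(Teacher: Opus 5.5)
Your proposal is correct and matches the paper's proof: write $f(y)-f(x)$ as the integral of $\langle f'(x+s(y-x)),y-x\rangle$ over $[0,1]$, subtract the linear term, then apply Cauchy--Schwarz and the $M$-Lipschitz bound on $f'$, and integrate $Ms$ to get $M/2$. Your write-up is also cleaner than the paper's, which has a garbled integrand in its first line and writes $L$ where it means $M$.
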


\begin{lemma}
\label{lemma:batch_alphabounded}
    % Based on Lemma 3.3 in~\cite{hardt2016train}, 
    If $\mathcal{L}$ is H-Lipschitz, then \mName and SGD updates are bounded:
    \begin{equation}
    \label{equ:etaH-bound}
        \| \theta - U_{\text{\mName}}(\theta) \| \leq \varrho\eta H~ \text{and}~ 
        \| \psi - U_{\text{SGD}} (\psi) \| \leq \eta H.
    \end{equation}
    where $\varrho = \frac{1}{m}\sum_i v_i$ and $v_i = 1$ when $z_i$ is wrongly classified according to $\theta$ and $v_i = \gamma$ otherwise.
\end{lemma}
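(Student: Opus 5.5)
The plan is to bound the norms of the two update rules directly, using $H$-Lipschitzness of $\mathcal{L}$. Fix an example $z_i$ with loss $\mathcal{L}_i$. By Definition~\ref{def:L-Lipschitz}, $\|\nabla_\theta \mathcal{L}_i\| \le H$ at every parameter value.

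\textbf{SGD step.} We have $\psi - U_{\text{SGD}}(\psi) = \eta \frac{1}{m}\sum_{i=1}^m \nabla_\psi \mathcal{L}_i$. The triangle inequality then gives
\[
\|\psi - U_{\text{SGD}}(\psi)\| \le \frac{\eta}{m}\sum_{i=1}^m \|\nabla_\psi \mathcal{L}_i\| \le \eta H .
\]
This is the same argument as Lemma~3.3 of \cite{hardt2016train}.

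\textbf{\mName step.} First rewrite the update as a per-example weighted sum. Set $v_i = 1$ if $z_i \in \mathcal{B}_W$ (wrongly classified under the current $\theta$) and $v_i = \gamma$ if $z_i \in \mathcal{B}_C$. Then $G_W + \gamma G_C = \sum_{i=1}^m v_i \nabla_\theta \mathcal{L}_i$. This is the weighted-sum rewriting the paper already mentions after Lemma~\ref{lem:property_lc}, and it keeps the expression valid even when one partition is empty. The partition depends on $\theta$, but that does not matter: the bound is pointwise in $\theta$, so $\mathcal{B}_W$ and $\mathcal{B}_C$ are fixed once $\theta$ is fixed. Since $\gamma \in [0,\tau] \subseteq [0,1]$, every $v_i \ge 0$, so $|v_i| = v_i$. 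The triangle inequality gives
\[
\|\theta - U_{\text{\mName}}(\theta)\| \le \frac{\eta}{m}\sum_{i=1}^m v_i \|\nabla_\theta \mathcal{L}_i\| \le \eta H \frac{1}{m}\sum_{i=1}^m v_i = \varrho\,\eta H .
\]

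\textbf{Consequence.} Because $v_i \le 1$, we get $\varrho \le 1$, so the \mName bound is never larger than the SGD bound. This is the form needed later in Theorem~\ref{thrm:batch_stability}.

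\textbf{Main obstacle.} The argument is routine; the only delicate points are bookkeeping. One must make $\varrho$ depend on the current iterate (through $\mathcal{B}_W$ and $\mathcal{B}_C$) and use nonnegativity of $\gamma$ so that the weights pass through the norm unchanged.
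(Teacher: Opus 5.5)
Your proposal is correct and matches the paper's proof: write the DSGD step as the per-example weighted sum $\frac{\eta}{m}\sum_i v_i \nabla\mathcal{L}(\theta,z_i)$, pass the nonnegative weights through the triangle inequality, and apply $\|\nabla \mathcal{L}\|\le H$ to get $\varrho\eta H \le \eta H$, with SGD as the special case $v_i=1$. Your assignment of $v_i=1$ to wrongly classified examples agrees with the lemma statement; the paper's proof text has these two cases swapped, which is a typo.
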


% \begin{lemma}
% \label{lemma:recurrence_bound}
%     Assume that $\mathcal{L}$ is H-Lipschitz. Fix an arbitrary sequence of updates $U_{\text{\mName},1}, U_{\text{\mName},2}, \dots, U_{\text{\mName},T}$ on dataset $S$ and another sequence $\tilde{U}_{\text{\mName},1}, \tilde{U}_{\text{\mName},2}, \dots, \tilde{U}_{\text{\mName},T}$ on dataset $\tilde{S}$. Let $\theta_0$ and $\tilde{\theta}_0$ be the starting points in $\Theta$ and $\theta_0 \neq \tilde{\theta}_0$. Define $\Delta_t = \| \theta_t - \tilde{\theta}_t \|$ where $\theta_t, \tilde{\theta}_t$ are defined recursively through:
%     \begin{equation}
%     \label{equ:recur_theta}
%     \begin{split}
%         \theta_{t+1} &= U_{\text{\mName},t}(\theta_t) = \theta_t - \eta_t \gamma_t \nabla \mathcal{L}(\theta_t),\\ \tilde{\theta}_{t+1} &= \tilde{U}_{\text{\mName},t}(\tilde{\theta}_t) = \tilde{\theta}_t - \eta_t \tilde{\gamma}_t\nabla \mathcal{L}(\tilde{\theta}_t),
%     \end{split}
%     \end{equation}
    
%     Then we have the recurrence relation for \mName:
%     \begin{equation}
%     \label{eq:recurrence_relation}
%         \begin{split}
%             \Delta_0 = \|\theta_0 - \tilde\theta_0\|, \quad \Delta_{t+1} \leq \Delta_t + (\gamma_t+\tilde\gamma_t) \eta_t H. \\
%         \end{split}
%     \end{equation}
%     For SGD with $\forall t:\gamma_t = \tilde\gamma_t = 1$: 
%     \begin{equation}
%         \Delta'_0 = \| \psi_0 - \tilde\psi_0 \|, \quad \Delta'_{t+1} \leq \Delta'_t + 2 \eta_t H.
%     \end{equation}
% \end{lemma}

\begin{lemma}
\label{lemma:batch_recurrence_bound}
    Assume that $\mathcal{L}$ is H-Lipschitz. Fix an arbitrary sequence of updates $U_{\text{\mName},1}, U_{\text{\mName},2}, \dots, U_{\text{\mName},T}$ on sampled $m$-size mini-batch sequence $ \mathcal{B}_1, \mathcal{B}_2, \dots, \mathcal{B}_T$ and another sequence $\tilde{U}_{\text{\mName},1}, \tilde{U}_{\text{\mName},2}, \dots, \tilde{U}_{\text{\mName},T}$ on sampled $m$-size mini-batch sequence 
    $\tilde{\mathcal{B}}_1, \tilde{\mathcal{B}}_2, \dots, \tilde{\mathcal{B}}_T$. Let $\theta_0$ and $\tilde{\theta}_0$ be the starting points in $\Theta$ and $\theta_0 \neq \tilde{\theta}_0$. Define $\Delta_t = \| \theta_t - \tilde{\theta}_t \|$ where $\theta_t, \tilde{\theta}_t$ are defined recursively through:
    \begin{equation}
    \label{equ:recur_theta}
    \begin{split}
        \theta_{t+1} &= U_{\text{\mName},t}(\theta_t) = \theta_t - \eta_t \frac{1}{m} \sum_{i = 1}^{|\mathcal{B}_t|} v_i \nabla \mathcal{L}(\theta_t, z_i),\\ \tilde{\theta}_{t+1} &= \tilde{U}_{\text{\mName},t}(\tilde{\theta}_t) = \tilde{\theta}_t - \eta_t \frac{1}{m} \sum_{i =1}^{|\tilde{\mathcal{B}}_t|} \tilde{v}_i \nabla \mathcal{L}(\tilde{\theta}_t, z_i),
    \end{split}
    \end{equation}
    where $v_i = 1$ if $z_i$ is wrongly classified according to $\theta_t$ and $v_i = \gamma_t$ otherwise, similarly with $\Tilde v_i$.
    Then we have the recurrence relation for \mName:
    \begin{equation}
    \label{eq:recurrence_relation}
        \begin{split}
            \Delta_0 = \|\theta_0 - \tilde\theta_0\|, \quad \Delta_{t+1} \leq \Delta_t + (\varrho_t+\tilde\varrho_t) \eta_t H. \\
        \end{split}
    \end{equation}
    where $\varrho_t = \frac{1}{m}\sum_{i=1}^{|\mathcal{B}_t|} v_i$ and $\Tilde \varrho_t = \frac{1}{m}\sum_{i=1}^{|\tilde{\mathcal{B}}_t|} \Tilde v_i$.
    For SGD, $\forall i:v_i = \tilde v_i = 1$: 
    \begin{equation}
        \Delta'_0 = \| \psi_0 - \tilde\psi_0 \|, \quad \Delta'_{t+1} \leq \Delta'_t + 2 \eta_t H.
    \end{equation}
\end{lemma}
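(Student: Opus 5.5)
The plan is to prove the recurrence by the triangle inequality, using only the per-step bound of Lemma~\ref{lemma:batch_alphabounded}. No contraction or expansiveness of the gradient map is needed, since the mini-batches and the correctness masks $v_i,\tilde v_i$ may differ arbitrarily between the two trajectories.

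Write
\[
\theta_{t+1}-\tilde\theta_{t+1} = (\theta_t-\tilde\theta_t) - \bigl(\theta_t - U_{\text{\mName},t}(\theta_t)\bigr) + \bigl(\tilde\theta_t - \tilde U_{\text{\mName},t}(\tilde\theta_t)\bigr).
\]
Taking norms and applying the triangle inequality gives
\[
\Delta_{t+1} \le \Delta_t + \|\theta_t - U_{\text{\mName},t}(\theta_t)\| + \|\tilde\theta_t - \tilde U_{\text{\mName},t}(\tilde\theta_t)\|.
\]

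Each of the last two terms is bounded as in Lemma~\ref{lemma:batch_alphabounded}, with step size $\eta_t$. By \eqref{equ:recur_theta},
\[
\|\theta_t - U_{\text{\mName},t}(\theta_t)\| = \eta_t\Bigl\|\tfrac1m\sum_i v_i\nabla\mathcal{L}(\theta_t,z_i)\Bigr\| \le \eta_t\tfrac1m\sum_i v_i\|\nabla\mathcal{L}(\theta_t,z_i)\| \le \varrho_t\eta_t H.
\]
This uses $v_i\ge 0$, since $v_i\in\{1,\gamma_t\}$ and $\gamma_t\in[0,\tau]$, together with $\|\nabla\mathcal{L}(\cdot,z)\|\le H$ from the $H$-Lipschitz assumption (Definition~\ref{def:L-Lipschitz}). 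The tilde trajectory gives $\tilde\varrho_t\eta_t H$ in the same way. Summing the two bounds yields \eqref{eq:recurrence_relation}.

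The SGD case is the specialization $v_i=\tilde v_i=1$. Then $\varrho_t=\tilde\varrho_t=|\mathcal{B}_t|/m=1$, and the recurrence becomes $\Delta'_{t+1}\le\Delta'_t+2\eta_t H$.

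The only step that needs care is making sure the masks $v_i$ are evaluated at the current iterate and treated as fixed nonnegative weights. Their dependence on $\theta_t$ is discontinuous, and that is precisely why one cannot compare $\nabla\mathcal{L}(\theta_t,\cdot)$ with $\nabla\mathcal{L}(\tilde\theta_t,\cdot)$ via smoothness. The crude triangle bound sidesteps this entirely, so no real obstacle remains.
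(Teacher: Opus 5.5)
Your proof is correct and is essentially the paper's argument: the paper also uses the triangle inequality to split $\Delta_{t+1}$ into $\Delta_t$ plus the two step lengths $\|U_t(\theta_t)-\theta_t\|$ and $\|\tilde U_t(\tilde\theta_t)-\tilde\theta_t\|$. It bounds these by $\varrho_t\eta_t H$ and $\tilde\varrho_t\eta_t H$ via Lemma~\ref{lemma:batch_alphabounded}, and gets the SGD case by setting $v_i=\tilde v_i=1$. Your explicit decomposition of $\theta_{t+1}-\tilde\theta_{t+1}$ is slightly cleaner than the paper's intermediate line, which writes $\tilde U_t(\theta_t)$ where $\tilde U_t(\tilde\theta_t)$ is meant.
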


In our analysis, when the random seed and model architecture are fixed, the initialization remains identical across different update rules, i.e., $\theta_0 = \psi_0$ and $\tilde\theta_0 = \tilde\psi_0$.

\section{Proofs}\label{apx:prfs}

\subsection{Proof of Lemma~\ref{lem:property_lw}}
\label{lw_proof}

\begin{proof}
For a cross-entropy loss with softmax outputs, the gradients with respect to 
$\Theta$ and $b$ are
\[
\frac{\partial \mathcal{L}}{\partial \omega}
    = \sum_i (p_i - y_i) x_i^\top, 
    \qquad
\frac{\partial \mathcal{L}}{\partial b}
    = \sum_i (p_i - y_i).
\]
At a stationary point of $\mathcal{L}$, a batch size $m=1$ gives us:
\[
\nabla_{\theta}\mathcal{L} = 0 
\;\;\Longleftrightarrow\;\;
\forall i,\;\; p_i = y_i.
\]
Now consider $\mathcal{L}_W$, the loss restricted to misclassified samples.  
If $\mathcal{L}_W$ has converged, then every misclassified example 
$(x_i, y_i) \in W$ satisfies
\[
(p_i - y_i) = 0 \quad \Longrightarrow \quad p_i = y_i,
\]
which contradicts the definition of misclassification unless the example is now 
correctly classified. Thus all samples previously in $W$ must satisfy  
$\arg\max(p_i) = \arg\max(y_i)$ at convergence.

For all correctly classified samples $(x_i, y_i) \in C$, we already have  
$\arg\max(p_i) = \arg\max(y_i)$ by definition.  
 
Therefore, at convergence of $\mathcal{L}_W$, every training example is correctly 
classified, which implies zero training error (i.e., 100\% training accuracy), 
though the training loss itself may not be zero.
\end{proof}

\subsection{Proof of Lemma~\ref{lem:property_lc}}
\label{lc_proof}

\begin{proof}
If $\mathcal{L}_C$ has converged, then for all correctly classified samples
\[
(x_i, y_i) \in C:\quad p_i = y_i.
\]
However, this condition applies only to $C$ and does not constrain the misclassified set $W$.  
If $|W| > 0$, then there exists some $(x_j, y_j) \in W$ such that
\[
\arg\max(p_j) \neq \arg\max(y_j),
\]
implying strictly positive training error.  
Thus, convergence of $\mathcal{L}_C$ does not guarantee zero training error unless $W$ is empty.
\end{proof}

\subsection{Proof of Lemma~\ref{lem:taylor_exp}}

\begin{proof}
    For all $x, y\in \mathbb{R}^n$ we have:
    \begin{equation}
    \begin{split}
        f(y) 
        &= f(x) + \int_0^1\langle f'(x), \tau (y-x), y-x\rangle d\tau \\
        &= f(x) + \langle f'(x), y-x) \rangle + \int_0^1 \langle f'(x + \tau(y-x)) - f'(x), y-x  \rangle d\tau.
    \end{split}
    \end{equation}
    Hence:
    \begin{equation}
    \begin{split}
        |f(y) - f(x) - \langle f'(x), y-x \rangle| 
        &= |\int_0^1 \langle f'(x+\tau(y-x)) - f'(x), y-x \rangle d\tau| \\
        &\leq \int_0^1 |\langle f'(x+\tau(y-x)) - f'(x), y-x \rangle| d\tau \\
        &\leq \int_0^1 \|f'(x+\tau(y-x)) - f'(x)\| \cdot \| y-x\| d\tau \\
        &\leq \int_0^1 \tau L \cdot \| y-x\|^2 d\tau = \frac{M}{2} \|y-x\|^2.
    \end{split}
    \end{equation}
\end{proof}

\subsection{Proof of Lemma~\ref{lemma:batch_alphabounded}}
\label{prf:alphabounded}
\begin{proof}
    Based on Lemma 3.3 in~\cite{hardt2016train}, consider a mini-batch $\mathcal{B}$ ($|\mathcal{B}| = m$), by $H$-Lipschitz condition we have $\|\nabla \mathcal{L}(\theta)\| \leq H, \|\nabla \mathcal{L}(\psi)\| \leq H$ and $\gamma \in (0, \tau]$ and $\tau \leq 1$ hence:
    \begin{equation}
    \begin{split}
        \| \theta - U(\theta) \| 
        = \| \eta \frac{1}{m} \sum_{i=1}^{|\mathcal{B}|} v_i \nabla \mathcal{L}(\theta, z_i) \|
        \leq  \eta \frac{1}{m} \sum_{i=1}^{|\mathcal{B}|} v_i \|\nabla \mathcal{L}(\theta, z_i)\|
        \leq  \eta  \left(\underbrace{ \frac{1}{m} \sum_{i=1}^{|\mathcal{B}|} v_i } _{\varrho \leq 1}\right)  H
        \leq \eta H.
    \end{split}
    \end{equation}
    where $v_i = 1$ when $z_i$ is truly classified and $v_i = \gamma_t$ otherwise. We also have:
    \begin{equation}
    \begin{split}
        \| \psi - \Tilde{U} (\psi) \| 
        = \| \eta \frac{1}{m} \sum_{i=1}^{|\mathcal{B}|} \nabla \mathcal{L}(\psi, z_i) \|
        \leq \eta \frac{1}{m} \sum_{i=1}^{|\mathcal{B}|} \| \nabla \mathcal{L}(\psi, z_i)\|
        \leq  \eta H.
    \end{split}
    \end{equation}
    This completes the proof.
\end{proof}

\subsection{Proof of Lemma~\ref{lemma:batch_recurrence_bound}}
\label{prf:batch_recurrence_bound}
\begin{proof}
    Based on Lemma~\ref{lemma:batch_alphabounded} that $U_t$ and $\tilde U_t$ for \mName are $\varrho_t \eta_t H$ and $\tilde\varrho_t \eta_tH$-bounded, then by the triangle inequality, we have:
    \begin{equation}
    \label{equ:triangle_inequality}
    \begin{split}
        \Delta_{t+1} 
        &= \| U_t(\theta_t) - \tilde U_t(\tilde\theta_t) \|,\\
        &\leq \| \theta_t - \tilde\theta_t \| + \| U_t(\theta_t) - \theta_t -  \tilde U_t(\theta_t) + \tilde\theta_t \| ,\\
        &\leq \Delta_t + \| U_t(\theta_t) - \theta_t\| + \| \tilde U_t(\tilde \theta_t) - \tilde \theta_t \|, \\
        &\leq \Delta_t + (\varrho_t + \tilde\varrho_t)\eta_t H,
    \end{split}
    \end{equation}
    For SGD case: $v_i = 1$ for all cases from this the proof is completed.
\end{proof}

\subsection{Proof of Theorem~\ref{thrm:batch_stability}}
\label{prf:batch_stb}
\begin{proof}
    By the assumption the loss function $\mathcal{L}$ is $H$-Lipschitz for every example $z \sim \mathcal{D}$, we have:
    \begin{equation}
    \label{eq:stab_bound}
        \mathbb{E}[|\mathcal{L}(\theta_T, z) - \mathcal{L}(\tilde\theta_T, z)|] \leq H \mathbb{E} [\|\theta_T - \tilde\theta_T\|] = H\mathbb{E}[\Delta_T],
    \end{equation}
    
    Using Lemma~\ref{lemma:batch_recurrence_bound}, we have $\Delta_{t+1} \leq \Delta_t + (\varrho_t + \Tilde{\varrho}_t)\eta_t H$ we have:
    \begin{equation}
    \label{equ:expected_bound}
    \begin{split}
        \mathbb{E}[\Delta_{t+1}] 
        \leq \mathbb{E}[\Delta_{t}] 
        &+ \eta H \mathbb{E}[(\varrho_t + \Tilde\varrho_t)] 
    \end{split}
    \end{equation}
    where $\eta_t$ and $\upsilon_t$ are functions of training iteration $t$ which are independent of the data sequences.
    Applying this recursively and using Equation~\ref{eq:stab_bound} we obtain:
    \begin{equation}
    \label{equ:gs_bound}
    \begin{split}
        \mathbb{E}[|\mathcal{L}(\theta_T, z) - \mathcal{L}(\tilde\theta_T, z)|] 
        &\leq H \mathbb{E}[\Delta_T], \\
        &\leq H \mathbb{E}[\Delta_{T-1}] + H^2 \mathbb{E}[(\varrho_{T-1} + \Tilde\varrho_{T-1})] \eta_{T-1}, \\
        &\leq H \mathbb{E}[\Delta_{T-2}] + H^2 \sum_{t=T-2}^{T-1} \mathbb{E}[(\varrho_t + \Tilde\varrho_t)] \eta_{t}, \\
        &\leq \dots ,\\
        &\leq H\mathbb{E}[\Delta_0] + H^2 \sum_{t=0}^{T-1} \mathbb{E}[(\varrho_t + \Tilde\varrho_t)] \eta_{t} = \epsilon. \\
    \end{split}
    \end{equation}
    Similarly, for SGD, using Lemma~\ref{lemma:batch_recurrence_bound} we have $\Delta_{t+1} \leq \Delta_t + 2\eta_t H$, we can obtain SGD's stability upper bound as:
    \begin{equation}
    \label{equ:sgd_bound}
    \begin{split}
        \mathbb{E}[|\mathcal{L}(\psi_T, z) - \mathcal{L}(\tilde\psi_T, z)|] 
        &\leq H\mathbb{E}[\Delta'_T],\\
        &\leq H\mathbb{E}[\Delta'_{T-1}] + 2\eta_{T-1}H^2,\\
        &\leq H\mathbb{E}[\Delta'_{T-2}] + 2\sum_{t=T-2}^{T-1}\eta_{t}H^2,\\
        &\leq \dots, \\
        &\leq H\mathbb{E}[\Delta'_0] + 2 H^2 \sum_{t=0}^{T-1} \eta_t = \epsilon'.
    \end{split}
    \end{equation}

    Equations~\ref{equ:gs_bound} and~\ref{equ:sgd_bound} hold for any $z \sim \mathcal{D}$, we analyze a model's stability under different initializations and stochastic mini-batch ordering introduced by different random seeds. In our analysis, given a fixed random seed and a fixed model architecture, the initialization is the same for different update rules, i.e., $\theta_0 = \psi_0$ and $\tilde\theta_0 = \tilde\psi_0$,  which also means $\Delta_0 = \Delta'_0$.
    
    Note that $\varrho_t \in (0, \tau]$ and $\Tilde \varrho_t \in (0, 1]$, then $\mathbb{E}[(\varrho_t + \Tilde\varrho_t)] \leq 2$.
    Thus, the right side of Eq.~\ref{equ:gs_bound} satisfies:
        \begin{equation}
        \label{equ:bound_comparison}
        \underbrace{H\mathbb{E}[\Delta_0] + H^2 \sum_{t=0}^{T-1} \mathbb{E}[(\varrho_t + \Tilde\varrho_t)] \eta_{t}}_{\epsilon}
        \leq
        H\|\theta_0 - \tilde\theta_0\| + 2H^2 \sum_{t=0}^{T-1} \eta_t
        =
        \underbrace{H\|\psi_0 - \tilde\psi_0\| + 2H^2 \sum_{t=0}^{T-1} \eta_t}_{ \epsilon'}.
    \end{equation}
     
    Thus, based on Equation~\ref{equ:gs_bound},~\ref{equ:sgd_bound}, and \ref{equ:bound_comparison}, the stability bound of \mName is lower or equals to that of SGD.
\end{proof}

\subsection{Proof of Theorem~\ref{thrm:convergence_lw}} \label{prf:convergence_lw}

% Assume that $\nabla \mathcal{L}_W$ is $M$-Lipschitz, $\mathcal{L}_W$ will strictly decrease or converge to a stationary point under our update rule: $\theta_{t} = \theta_{t-1} - \eta(G_W + \gamma_{t} G_C)$.

\begin{proof}
We consider $y=\theta_{t+1}$ as the updated parameters at $(t+1)$-th iteration, $x=\theta_{t}$ denotes the previous parameters, $f(\cdot)=\mathcal{L}_W(\cdot)$ as the loss function. Since $\nabla \mathcal{L}_W$ is $M$-Lipschitz, from Lemma~\ref{lem:taylor_exp}, it follows in particular that: 

\begin{equation}
    \mathcal{L}_W(\theta_{t+1}) \leq \mathcal{L}_W(\theta_{t}) +  \langle \nabla \mathcal{L}_W(\theta_{t}), \theta_{t+1} -\theta_{t} \rangle + \frac{M}{2}\|\theta_{t+1} - \theta_{t}\|^2.
\end{equation}

We update the parameters using a modified gradient step: $\theta_{t+1} = \theta_{t} - \eta \frac{1}{m}( \gamma_{t} G_C + G_W)$
where $G = G_C + G_W$ denotes the original gradient.
We have:
\begin{equation}
\begin{split}
    \mathcal{L}_W(\theta_{t+1})
    &\leq \mathcal{L}_W(\theta_{t}) + \langle \nabla \mathcal{L}_W (\theta_{t}), \theta_{t+1} - \theta_{t} \rangle + \frac{M}{2} \| \theta_{t+1} - \theta_{t} \|^2,\\
    &\leq \mathcal{L}_W(\theta_{t}) + \langle G_W, -\eta \frac{1}{m} (G_W + \gamma_{t} G_C) \rangle + \frac{M}{2} \| -\eta \frac{1}{m}(G_W + \gamma_{t} G_C) \|^2, \\
    &\leq \mathcal{L}_W(\theta_{t}) - \eta \frac{1}{m} \langle G_W, G_W + \gamma_{t} G_C \rangle + \frac{M\eta^2}{2m^2} \| G_W + \gamma_{t} G_C \|^2, \\
    &\leq \mathcal{L}_W(\theta_{t}) - \eta \frac{1}{m} ( \|G_W\|^2 + \gamma_{t} G_W^\top G_C ) + \frac{M\eta^2}{2m^2} \| G_W + \gamma_{t} G_C \|^2, \\
    &\leq \mathcal{L}_W(\theta_{t}) - \frac{\eta}{2m}( 2\|G_W\|^2 + 2\gamma_{t} G_W^\top G_C ) + \frac{M\eta^2}{2m^2} \| G_W + \gamma_{t} G_C \|^2.
\end{split}
\end{equation}

We scale $G_C$ down by $\gamma_{t} \in (0, \tau]$ such that $\|G_W\|^2 \geq \tau^2\|G_C\|^2 \geq \gamma_t^2\|G_C\|^2$ then:
\begin{equation}
\label{eq:recurrence_loss}
\begin{split}
    \mathcal{L}_W(\theta_{t+1})
    &\leq \mathcal{L}_W(\theta_{t}) - \frac{\eta}{2m}( \|G_W\|^2 + \gamma_{t}^2\|G_C\|^2 + 2\gamma_{t} G_W^T G_C ) + \frac{M\eta^2}{2m^2} \| G_W + \gamma_{t} G_C \|^2,\\
    &\leq \mathcal{L}_W(\theta_{t}) - \frac{1}{m}\left(\frac{\eta}{2} - \frac{M\eta^2}{2m}\right) \| G_W + \gamma_{t} G_C \|^2,
\end{split}
\end{equation}
hence if $0 < \eta < \frac{m}{M}$ then $\mathcal{L}_W$ strictly decrease unless $\| G_W + \gamma_{t} G_C \|^2 = 0$. 

When $\|G_W + \gamma_{t} G_C\|^2 = 0$ i.e. $G_W +\gamma_{t} G_C = 0$ which indicates that $\theta_{t+1} = \theta_{t} - \eta \frac{1}{m} (G_W + \gamma_{t} G_C) = \theta_{t}$.
Since $\gamma_{t}$ is dynamic, we consider the next iteration:
\[
    \theta_{t+2} = \theta_{t+1} - \eta \frac{1}{m} (G_W + \gamma_{t+1} G_C) = \theta_{t} - \eta \frac{1}{m} (G_W + \gamma_{t+1} G_C)
\]
In iteration $t+1$, the gradients $G_W$ and $G_C$ remain unchanged due to $\theta_{t+1} = \theta_{t}$, thus $G_W +\gamma_{t} G_C = 0$ still holds. Since $\|G_W\|^2 \geq \tau^2\|G_C\|^2 \geq \gamma_{t+1}^2\|G_C\|^2$ we obtain:
\begin{equation}
\label{equ:l_w_strict_decrease}
\begin{split}
    \mathcal{L}_W(\theta_{t+2})
    &\leq \mathcal{L}_W(\theta_{t+1}) + \langle \nabla \mathcal{L}_W (\theta_{t+1}), \theta_{t+2} - \theta_{t+1} \rangle + \frac{M}{2} \| \theta_{t+2} - \theta_{t+1} \|^2,\\
    &\leq \mathcal{L}_W(\theta_{t}) + \langle \nabla \mathcal{L}_W (\theta_{t}), \theta_{t+2} - \theta_{t} \rangle + \frac{M}{2} \| \theta_{t+2} - \theta_{t} \|^2,\\
    &\leq \mathcal{L}_W(\theta_{t}) - \langle G_W,  \eta (G_W + \gamma_{t+1} G_C) \rangle + \frac{M\eta^2}{2m^2} \| G_W + \gamma_{t+1} G_C \|^2,\\
    &\leq \mathcal{L}_W(\theta_{t}) - \frac{\eta}{2m} ( 2\|G_W\|^2 + 2\gamma_{t+1} G_W^\top G_C) + \frac{M\eta^2}{2m^2} \| G_W + \gamma_{t+1} G_C \|^2,\\
    &\leq \mathcal{L}_W(\theta_{t}) - \frac{1}{m}\left(\frac{\eta}{2} - \frac{M\eta^2}{2m}\right) \| G_W + \gamma_{t+1} G_C \|^2.
\end{split}
\end{equation}
There are two cases based on ~\ref{equ:l_w_strict_decrease}: 

\textbf{(1)} If $\| G_W + \gamma_{t+1} G_C \|^2 > 0$, using telescoping sums obtained from~\ref{eq:recurrence_loss} we have: $\mathcal{L}_W(\theta_{T+1}) - \mathcal{L}_W(\theta_0) \leq - \frac{1}{m} \sum_{t=0}^T \left( \frac{\eta}{2} - \frac{M \eta^2}{2m} \right) \| G_{W, t} + \gamma_t G_{C, t} \|^2$. Therefore:
\begin{equation}
    \text{min}_{t \leq T} \frac{1}{m} \|G_{W, t} + \gamma_t G_{C, t}\|^2 \leq \frac{1}{m(T+1)} \sum_{t=0}^T \|G_{W, t} + G_{C, t}\|^2 \leq \frac{2 (\mathcal{L}_W(\theta_0) - \mathcal{L}_W(\theta_{T+1}))}{(\eta - M\eta^2/m) (T+1)},
\end{equation}
hence if $\mathcal{L}_W$ is lower bounded, i.e., $\mathcal{L}^*_W := \text{inf}_{\theta \in \Theta} \mathcal{L}_W > -\infty$ then $\text{min}_{t \leq T}\|G_{W, t} + \gamma_t G_{C, t}\|^2 = O(1/T)$. $\mathcal{L}_W $ decreases at a rate of $O(1/T)$.

\textbf{(2)} If $G_W +\gamma_{t+1} G_C = 0$, since $\gamma_{t+1} \neq \gamma_{t} > 0$ and $G_W +\gamma_{t} G_C = 0$, then $G_W = G_C = 0$. Hence $\mathcal{L_W}$, $\mathcal{L_C}$, and $\mathcal{L}$ converges to a stationary point. 
\end{proof}

\begin{corollary}
    If $G_W = 0$ or $G_C = 0$ DSGD also converges at a rate of $O(1/T)$, the same as SGD due to DSGD now becomes SGD. i.e.
    \begin{itemize}
        \item $G_W = 0$: $U_\mName(\theta) = \theta - \eta \frac{1}{m}\gamma G_C = \theta - \eta \gamma \frac{1}{m}G$ hence \mName becomes SGD with learning rate of $\eta \gamma$.
        \item $G_C = 0$: $U_\mName(\theta) = \theta - \eta \frac{1}{m} G_W = \theta - \eta \frac{1}{m}G$ thus \mName becomes SGD with learning rate of $\eta$.
    \end{itemize}
\end{corollary}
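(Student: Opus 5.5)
The plan is to show directly that in either degenerate case the \mName update rule of Eq.~\ref{equ:gradscale} is literally an SGD step with a modified step size. Then I would rerun the standard descent-lemma telescoping argument, the one already carried out for $\mathcal{L}_W$ in the proof of Theorem~\ref{thrm:convergence_lw}, but now for the full loss $\mathcal{L}$.

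\textbf{Step 1: reduction to SGD.} Since $G=G_W+G_C$, if $G_W=0$ then $G=G_C$ and
\[
U_{\text{\mName}}(\theta)=\theta-\eta\tfrac{1}{m}\gamma_t G_C=\theta-(\eta\gamma_t)\tfrac{1}{m}G,
\]
which is SGD with step size $\eta_t':=\eta\gamma_t$. If instead $G_C=0$ then $G=G_W$, and $U_{\text{\mName}}(\theta)=\theta-\eta\frac1m G$, which is exactly SGD with step size $\eta_t'=\eta$. This is the algebraic content of the two bullets.

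\textbf{Step 2: descent inequality.} For the rate, I would assume, as in Theorem~\ref{thrm:convergence_lw}, that $\mathcal{L}$ is $M$-smooth and lower bounded by $\mathcal{L}^*$. Applying Lemma~\ref{lem:taylor_exp} to $f=\mathcal{L}$ with $y=\theta_{t+1}$ and $x=\theta_t$ gives
\[
\mathcal{L}(\theta_{t+1})\le\mathcal{L}(\theta_t)-\tfrac{1}{m}\Bigl(\eta_t'-\tfrac{M\eta_t'^2}{2m}\Bigr)\|G_t\|^2 .
\]
This holds step by step, even if the active case switches between $G_W=0$ and $G_C=0$ across iterations, because in both cases the effective step size satisfies $\eta_t'\in[\eta\gamma_t,\eta]$.

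\textbf{Step 3: telescoping.} Under condition (3), $0<\eta<m/M$, every $\eta_t'$ satisfies $\eta_t'\le \eta<m/M$, so each coefficient $\eta_t'(1-M\eta_t'/(2m))$ is positive. Telescoping from $0$ to $T$ then yields
\[
\min_{t\le T}\tfrac{1}{m}\|G_t\|^2\le\frac{2(\mathcal{L}(\theta_0)-\mathcal{L}^*)}{c\,(T+1)},
\]
where $c$ is a uniform lower bound on $\eta_t'(1-M\eta_t'/(2m))$. This is the same $O(1/T)$ rate as SGD.

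\textbf{Main obstacle.} The delicate point is getting that uniform constant $c>0$. In the $G_W=0$ case the effective step is $\eta\gamma_t$, and the linear scheduler $\gamma_t=\tau t/T$ starts at $0$ and is small early on. I would first handle this by assuming $\gamma_t\ge\gamma_{\min}>0$, which is consistent with condition (1) of Theorem~\ref{thrm:convergence_lw} requiring $\gamma_t>0$, and then take $c=\eta\gamma_{\min}(1-M\eta/(2m))$. Alternatively, I would drop the iterations with tiny $\gamma_t$ from the minimum and telescope only over iterations $t\ge T/2$, where $\gamma_t\ge\tau/2$. This keeps the $O(1/T)$ rate with the constant inflated by a factor of order $1/(\tau\gamma_{\min})$.
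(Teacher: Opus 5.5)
Your proposal is correct and follows the paper's route. The paper's justification is exactly your Step 1, the reduction to SGD with step size $\eta\gamma$ or $\eta$, and the $O(1/T)$ rate is inherited from the same descent-lemma telescoping used in case (1) of the proof of Theorem~\ref{thrm:convergence_lw}, which your Steps 2--3 write out explicitly for $\mathcal{L}$.

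Your handling of the uniform constant is a genuine refinement that the paper omits. The paper treats $\gamma$ as a fixed scalar, but under the linear scheduler the effective step $\eta\gamma_t=\eta\tau t/T$ is only of order $\eta\tau/T$ in the first iterations, so a uniform lower bound over all $t$ would destroy the rate. Telescoping only over $t\ge T/2$, where $\gamma_t\ge\tau/2$ and monotone descent gives $\mathcal{L}(\theta_{T/2})\le\mathcal{L}(\theta_0)$, is what actually secures $O(1/T)$. One small correction: in that tail argument the constant inflates by a factor of order $1/\tau$, not $1/(\tau\gamma_{\min})$.
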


\section{Detailed Experiment Setup}
\label{apdix:exp_setup}
% \textcolor{red}{Todo: need to smooth this section and make sure all tables and figures are well cited and discussed. }
In this section we detail the settings for our experiments.

\subsection{Implementation}
We arbitrarily choose 10 random seeds (42, 52, 62, 72, 82, 92, 102, 112, 122, 132) to obtain the mean and variance performance. All single runs are conducted on 2 NVIDIA-A100-80GB GPUs.

\subsection{Sequence Classification}
We conduct our experiments on 4 tasks on Super-GLUE benchmark including: RTE, COPA, MultiRC and BoolQ data statistics is shown in Table~\ref{table:text_data_stat}.
We finetune pretrained language model with different type of architecture including encoder only: \texttt{RoBERTa-large}, decoder only: \texttt{Llama-3.2-1B}.
To ensure a proper baseline implementation, we refer to the configurations in~\cite{liu2019roberta} and replicate the state-of-the-art (SOTA) scores reported using \texttt{\small RoBERTa-large} with FFT. 
For \texttt{Llama-3.2-1B}, no established reference performance was available in the literature; we therefore fine-tune it using our own settings. 
Although our implementation may not achieve SOTA performance, it establishes a consistent basis for comparison across different methods. 
% Detailed settings can be found in Appendix~\ref{apdix:exp_setup}.

\textbf{Baselines}. We reimplement LNSR~\cite{hua2021noise} by ourself and using our own hyper-parameters to finetune backbone models for NoisyTune~\cite{wu2022noisytune} we use the example code snippets provided in the paper and obtain the reported hyper-parameter settings for GLUE tasks.

\begin{table*}[h]
\centering
\begin{adjustbox}{max width=\textwidth}
\begin{tabular}{lcccccc}
\toprule
& MultiRC & RTE  & COPA & BoolqQ  & CoLA&MRPC\\
\midrule
Number of Classes  & 2       & 2    & 2    & 2       & 2&2\\
Training Samples   & 27243   & 2490 & 400  & 9427    & 8551&3668\\
Validation Samples & 4848    & 277  & 100  & 3270    & 1043&408\\
\bottomrule
\end{tabular}
\end{adjustbox}
\caption{Sequence classification data statistics.}
\label{table:text_data_stat}
\end{table*}

\begin{table*}[h]
\centering
\begin{tabular}{lcccccc}
\toprule
               & MultiRC    & RTE    & COPA    & BoolQ     &CoLA &MRPC\\
\midrule
Batch Size& 5 & 5 & 5 & 5                     & 5&5\\
Gradient Accumulation   & 2 & 2 & 2 & 2                      & 2&2\\
Learning Rate& \multicolumn{6}{c}{5E-6; 1E-5; 2E-5; 3E-5; 1E-4; 3E-4}\\
Learning Rate Scheduler& linear & linear & linear & linear                & linear&linear\\
Max Sequence length& 512        & 512    & 256     & 512        & 256&256\\
Epoch                   & 6 & 10 & 10 & 10                     & 3&3\\
Number of GPUs          & 2 & 2 & 2 & 2                      & 2&2\\
\bottomrule
\end{tabular}
\caption{Roberta and Llama hyper parameters settings.}
\label{table:model_settings}
\end{table*}

\begin{figure*}[t]
  \begin{minipage}{0.5\linewidth}
    \centering
    \begin{tabular}{lccc}
    \toprule
                  & IA3  & AdaLoRA      & LoRA         \\
    \midrule
    Learning rate & 5E-3& 5E-3& 3E-4\\
    % Batch size    & 5    & 5            & 5            \\
    Init r        & -    & 64           & -            \\
    Target r      & -    & 16           & -            \\
    r             & -    & -            & 8, 16        \\
    alpha         & -    & -            & 16           \\
    Target module & -    & query, value & query, value \\
    \bottomrule
    \end{tabular}
    \caption{PEFT settings.}
    \label{table:peft_settings}
  \end{minipage}\hfill
  \begin{minipage}{0.5\linewidth}
    \centering
    \begin{tabular}{lcc}
    \toprule
    & CIFAR10& CIFAR100\\
    \midrule
    Learning Rate& 1E-3& 1E-3\\
    % Learning Rate Scheduler& linear & linear \\
     Batch Size& 128& 128\\
     Epoch                   & 5& 5\\
     Gradient Accumulation   & 2 & 2 \\
     
     Number of GPUs          & 2 & 2 \\
    \bottomrule
    \end{tabular}
    % \end{adjustbox}
    \caption{ViT settings.}
    \label{table:img_settings}
  \end{minipage}
\end{figure*}

\subsection{Settings \& Implementations}
\label{appex_subsec:setting}
We follow~\cite{liu2019roberta, hu2022lora} settings to finetune RoBERTa and Llama on GLUE and SuperGLUE benchmarks. All approaches are optimized using AdamW~\cite{loshchilov2017decoupled} optimizer.

\textbf{Baselines:} 
% \textcolor{red}{list any other settings that you think are necessary to replicate the experiment.}
\begin{itemize}
    \item For \textbf{PCGrad}~\cite{yu2020gradient} we customize it for our task where we consider each group of training example in every iteration as an individual task in multi-task learning context and apply PCGrad learning procedure. The pseudo code of PCGrad is shown in Algorithm~\ref{alg:pcgrad}.
    \item For the \textbf{LNSR}~\cite{hua2021noise} approach, we apply a noise regularizer to the last two layers of the classifier, with $\lambda$ (the weight of the regularization loss) set to 0.2 and 0.1, respectively.
    \item For \textbf{NoisyTune}~\cite{wu2022noisytune}, we add uniform noise $U(a, b)$, where $\lambda$ controls the noise intensity. For the GLUE dataset, we set $\lambda = 0.15$, following~\cite{wu2022noisytune}. For other task we use the same $\lambda$ setting.
    \item \textbf{SWA}~\cite{izmailov2018averaging} maintains an average of model parameters over previous checkpoints. We tune the averaging window size in the range ${3, 5}$ and report the best result for each task.
    \item The settings for the PEFT methods LoRA~\cite{hu2022lora}, AdaLoRA~\cite{zhang2023adalora}, and IA3~\cite{liu2022few} are listed in Table~\ref{table:peft_settings}. All settings follow those used in the respective reference papers.
    \item Although the original implementation of \textbf{Focal Loss}~\cite{lin2017focal} suggests default hyperparameters of $\alpha = 0.25$ and $\gamma = 2$, the modified loss for a single data point $z_i$ is defined as: $\mathcal{L}(z_i) = -\alpha_t(1-p_t)^\gamma \text{log}(p_t)$ where $p_t$ denotes the predicted probability of the ground-truth class. These configurations frequently induced training divergence or collapsed states in our preliminary evaluations. Consequently, we performed an extensive hyperparameter search for $\alpha$ and $\gamma$ tailored to each task and pretrained architecture. The results reported herein represent the optimal performance achieved after this exhaustive tuning process. 
    % set $\alpha = 0.25$ and $\gamma = 2$ for all experiments, following the original paper.
\end{itemize}

\textbf{\mName:}
The pseudo code of \mName is stated in Algorithm~\ref{alg:dsgd}.

\begin{algorithm}[h]
\caption{PCGrad for single task learning}
\label{alg:pcgrad}
\begin{algorithmic}[1]
    \REQUIRE Model parameter $\theta = \theta_0$; training dataset distribution $\mathcal{D}$; mini-batch $\mathcal{B}$ of $m$ examples; loss function $loss(\cdot)$.
    \FOR{i from 1 to T}
        \STATE Sample a mini-batch training examples from $\mathcal{D}$ 
        \STATE Let $W \leftarrow \emptyset$ and $C\leftarrow\emptyset$
        \FOR{j from 1 to B}
            \IF{$f(\theta_i)$ equals $y_i$}
                \STATE $C \leftarrow C + \{z_i\}$
            \ELSE 
                \STATE $W \leftarrow W + \{z_i\}$
            \ENDIF
        \ENDFOR
        \STATE Let $\mathcal{L}_W$ be None, $\mathcal{L_C}$ be None
        \STATE Compute $\mathcal{L}_W$ = $\sum$ $loss$($z_i$) where $i \in W$
        \STATE Compute $\mathcal{L}_C$ = $\sum$ $loss$($z_i$) where $i \in C$
        \IF{$\mathcal{L}_W$ is None and $\mathcal{L}_C$ is not None}
            \STATE $\theta_{i+1} \leftarrow \theta_i - \eta \frac{1}{|C|}\nabla \mathcal{L_C}$
        \ELSIF{$\mathcal{L}_W$ is not None and $\mathcal{L}_C$ is None}
            \STATE $\theta_{i+1} \leftarrow \theta_i - \eta \frac{1}{|W|}\nabla \mathcal{L_W}$
        \ELSE
            \IF{$\mathrm{cos}(\nabla L_W, \nabla L_C) \geq 0$}
                \STATE $\theta_{i+1} \leftarrow \theta_i - \eta \frac{1}{m} \left( \nabla  \mathcal{L_W} + \nabla \mathcal{L_C} \right)$
            \ELSE
                \STATE $\theta_{i+1} \leftarrow \theta_i - \eta \frac{1}{m} \left( \nabla \mathcal{L_W} + \nabla \mathcal{L}_C  - \frac{\nabla \mathcal{L}_W \cdot \nabla \mathcal{L}_C}{\|\nabla \mathcal{L}_C\|} \nabla \mathcal{L}_C - \frac{\nabla \mathcal{L}_W \cdot \nabla \mathcal{L}_C}{\|\nabla \mathcal{L}_W\|} \nabla \mathcal{L}_W\right)$
            \ENDIF
        \ENDIF
    \ENDFOR
    \STATE Return $\theta_T$
\end{algorithmic}
\end{algorithm}

\begin{algorithm}[h]
\caption{\mName}
\label{alg:dsgd}
\begin{algorithmic}[1]
    \REQUIRE Model parameter $\theta = \theta_0$; training dataset distribution $\mathcal{D}$; mini-batch $\mathcal{B}$ of $m$ examples; scaler factor $\tau$; scaler function $\zeta(t, \tau,T)$; loss function $loss$.
    \FOR{t from 1 to T}
        \STATE $\gamma_t = \zeta(t, \tau, T)$
        \STATE Sample a mini-batch training examples $z_i = (x_i,y_i):i \in 1,\dots,m$ from $\mathcal{D}$ 
        \STATE Let $W \leftarrow \emptyset$ and $C \leftarrow \emptyset$
        \FOR{j from 1 to $m$}
            \IF{$f(\theta_i, x_i)$ equals $y_i$}
                \STATE $C \leftarrow C + \{z_i\}$
            \ELSE 
                \STATE $W \leftarrow W + \{z_i\}$
            \ENDIF
        \ENDFOR
        \STATE $\mathcal{L}_W \leftarrow 0$, $\mathcal{L_C} \leftarrow 0$
        \STATE Compute $\mathcal{L}_W$ = $\sum$ $loss$($z_i$) where $i \in W$
        \STATE Compute $\mathcal{L}_C$ = $\sum$ $loss$($z_i$) where $i \in C$
        % \STATE Compute $\mathcal{L} = \mathcal{L}_W + \gamma_t\mathcal{L}_C$
        % \IF {$W = \emptyset$}
        %     \STATE Compute the total loss $\mathcal{L} = \sum_{i \in C}$ $loss(z_i)$
        % \ELSIF{$C = \emptyset$}
        %     \STATE Compute the total loss $\mathcal{L} = \sum_{i \in W}$ $loss(z_i)$
        % \ELSE
        %     \STATE Compute the total loss $\mathcal{L} = \sum_{i \in W}$ $loss(z_i)$ $+ \gamma_t \sum_{i \in C}$ $loss(z_i)$
        % \ENDIF
        % \STATE $\theta_{i+1} \leftarrow \theta_i - \eta \frac{1}{m} \nabla \mathcal{L}$
        \STATE $\theta_{i+1} \leftarrow \theta_i - \eta \frac{1}{m} (\nabla \mathcal{L}_W + \gamma_t \nabla  \mathcal{L}_C)$
    \ENDFOR
    \STATE Return $\theta_T$
\end{algorithmic}
\end{algorithm}

\subsection{Multi-task learning}
For the multi-task learning experiments, we train the model on the joint training data from all tasks and evaluate it on the validation set. The model settings are the same as those used in single-task learning, as shown in Table~\ref{table:model_settings}, and the model is trained for 8 epochs. 

\subsection{Image classification}
\label{apdix:image_classification}

To assess our method for image classification, we first identify tasks affected by seed-induced variance. Initial experiments on balanced CIFAR-10/100 using \texttt{Resnet-34}, \texttt{ResNet50}, \texttt{EfficientNet-b0}, and \texttt{ViT-base} revealed minimal variance across all models. We then created imbalanced versions following ~\cite{cao2019learning}, finding that CNN-based models exhibit significantly lower performance but comparably small variance relative to ViT-based models, which is consistent with prior work on long-tailed recognition. Thus, in our experiment, we conduct experiments on imbalanced datasets using \texttt{ViT-base}. 

\textbf{Datasets.} CIFAR-10 and CIFAR-100 contain $50000$ training points and $10000$ validation images of size $32 \times 32$ with 10 and 100 classes respectively. To create their imbalance version, \citeauthor{cao2019learning} reduce the number of training examples per class and keep the validation set unchanged. There are two types of imbalance: long-tailed~\cite{cui2019class} and step imbalance~\cite{buda2018systematic}

\begin{table*}[h]
\centering
\begin{tabular}{llcc}
\toprule
                               Imbalance type&       Ratio& CIFAR-10    & CIFAR-100   \\
\midrule
\multirow{2}{*}{Long-Tailed}   & 100   & 12406       & 10847       \\
                               & 50    & 36223       & 36029       \\
\multirow{2}{*}{Step}          & 100   & 25250       & 25250       \\
                               & 50    & 37500       & 37500       \\
\midrule
\multicolumn{2}{c}{Validation samples} & \multicolumn{2}{c}{10000} \\
\bottomrule
\end{tabular}
\caption{Image classification data statistics.}
\label{table:img_data_stat}
\end{table*}

We follow~\cite{dosovitskiy2020image} recommendation on finetuning ViT on CIFAR-10 and CIFAR-100 datasets using a learning rate of $1E-3$ and a batch size of $256$ detailed setup can be found in Table~\ref{table:img_settings}.

\section{Additional Results} \label{apx:additional_results}

\subsection{Multi-tasks Learning}
We also conduct experiments in multi-task learning settings, where multiple tasks are learned jointly using \texttt{RoBERTa-large} as a shared backbone model with task-specific classification heads for each individual task. More specifically, the result on three multi-task learning problem are shown in Table~\ref{table:multitask_result}.

\input{tables/joint_task}

\textbf{Results}. From the aggregated results on \texttt{RoBERTa-large} across MRPC, RTE, and CoLA, \textbf{DSGD} clearly achieves the {best overall performance}, with the {highest average accuracy (85.41)} and {by far the lowest standard deviation (1.19)}. This indicates not only strong accuracy but also {excellent stability across runs}, which is particularly important for small and sensitive GLUE tasks like RTE and CoLA.
In contrast, most baselines exhibit either lower mean performance, high variance, or both.

% \subsection{*Will the training size impact our method's improvement?} 

% We examine the variance reduction of \mName on various size dataset, as shown in Figure~\ref{fig:training_std}, regardless the training size, the variance of \mName is significantly better than FFT. The STD using traditional FFT shows a low correlation to the training size, despite the size of the training set .

% \begin{figure}[h]
%     \centering
%     \includegraphics[width=0.8\linewidth]{figures/training_std.pdf}
%     \caption{Standard deviation improvements across tasks}
%     \label{fig:training_std}
% \end{figure}

% \begin{figure*}[h]
%   \begin{minipage}{0.48\linewidth}
%     \centering
%     \includegraphics[width=\linewidth]{figures/cost_llama.pdf}
%     \caption{Computational cost using \texttt{Llama-3.2-1B} as backbone model.}
%     \label{fig:llama_cost}
%   \end{minipage}\hfill
%   \begin{minipage}{0.48\linewidth}
%     \centering
%     \includegraphics[width=\linewidth]{figures/cost_roberta.pdf}
%     \caption{Computational cost using \texttt{RoBERTa-large} as backbone model. \textcolor{red}{update to include all baselines and change computational time.}}
%     \label{fig:roberta_cost}
%   \end{minipage}
% \end{figure*}

\begin{figure*}[t]
    \centering
    \begin{subfigure}{0.3\linewidth}
        \centering
        \includegraphics[width=\linewidth]{figures/fai.pdf}
        \caption{Original failed run.}
        \label{fig_apx:cosine_conflict}
    \end{subfigure}
    \hspace{10pt}
    \begin{subfigure}{0.3\linewidth}
        \centering
        \includegraphics[width=\linewidth]{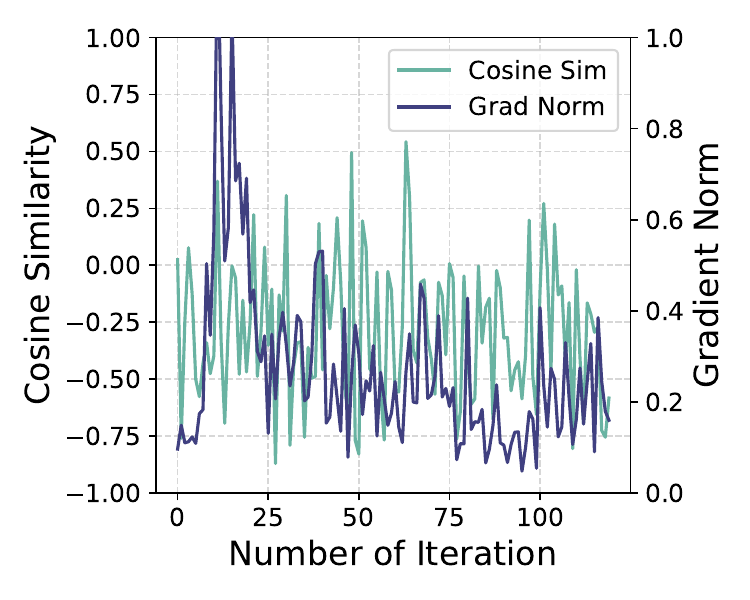}
        \caption{Stable run corrected by \mName.}
        \label{fig_apx:dsgd_grad}
    \end{subfigure}
    \hspace{10pt}
    \begin{subfigure}{0.3\linewidth}
        \centering
        \includegraphics[width=\linewidth]{figures/gradnorm_comparison.pdf}
        \caption{Gradient Comparison.}
        \label{fig_apx:grad_com1}
    \end{subfigure}
    \begin{subfigure}{0.3\linewidth}
        \centering
        \includegraphics[width=\linewidth]{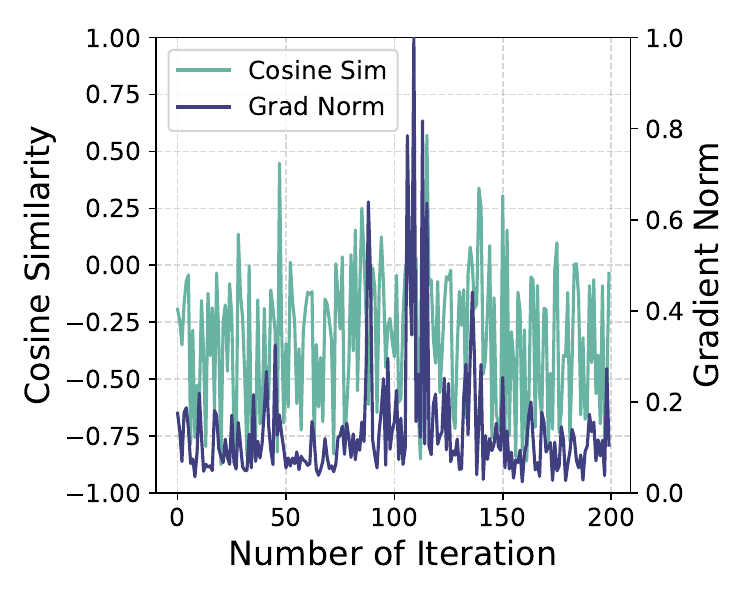}
        \caption{Original failed run.}
        \label{fig_apx:cosine_conflict2}
    \end{subfigure}
    \hspace{10pt}
    \begin{subfigure}{0.3\linewidth}
        \centering
        \includegraphics[width=\linewidth]{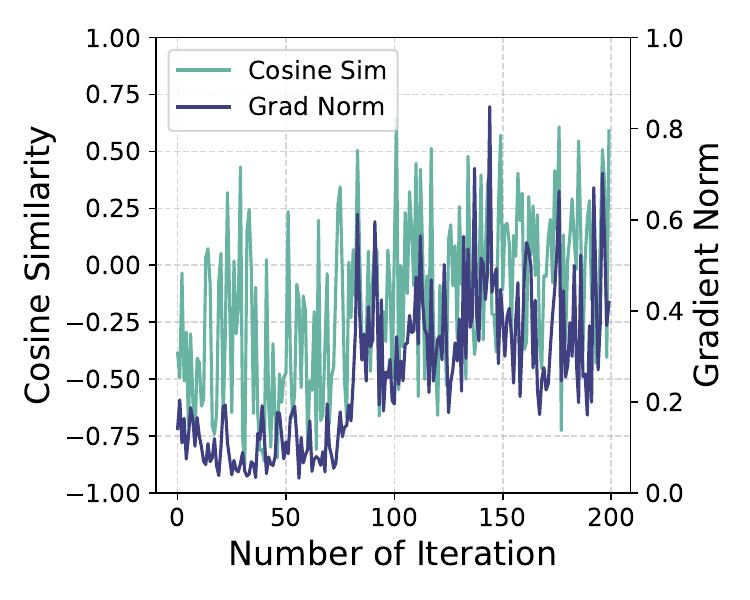}
        \caption{Stable run corrected by \mName.}
        \label{fig_apx:dsgd_grad2}
    \end{subfigure}
    \hspace{10pt}
    \begin{subfigure}{0.3\linewidth}
        \centering
        \includegraphics[width=\linewidth]{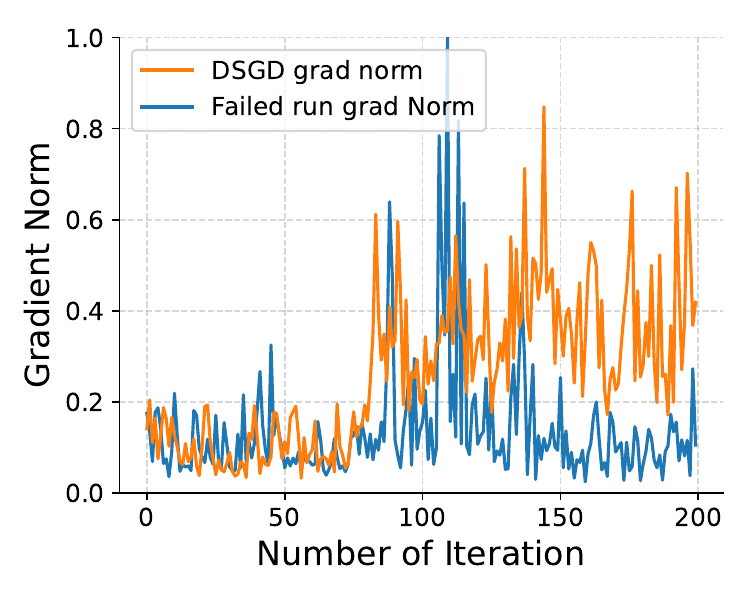}
        \caption{Gradient Comparison.}
        \label{fig_apx:grad_com2}
    \end{subfigure}
    \caption{Gradient norm comparison for RoBERTa-large on COPA (seed 132 and seed 456). (a), (d) Original failed runs. (b), (e) Stable runs corrected by \mName. In the failed run, cosine similarity reveals severe gradient conflicts. Negative values occur in 90\% of iterations, with 30\% falling below -0.5. Applying \mName does not significantly change these directions, as it only scales the magnitudes of correctly classified examples. However, gradient norms increase significantly indicating that \mName effectively mitigates gradient cancellation. Figures (c) and (e) show the gradient norm comparisons.}
    \label{fig_apx:failed_suc_comparison}
\end{figure*}

\begin{figure*}[h]
    \centering
    \begin{subfigure}{0.3\linewidth}
        \centering
        \includegraphics[width=\linewidth]{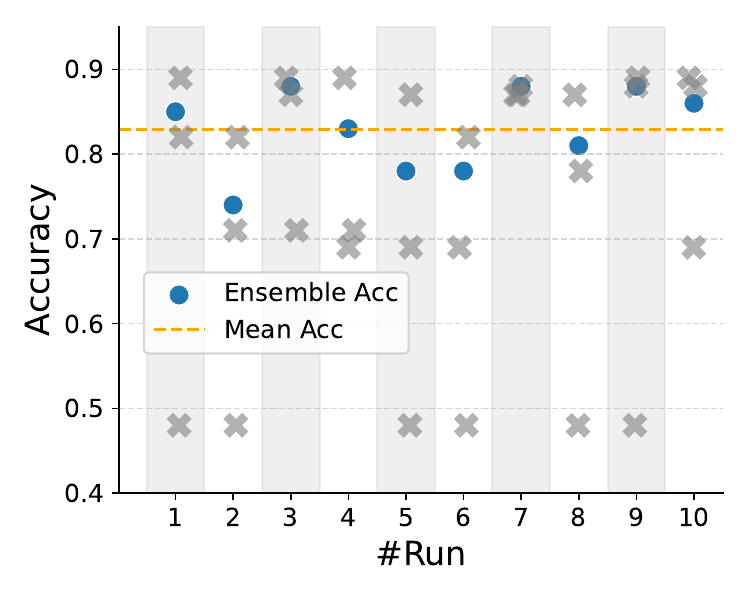}
        \caption{ENS ($\times 3$) components.}
        \label{fig:ens3_appendix}
    \end{subfigure}
    \begin{subfigure}{0.3\linewidth}
        \centering
        \includegraphics[width=\linewidth]{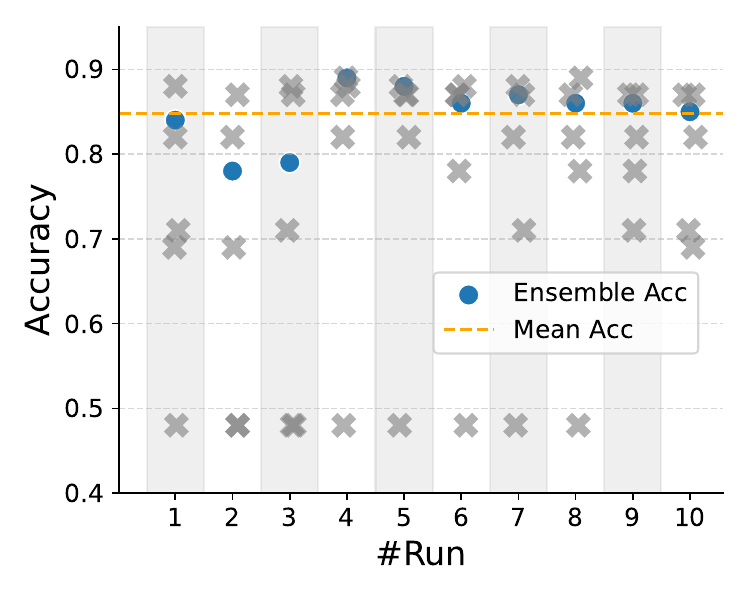}
        \caption{ENS ($\times 5$) components.}
        \label{fig:ens5_appendix}
    \end{subfigure}
    \caption{Bagging ensemble collapsed and degenerated solution proportions in individual run on COPA dataset.}
    \label{fig:ens_appendix}
\end{figure*}

% \begin{figure*}[h]
%     \centering
%     \begin{subfigure}{0.4\linewidth}
%         \centering
%         \includegraphics[width=\linewidth]{figures/PEFT_only.pdf}
%         \caption{PEFT methods accuracy.}
%         \label{fig:peft_only}
%     \end{subfigure}
%     \begin{subfigure}{0.4\linewidth}
%         \centering
%         \includegraphics[width=\linewidth]{figures/FFT_DSGD_only.pdf}
%         \caption{FFT and \mName accuracy.}
%         \label{fig:fft_dsgd}
%     \end{subfigure}
%     \caption{Bagging ensemble collapsed and degenerated solution proportions in individual run on COPA dataset.}
% \end{figure*}

\begin{figure*}[h]
    \centering
    \begin{subfigure}{0.32\linewidth}
        \centering
        \includegraphics[width=\linewidth]{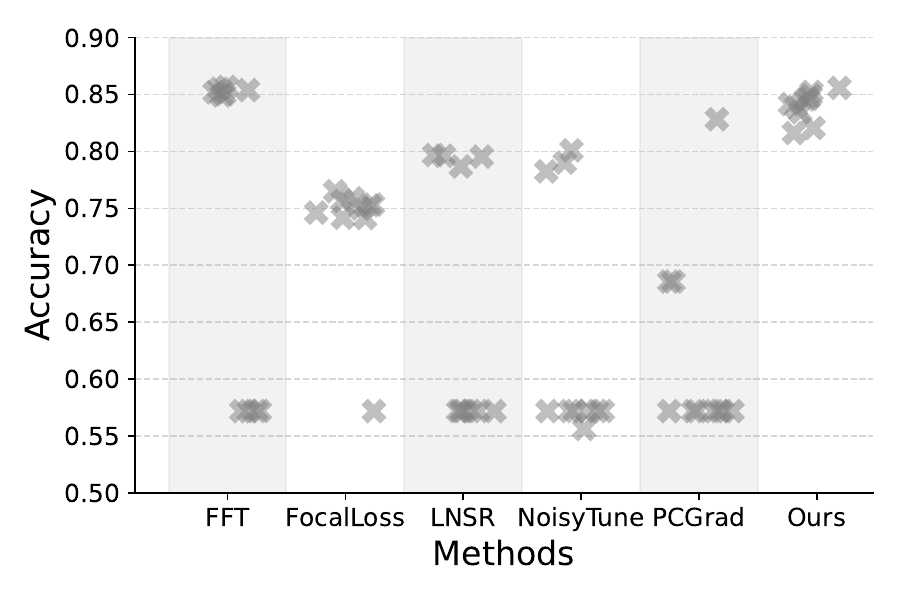}
        \caption{MultiRC.}
    \end{subfigure}
    \begin{subfigure}{0.32\linewidth}
        \centering
        \includegraphics[width=\linewidth]{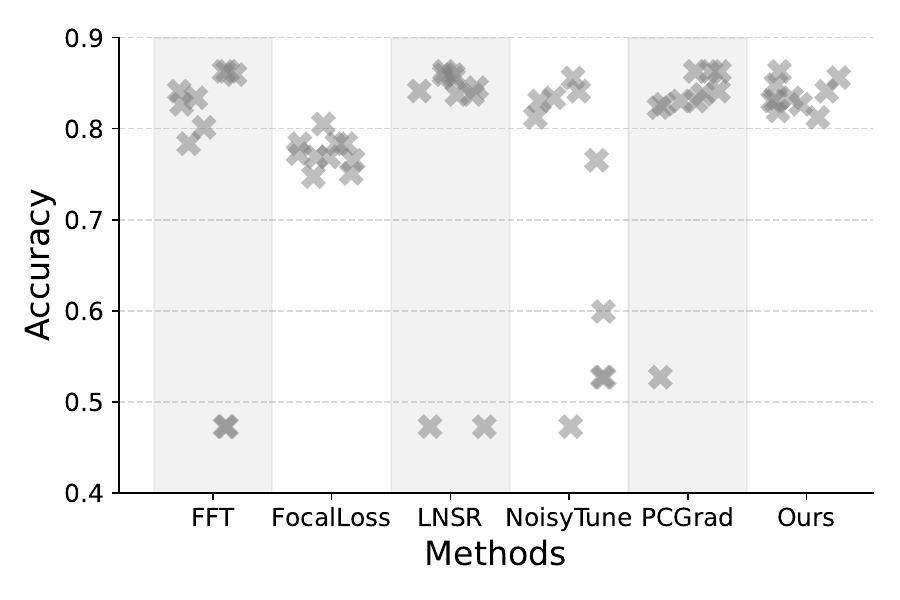}
        \caption{RTE.}
    \end{subfigure}
    \begin{subfigure}{0.32\linewidth}
        \centering
        \includegraphics[width=\linewidth]{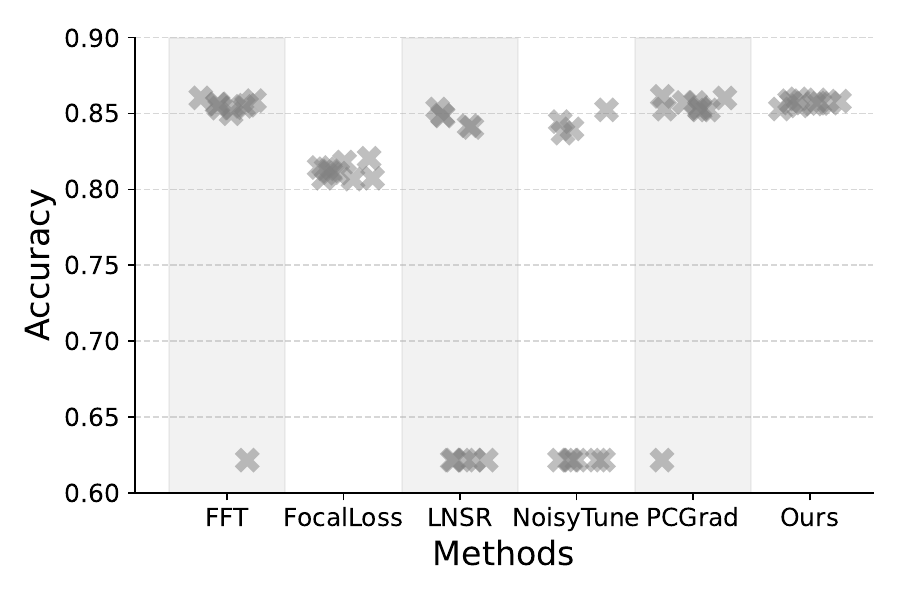}
        \caption{BoolQ.}
    \end{subfigure}
    \begin{subfigure}{0.32\linewidth}
        \centering
        \includegraphics[width=\linewidth]{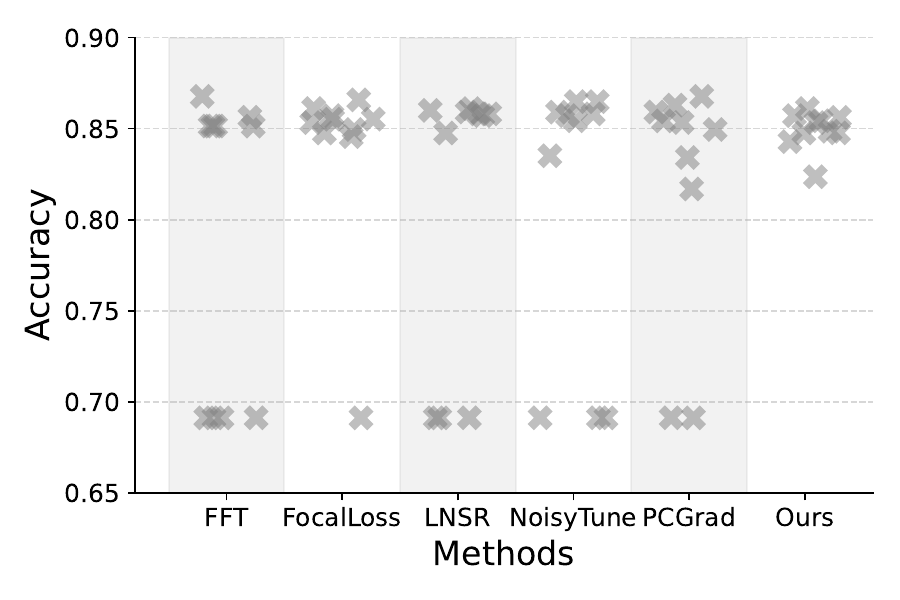}
        \caption{COLA.}
    \end{subfigure}
    \begin{subfigure}{0.32\linewidth}
        \centering
        \includegraphics[width=\linewidth]{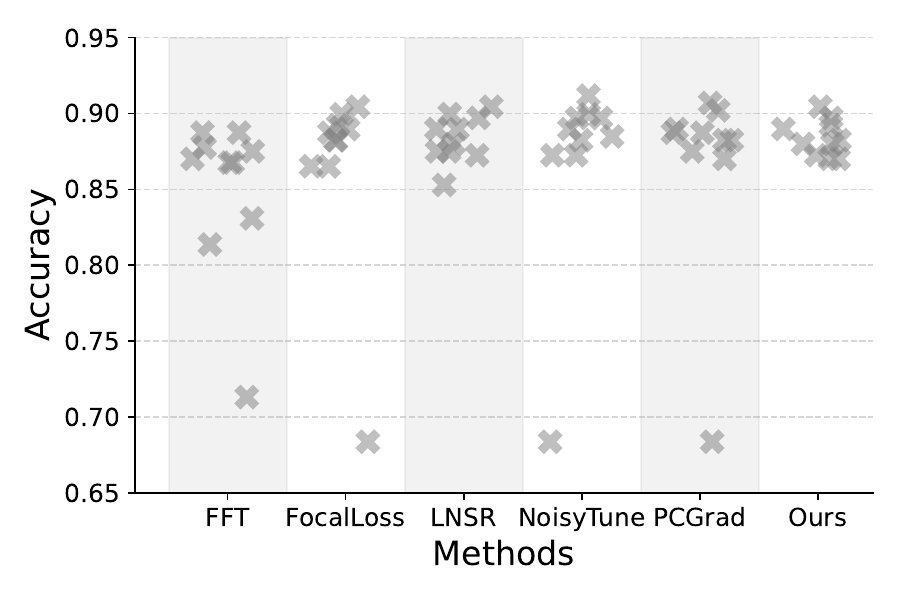}
        \caption{MRPC.}
    \end{subfigure}
    \begin{subfigure}{0.32\linewidth}
        \centering
        \includegraphics[width=\linewidth]{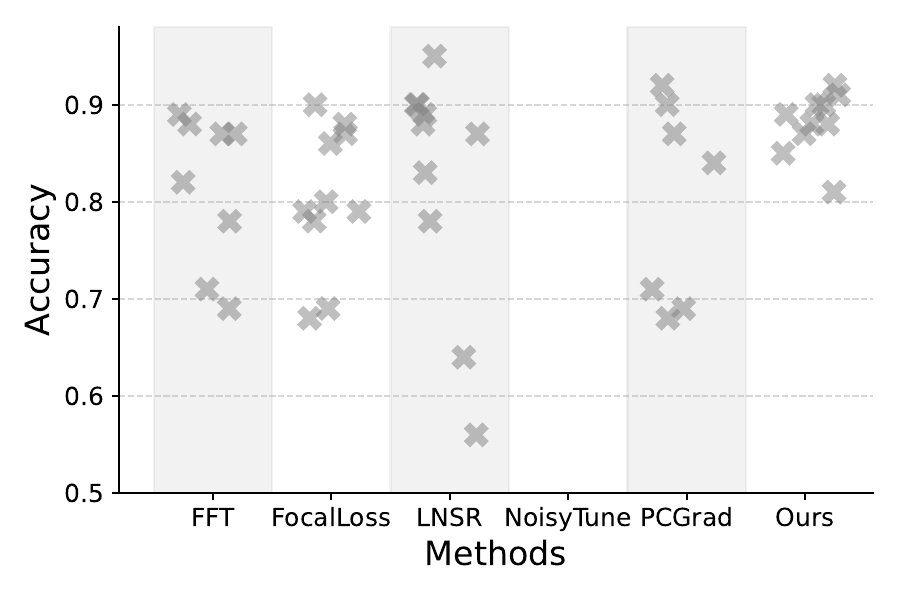}
        \caption{COPA.}
    \end{subfigure}
    \caption{Accuracies of 10 random runs by \texttt{Roberta-Large}.}
    \label{fig:roberta_appex_10_runs}
\end{figure*}

\begin{figure*}[h]
    \centering
    \begin{subfigure}{0.32\linewidth}
        \centering
        \includegraphics[width=\linewidth]{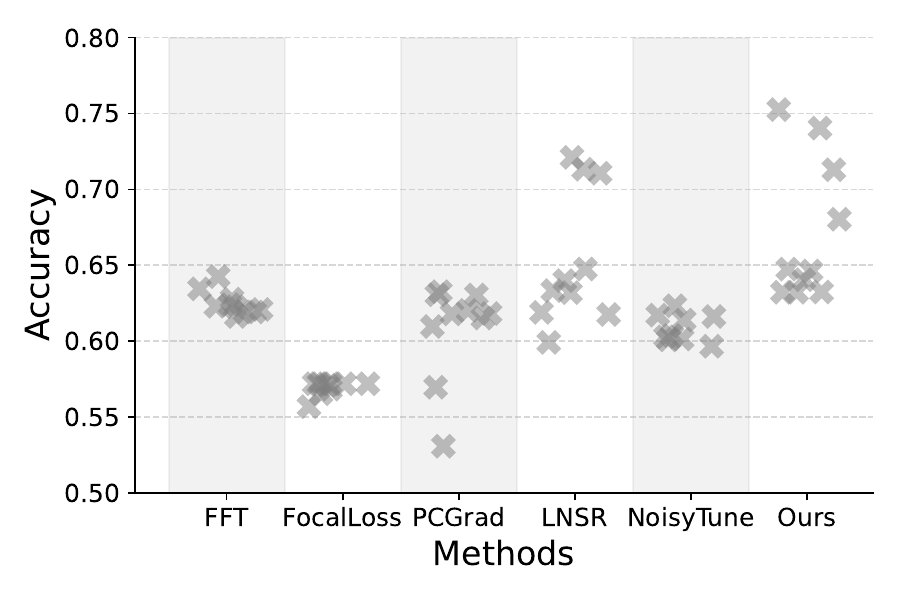}
        \caption{MultiRC.}
    \end{subfigure}
    \begin{subfigure}{0.32\linewidth}
        \centering
        \includegraphics[width=\linewidth]{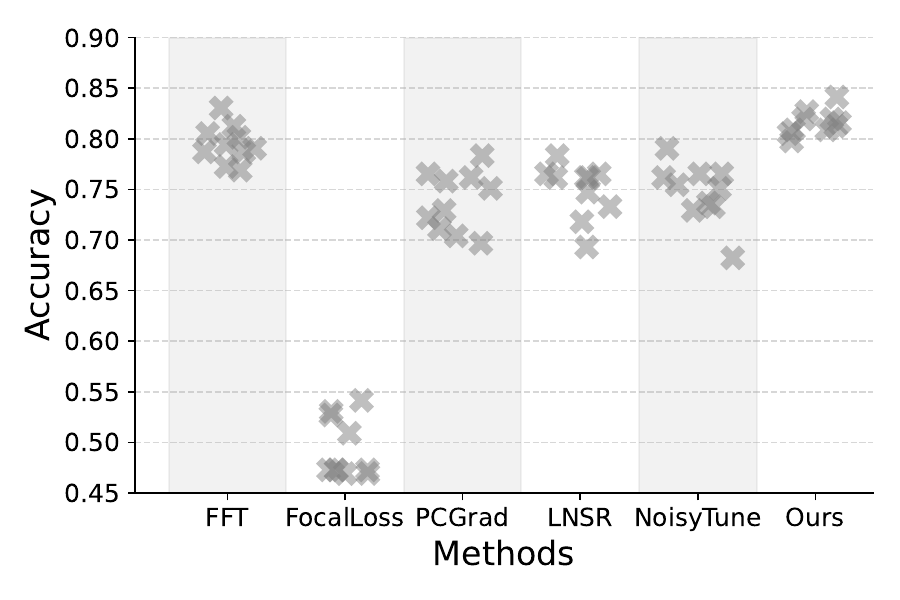}
        \caption{RTE.}
    \end{subfigure}
    \begin{subfigure}{0.32\linewidth}
        \centering
        \includegraphics[width=\linewidth]{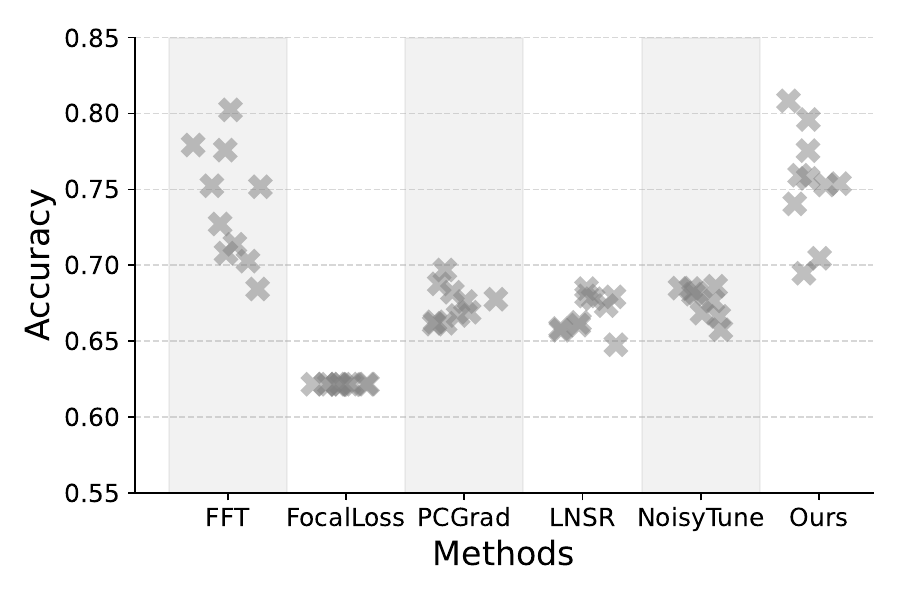}
        \caption{BoolQ.}
    \end{subfigure}
    \begin{subfigure}{0.32\linewidth}
        \centering
        \includegraphics[width=\linewidth]{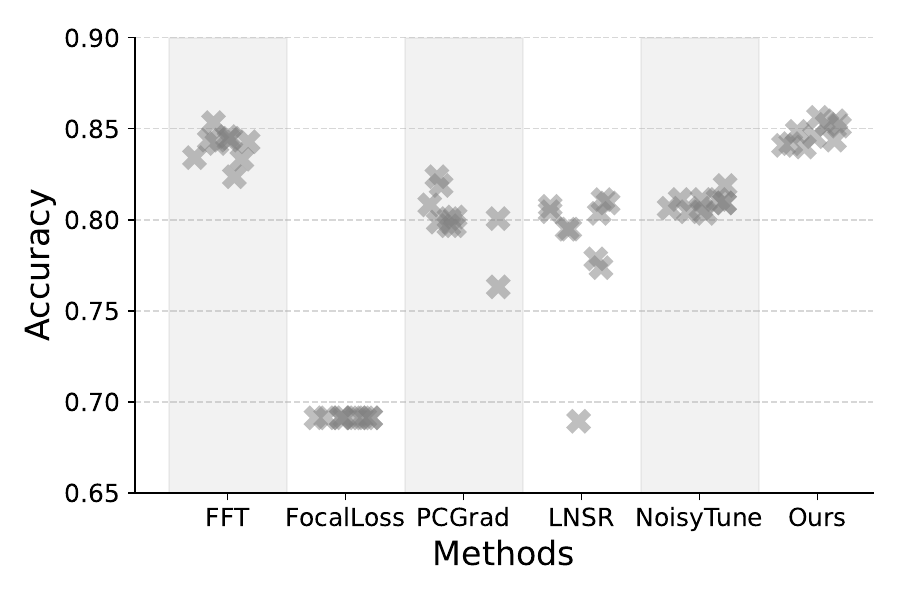}
        \caption{COLA.}
    \end{subfigure}
    \begin{subfigure}{0.32\linewidth}
        \centering
        \includegraphics[width=\linewidth]{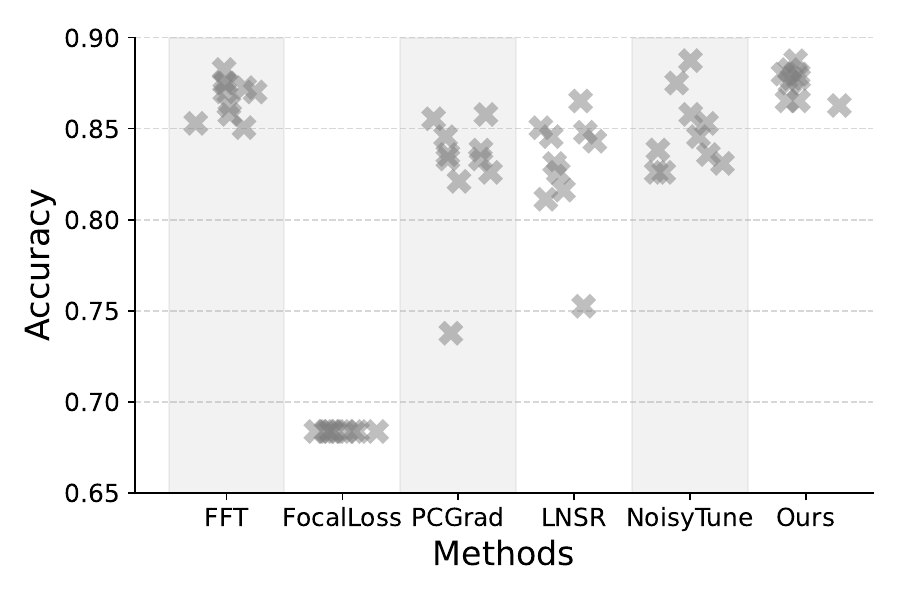}
        \caption{MRPC.}
    \end{subfigure}
    \begin{subfigure}{0.32\linewidth}
        \centering
        \includegraphics[width=\linewidth]{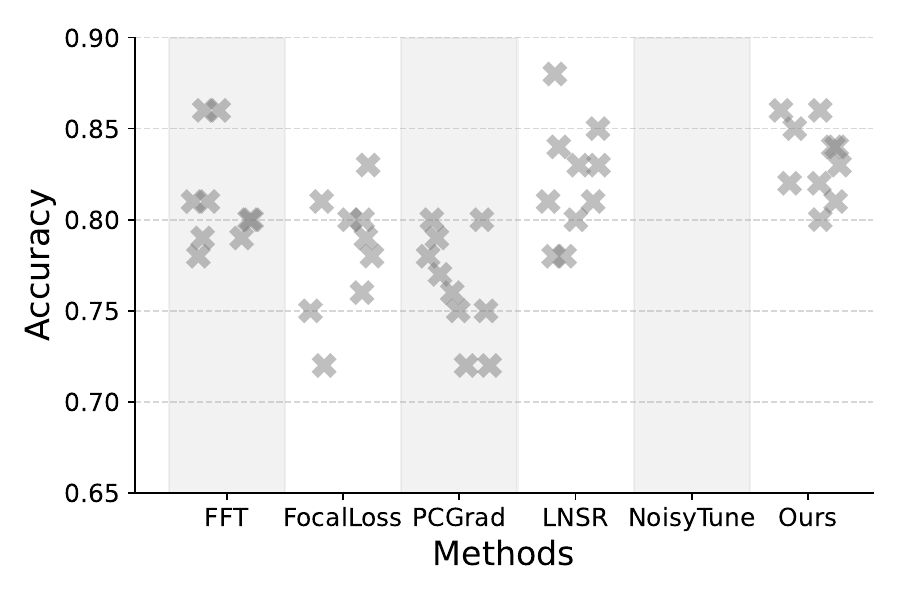}
        \caption{COPA.}
    \end{subfigure}
    \caption{Accuracies of 10 random runs by \texttt{Llama-3.2-1B}.}
    \label{fig:llama_appex_10_runs}
\end{figure*}

\begin{figure*}[h]
    \centering
    \begin{subfigure}{0.32\linewidth}
        \centering
        \includegraphics[width=\linewidth]{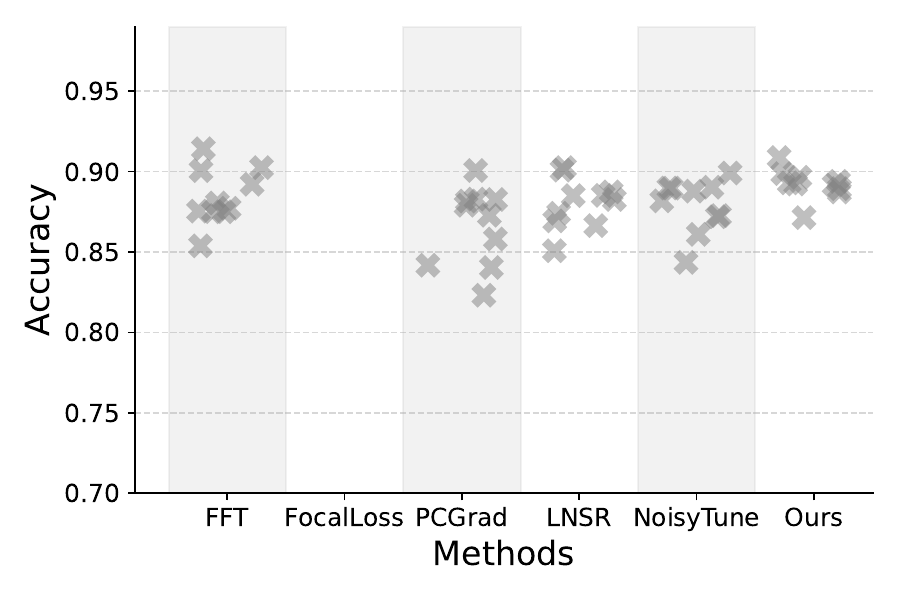}
        \caption{CIFAR-10 Long-tailed 0.01.}
    \end{subfigure}
    \begin{subfigure}{0.32\linewidth}
        \centering
        \includegraphics[width=\linewidth]{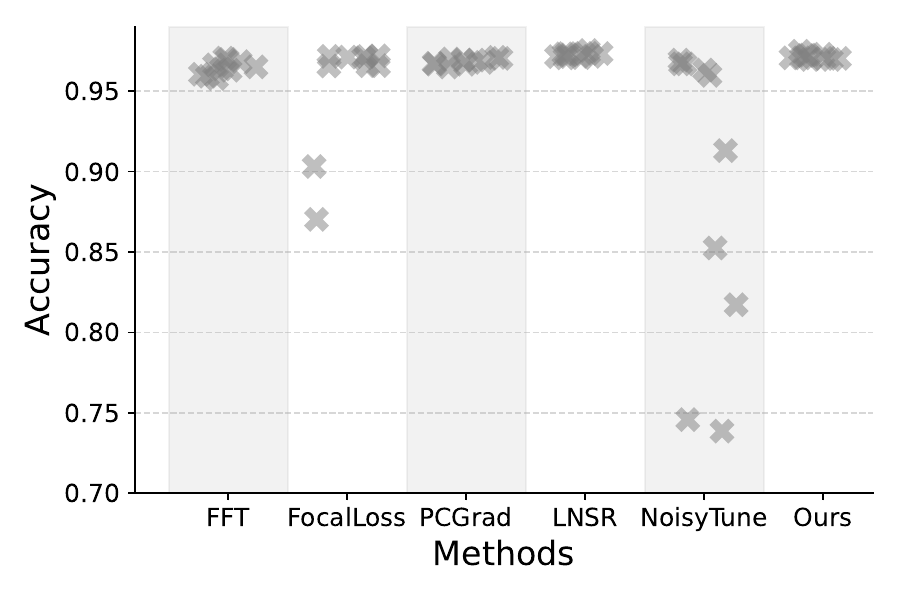}
        \caption{CIFAR-10 Long-tailed 0.5.}
    \end{subfigure}
    \begin{subfigure}{0.32\linewidth}
        \centering
        \includegraphics[width=\linewidth]{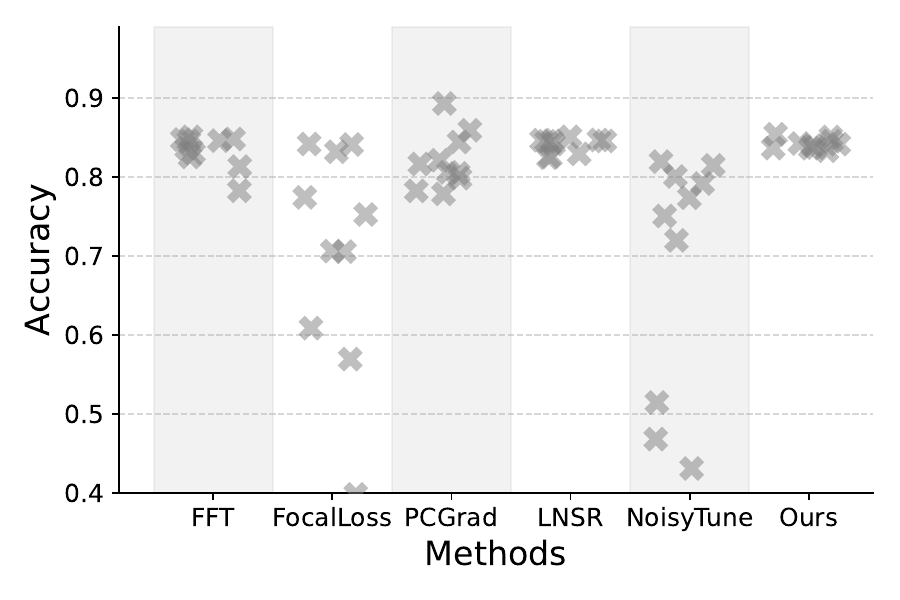}
        \caption{CIFAR-10 Step 0.01.}
    \end{subfigure}
    \begin{subfigure}{0.32\linewidth}
        \centering
        \includegraphics[width=\linewidth]{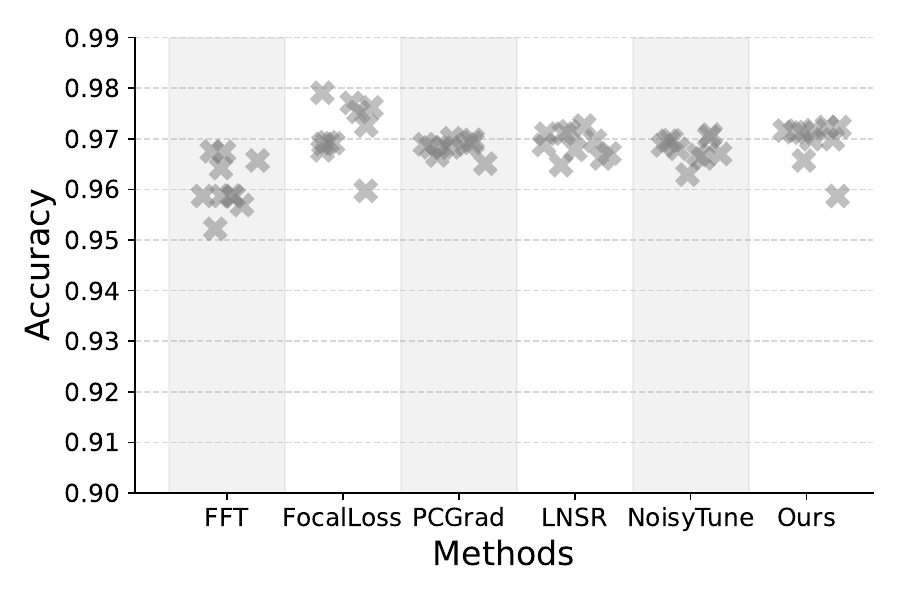}
        \caption{CIFAR-10 Step 0.5.}
    \end{subfigure}
    \begin{subfigure}{0.32\linewidth}
        \centering
        \includegraphics[width=\linewidth]{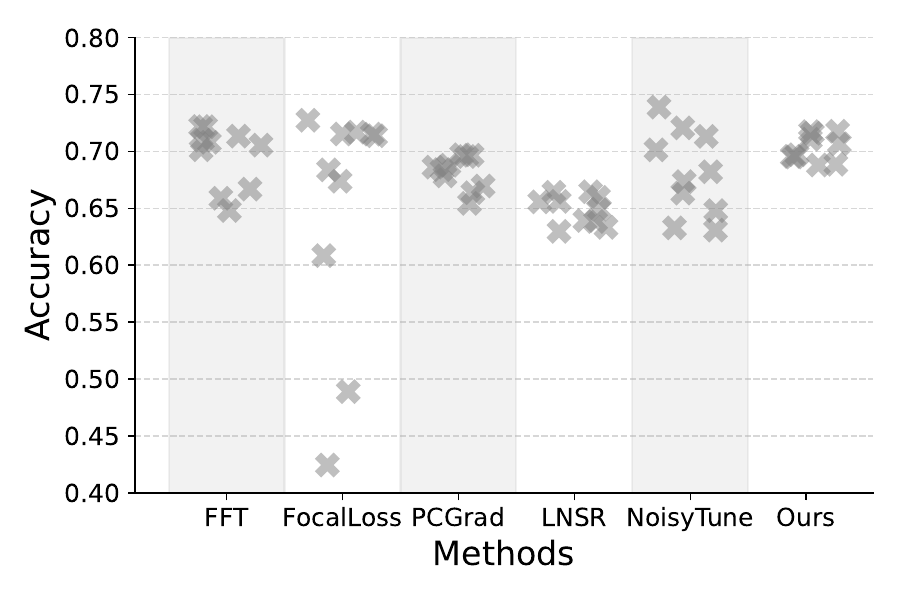}
        \caption{CIFAR-100 Long-tailed 0.01.}
    \end{subfigure}
    \begin{subfigure}{0.32\linewidth}
        \centering
        \includegraphics[width=\linewidth]{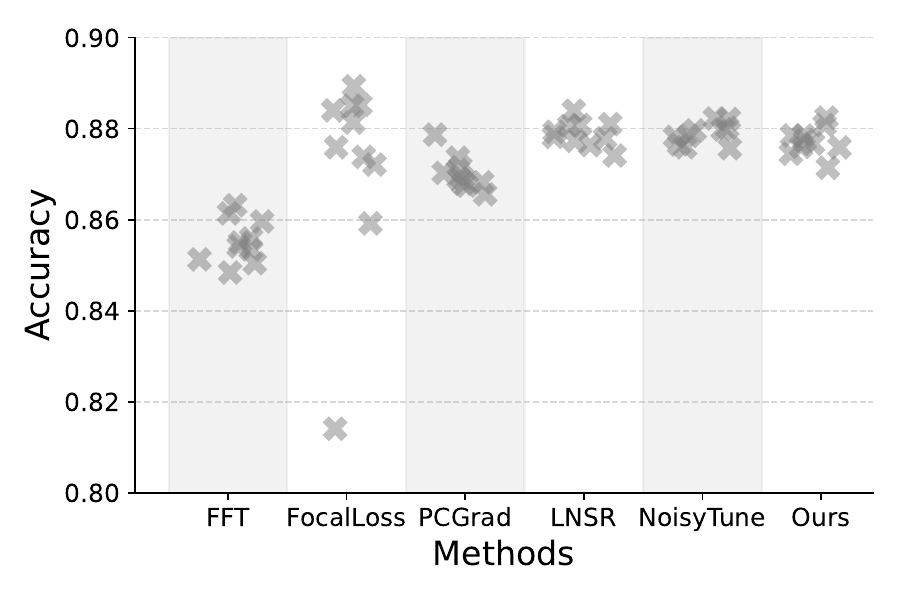}
        \caption{CIFAR-100 Long-tailed 0.5.}
    \end{subfigure}
    \begin{subfigure}{0.32\linewidth}
        \centering
        \includegraphics[width=\linewidth]{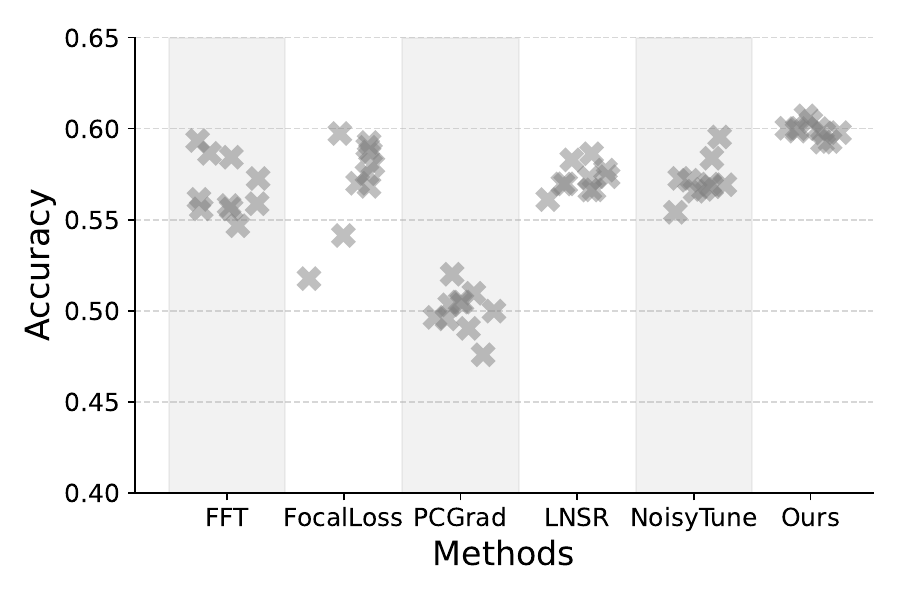}
        \caption{CIFAR-100 Step 0.01.}
    \end{subfigure}
    \begin{subfigure}{0.32\linewidth}
        \centering
        \includegraphics[width=\linewidth]{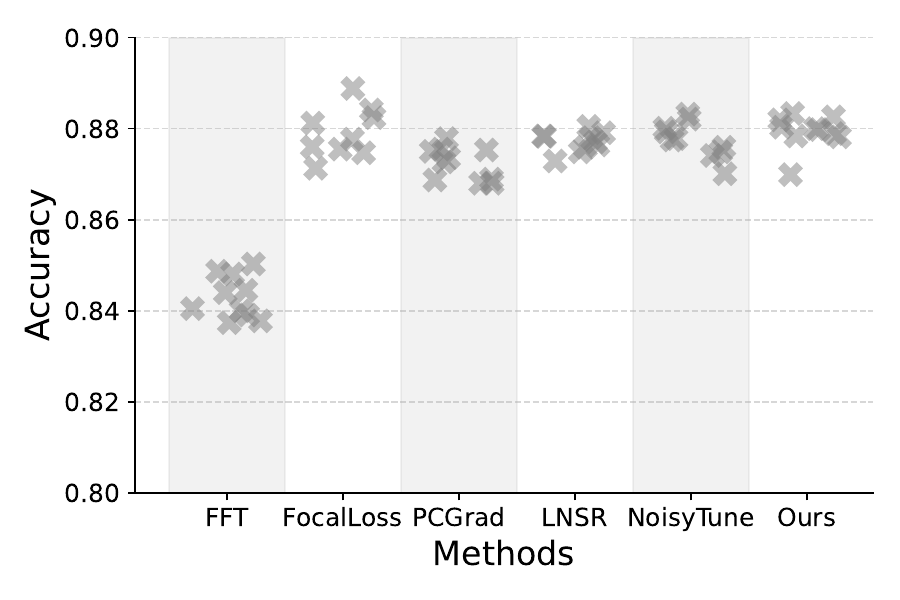}
        \caption{CIFAR-100 Step 0.5.}
    \end{subfigure}
    \caption{Accuracies of 10 random runs by \texttt{ViT-base}.}
    \label{fig:vit_appex_10_runs}
\end{figure*}

%% file: sections/relatedwork.tex
\section{Related Works} \label{sections/related_work}
Fine-tuning instability remains an open challenge, with prior work focusing on diagnosis rather than efficient solutions. 
\cite{devlin2019bert,dodge2020fine,lee2019mixout} identified two potential reasons for the observed instability: catastrophic forgetting and small size of the fine-tuning datasets. Thus increasing training size will be helpful in improving stability. However, collecting and annotating new data is super challenging in some domains. Thus, this method is infeasible in many domain tasks and its effectiveness is not guarantied. 
\cite{mosbach2020stability} attributes failed runs to unsuccessful optimization. They stated that observed fine-tuning instability is caused by optimization difficulties that lead to vanishing gradients. They proposed guidelines for improving stability of fine-tuning BERT which are using small learning rates with bias correction optimizer (like ADAM) and increase the number of iterations considerably and train to (almost) zero training loss. We followed their guidelines in FFT in our experiment which still show significant performance variance. 
While these works provide guidelines in improving stability of fine-tuning, they are either infeasible or less effective in real-world scenarios.  

Ensemble methods offer the most effective mitigation, providing the greatest improvements in performance and reductions in variance compared to experimental setup enhancements or data augmentation, as supported by numerous theoretical analyses~\cite{wang2020wisdom, wang-etal-2023-two}. \cite{wang2020wisdom} studies model instability by defining consistency of a learning model in the context of periodic retraining of deployed models where the outputs from successive generations of the models might not agree on the correct labels assigned to the same input. It proposes dynamic snapshot ensemble to improve the consistency with theoretical supports. However, this method requires sequencial training of those components which is highly inefficient in time-intensive scenarios. 
Later \cite{wang-etal-2023-two} decomposes high performance variance of LPMs in fine-tuning into variance due to sampling and variance due to optimization. They perform theoretical and empirical analysis using bagging ensemble to decrease variance due to optimization. However, ensemble suffers from high computational cost as discussed in Section~\ref{subsec:computational_cost}.

Weighted averaging
\cite{izmailov2018averaging, gao2022revisiting, madhyastha2019model, nishida-etal-2025-instability}
is a technique that averaging nearby checkpoints to produce the final model which is empirically and theoretically shown efficiency in reducing model variations in instability tasks by avoiding saddle points in optimization landscape~\cite{madhyastha2019model}. 
\cite{nishida-etal-2025-instability} attributes model stability varies across different runs due to the non-determinism factors such as weight initialization, dropout, stochastic data shuffling. They investigate two post-hoc checkpoint integration methods:
checkpoint averaging and ensemble, to reduce performance volatility. However, these methods require huge storage space to save those checkpoints which prohibits LPMs from deploying in real-time systems.

Noise-based techniques~\cite{hua2021noise, wu2022noisytune} typically seek to improve generalization and stability of LPMs in fine-tuning by injecting noise but lack a targeted mechanism to directly stabilize the fine-tuning optimization process.
The noise injection methods add randomness to everywhere which  does not stabilize the sensitive directions in the parameter space that cause seed divergence but can actually increase divergence by starting from more varied points. The added noise may destroy some of the transferrable knowledge leading to reduced accuracy.
Instead, our method attributes the cause of fine-tuning instability as the gradient conflicts between training examples and proposes a solution that directly targets the optimality during fine-tuning. It ensures escape from the collapsed state regardless of starting points to obtain better convergence and a tighter stability bound, leading to consistently better accuracy and variance. 

Despite no method has been proposed to solve gradient cancellation in fine-tuning, gradient conflicts resolvers are proposed with the aim of resolving the cancellation in multitask learning where the objective functions of individual tasks conflicts to other making the learning inefficient~\cite{sener2018multi, yu2020gradient, liu2021conflict, chen2020just}. We adapted these methods to solve gradient cancellation in multilabel fine-tuning learning. However, they acts when gradients have significant negative cosine similarity (gradient cancellation), but collapsed state can happen even with orthogonal gradients or small-magnitude gradients for one task. In addition, these methods require two backward passes to compute the gradient separately and therefore nearly doubles the training time.
In contrast, our method \mName actively adjusts gradients based on learning outcomes that do not depend on the detection of actual gradient cancellation, making it a more robust and proactive defense against task collapse and seed-induced instability.

%% file: tables/joint_task.tex
\begin{table*}[t]
\centering
\begin{adjustbox}{max width=\linewidth}
\begin{tabular}{l|cccccc|cc}
\toprule
          \texttt{Roberta-large}& \multicolumn{2}{c}{MRPC + RTE} & \multicolumn{2}{c}{MRPC + CoLA} & \multicolumn{2}{c|}{RTE + CoLA} &  ACC($\uparrow$) &STD($\downarrow$)\\
          \cmidrule(r){2-3} \cmidrule(r){4-5} \cmidrule(r){6-7}
          & MRPC                  & RTE  & MRPC          & CoLA          & RTE          & CoLA           & &\\
\midrule
FFT       & \multicolumn{1}{c}{85.22 $\pm$ 8.91 }  &      76.03 $\pm$ 12.41&               77.33 $\pm$ 10.64&               77.30 $\pm$ 8.63&              68.30 $\pm$ 16.05&               80.51 $\pm$ 7.88 & 77.44&10.75\\
PCGrad    &                       \textbf{88.63 $\pm$ 0.67}&      \textbf{83.25 $\pm$ 1.99}&               76.67 $\pm$  10.70&               75.66 $\pm$ 8.44&              \underline{80.94 $\pm$ 10.10}&               83.99 $\pm$ 5.26& \underline{81.52}&\underline{6.19}\\
LNSR      &                       81.18 $\pm$ 11.03&      69.75 $\pm$ 14.69&               \underline{88.01 $\pm$ 6.93}&               \underline{83.74 $\pm$ 5.17}&              79.24 $\pm$ 11.54&               83.59 $\pm$ 5.12 & 80.91&9.08\\
NoisyTune &                       84.78 $\pm$ 8.87&      75.02 $\pm$ 12.68&               70.80 $\pm$ 6.76&               71.74 $\pm$ 5.46&              68.16 $\pm$ 14.94&               81.14 $\pm$ 6.86 & 75.27&9.26\\
SWA       &                       79.68 $\pm$ 9.97&      65.56 $\pm$ 14.62&               79.63 $\pm$ 10.86&               79.01 $\pm$ 8.54&              76.79 $\pm$ 12.55&               \underline{85.54 $\pm$ 1.04}& 77.68&9.59\\
\rowcolor{gray!20} \mName         &      \underline{88.14 $\pm$ 1.20}&      \underline{81.73 $\pm$ 1.56}&               \textbf{88.95 $\pm$ 0.80}&               \textbf{85.21 $\pm$ 0.57}&              \textbf{83.10 $\pm$ 2.56}&               \textbf{85.35 $\pm$ 0.47} & \textbf{85.41}&\textbf{1.19}\\
\bottomrule
\end{tabular}
\end{adjustbox}
\caption{Comparison with single-learner baselines on multi-tasks learning settings. 
% \textcolor{red}{add FocalLoss.}
}
\label{table:multitask_result}
\end{table*}

%% file: ref.bib
@article{lee2019mixout,
  title={Mixout: Effective regularization to finetune large-scale pretrained language models},
  author={Lee, Cheolhyoung and Cho, Kyunghyun and Kang, Wanmo},
  journal={arXiv preprint arXiv:1909.11299},
  year={2019}
}

@article{gupta2023coverage,
  title={Coverage-based example selection for in-context learning},
  author={Gupta, Shivanshu and Gardner, Matt and Singh, Sameer},
  journal={arXiv preprint arXiv:2305.14907},
  year={2023}
}

@article{bethard2022we,
  title={We need to talk about random seeds},
  author={Bethard, Steven},
  journal={arXiv preprint arXiv:2210.13393},
  year={2022}
}

@article{han2024parameter,
  title={Parameter-efficient fine-tuning for large models: A comprehensive survey},
  author={Han, Zeyu and Gao, Chao and Liu, Jinyang and Zhang, Jeff and Zhang, Sai Qian},
  journal={arXiv preprint arXiv:2403.14608},
  year={2024}
}

@article{wang2019superglue,
  title={Superglue: A stickier benchmark for general-purpose language understanding systems},
  author={Wang, Alex and Pruksachatkun, Yada and Nangia, Nikita and Singh, Amanpreet and Michael, Julian and Hill, Felix and Levy, Omer and Bowman, Samuel},
  journal={Advances in neural information processing systems},
  volume={32},
  year={2019}
}

@article{touvron2023llama,
  title={Llama: Open and efficient foundation language models},
  author={Touvron, Hugo and Lavril, Thibaut and Izacard, Gautier and Martinet, Xavier and Lachaux, Marie-Anne and Lacroix, Timoth{\'e}e and Rozi{\`e}re, Baptiste and Goyal, Naman and Hambro, Eric and Azhar, Faisal and others},
  journal={arXiv preprint arXiv:2302.13971},
  year={2023}
}

@article{liu2019roberta,
  title={Roberta: A robustly optimized bert pretraining approach},
  author={Liu, Yinhan and Ott, Myle and Goyal, Naman and Du, Jingfei and Joshi, Mandar and Chen, Danqi and Levy, Omer and Lewis, Mike and Zettlemoyer, Luke and Stoyanov, Veselin},
  journal={arXiv preprint arXiv:1907.11692},
  year={2019}
}

@article{cao2019learning,
  title={Learning imbalanced datasets with label-distribution-aware margin loss},
  author={Cao, Kaidi and Wei, Colin and Gaidon, Adrien and Arechiga, Nikos and Ma, Tengyu},
  journal={Advances in neural information processing systems},
  volume={32},
  year={2019}
}

@article{dosovitskiy2020image,
  title={An image is worth 16x16 words: Transformers for image recognition at scale},
  author={Dosovitskiy, Alexey},
  journal={arXiv preprint arXiv:2010.11929},
  year={2020}
}

@article{yu2020gradient,
  title={Gradient surgery for multi-task learning},
  author={Yu, Tianhe and Kumar, Saurabh and Gupta, Abhishek and Levine, Sergey and Hausman, Karol and Finn, Chelsea},
  journal={Advances in neural information processing systems},
  volume={33},
  pages={5824--5836},
  year={2020}
}

@article{liu2021conflict,
  title={Conflict-averse gradient descent for multi-task learning},
  author={Liu, Bo and Liu, Xingchao and Jin, Xiaojie and Stone, Peter and Liu, Qiang},
  journal={Advances in Neural Information Processing Systems},
  volume={34},
  pages={18878--18890},
  year={2021}
}

@inproceedings{wang-etal-2023-two,
    title = "Two-Stage Fine-Tuning for Improved Bias and Variance for Large Pretrained Language Models",
    author = "Wang, Lijing  and
      Li, Yingya  and
      Miller, Timothy  and
      Bethard, Steven  and
      Savova, Guergana",
    editor = "Rogers, Anna  and
      Boyd-Graber, Jordan  and
      Okazaki, Naoaki",
    booktitle = "Proceedings of the 61st Annual Meeting of the Association for Computational Linguistics (Volume 1: Long Papers)",
    month = jul,
    year = "2023",
    address = "Toronto, Canada",
    publisher = "Association for Computational Linguistics",
    url = "https://aclanthology.org/2023.acl-long.877/",
    doi = "10.18653/v1/2023.acl-long.877",
    pages = "15746--15761"
}

@article{wang2020wisdom,
  title={Wisdom of the ensemble: Improving consistency of deep learning models},
  author={Wang, Lijing and Ghosh, Dipanjan and Gonzalez Diaz, Maria and Farahat, Ahmed and Alam, Mahbubul and Gupta, Chetan and Chen, Jiangzhuo and Marathe, Madhav},
  journal={Advances in Neural Information Processing Systems},
  volume={33},
  pages={19750--19761},
  year={2020}
}

@inproceedings{hardt2016train,
  title={Train faster, generalize better: Stability of stochastic gradient descent},
  author={Hardt, Moritz and Recht, Ben and Singer, Yoram},
  booktitle={International conference on machine learning},
  pages={1225--1234},
  year={2016},
  organization={PMLR}
}

@article{hu2022lora,
  title={Lora: Low-rank adaptation of large language models.},
  author={Hu, Edward J and Shen, Yelong and Wallis, Phillip and Allen-Zhu, Zeyuan and Li, Yuanzhi and Wang, Shean and Wang, Lu and Chen, Weizhu and others},
  journal={ICLR},
  volume={1},
  number={2},
  pages={3},
  year={2022}
}

@book{10.5555/2670022,
author = {Nesterov, Yurii},
title = {Introductory Lectures on Convex Optimization: A Basic Course},
year = {2014},
isbn = {1461346916},
publisher = {Springer Publishing Company, Incorporated},
edition = {1}
}

@article{mosbach2020stability,
  title={On the Stability of Fine-tuning BERT: Misconceptions},
  author={Mosbach, Marius and Andriushchenko, Maksym and Klakow, Dietrich},
  journal={Explanations, and Strong Baselines. arXiv},
  year={2020}
}

@misc{bui2025assessing,
      title={Assessing the Macro and Micro Effects of Random Seeds on Fine-Tuning Large Language Models}, 
      author={Nghia Bui and Guergana Savova and Lijing Wang},
      year={2025},
      eprint={2503.07329},
      archivePrefix={arXiv},
      primaryClass={cs.CL},
      url={https://arxiv.org/abs/2503.07329}, 
}

@inproceedings{mosbach-2023-analyzing,
    title = "Analyzing Pre-trained and Fine-tuned Language Models",
    author = "Mosbach, Marius",
    editor = "Elazar, Yanai  and
      Ettinger, Allyson  and
      Kassner, Nora  and
      Ruder, Sebastian  and
      A. Smith, Noah",
    booktitle = "Proceedings of the Big Picture Workshop",
    month = dec,
    year = "2023",
    address = "Singapore",
    publisher = "Association for Computational Linguistics",
    url = "https://aclanthology.org/2023.bigpicture-1.10/",
    doi = "10.18653/v1/2023.bigpicture-1.10",
    pages = "123--134"
}

@inproceedings{wang2018glue,
  title={GLUE: A multi-task benchmark and analysis platform for natural language understanding},
  author={Wang, Alex and Singh, Amanpreet and Michael, Julian and Hill, Felix and Levy, Omer and Bowman, Samuel},
  booktitle={Proceedings of the 2018 EMNLP workshop BlackboxNLP: Analyzing and interpreting neural networks for NLP},
  pages={353--355},
  year={2018}
}

@article{loshchilov2017decoupled,
  title={Decoupled weight decay regularization},
  author={Loshchilov, Ilya and Hutter, Frank},
  journal={arXiv preprint arXiv:1711.05101},
  year={2017}
}

@inproceedings{hua2021noise,
  title={Noise stability regularization for improving BERT fine-tuning},
  author={Hua, Hang and Li, Xingjian and Dou, Dejing and Xu, Chengzhong and Luo, Jiebo},
  booktitle={Proceedings of the 2021 Conference of the North American Chapter of the Association for Computational Linguistics: Human Language Technologies},
  pages={3229--3241},
  year={2021}
}

@inproceedings{wu2022noisytune,
  title={Noisytune: A little noise can help you finetune pretrained language models better},
  author={Wu, Chuhan and Wu, Fangzhao and Qi, Tao and Huang, Yongfeng},
  booktitle={Proceedings of the 60th Annual Meeting of the Association for Computational Linguistics (Volume 2: Short Papers)},
  pages={680--685},
  year={2022}
}

@article{chen2020just,
  title={Just pick a sign: Optimizing deep multitask models with gradient sign dropout},
  author={Chen, Zhao and Ngiam, Jiquan and Huang, Yanping and Luong, Thang and Kretzschmar, Henrik and Chai, Yuning and Anguelov, Dragomir},
  journal={Advances in Neural Information Processing Systems},
  volume={33},
  pages={2039--2050},
  year={2020}
}

@inproceedings{devlin2019bert,
  title={Bert: Pre-training of deep bidirectional transformers for language understanding},
  author={Devlin, Jacob and Chang, Ming-Wei and Lee, Kenton and Toutanova, Kristina},
  booktitle={Proceedings of the 2019 conference of the North American chapter of the association for computational linguistics: human language technologies, volume 1 (long and short papers)},
  pages={4171--4186},
  year={2019}
}

@article{phang2018sentence,
  title={Sentence encoders on stilts: Supplementary training on intermediate labeled-data tasks},
  author={Phang, Jason and F{\'e}vry, Thibault and Bowman, Samuel R},
  journal={arXiv preprint arXiv:1811.01088},
  year={2018}
}

@article{zhang2023adalora,
  title={Adalora: Adaptive budget allocation for parameter-efficient fine-tuning},
  author={Zhang, Qingru and Chen, Minshuo and Bukharin, Alexander and Karampatziakis, Nikos and He, Pengcheng and Cheng, Yu and Chen, Weizhu and Zhao, Tuo},
  journal={arXiv preprint arXiv:2303.10512},
  year={2023}
}

@article{liu2022few,
  title={Few-shot parameter-efficient fine-tuning is better and cheaper than in-context learning},
  author={Liu, Haokun and Tam, Derek and Muqeeth, Mohammed and Mohta, Jay and Huang, Tenghao and Bansal, Mohit and Raffel, Colin A},
  journal={Advances in Neural Information Processing Systems},
  volume={35},
  pages={1950--1965},
  year={2022}
}

@inproceedings{cui2019class,
  title={Class-balanced loss based on effective number of samples},
  author={Cui, Yin and Jia, Menglin and Lin, Tsung-Yi and Song, Yang and Belongie, Serge},
  booktitle={Proceedings of the IEEE/CVF conference on computer vision and pattern recognition},
  pages={9268--9277},
  year={2019}
}

@article{azizzadenesheli2019regularized,
  title={Regularized learning for domain adaptation under label shifts},
  author={Azizzadenesheli, Kamyar and Liu, Anqi and Yang, Fanny and Anandkumar, Animashree},
  journal={arXiv preprint arXiv:1903.09734},
  year={2019}
}

@inproceedings{nishida-etal-2025-instability,
    title = "Instability in Downstream Task Performance During {LLM} Pretraining",
    author = "Nishida, Yuto  and
      Isonuma, Masaru  and
      Oda, Yusuke",
    editor = "Christodoulopoulos, Christos  and
      Chakraborty, Tanmoy  and
      Rose, Carolyn  and
      Peng, Violet",
    booktitle = "Findings of the Association for Computational Linguistics: EMNLP 2025",
    month = nov,
    year = "2025",
    address = "Suzhou, China",
    publisher = "Association for Computational Linguistics",
    url = "https://aclanthology.org/2025.findings-emnlp.1246/",
    doi = "10.18653/v1/2025.findings-emnlp.1246",
    pages = "22883--22895",
    ISBN = "979-8-89176-335-7"
}

@article{izmailov2018averaging,
  title={Averaging weights leads to wider optima and better generalization},
  author={Izmailov, Pavel and Podoprikhin, Dmitrii and Garipov, Timur and Vetrov, Dmitry and Wilson, Andrew Gordon},
  journal={arXiv preprint arXiv:1803.05407},
  year={2018}
}

@article{gao2022revisiting,
  title={Revisiting checkpoint averaging for neural machine translation},
  author={Gao, Yingbo and Herold, Christian and Yang, Zijian and Ney, Hermann},
  journal={arXiv preprint arXiv:2210.11803},
  year={2022}
}

@inproceedings{madhyastha2019model,
  title={On model stability as a function of random seed},
  author={Madhyastha, Pranava Swaroop and Jain, Rishabh},
  booktitle={Proceedings of the 23rd Conference on Computational Natural Language Learning (CoNLL)},
  pages={929--939},
  year={2019}
}

@article{dodge2020fine,
  title={Fine-tuning pretrained language models: Weight initializations, data orders, and early stopping},
  author={Dodge, Jesse and Ilharco, Gabriel and Schwartz, Roy and Farhadi, Ali and Hajishirzi, Hannaneh and Smith, Noah},
  journal={arXiv preprint arXiv:2002.06305},
  year={2020}
}

@inproceedings{summers2021nondeterminism,
  title={Nondeterminism and instability in neural network optimization},
  author={Summers, Cecilia and Dinneen, Michael J},
  booktitle={International Conference on Machine Learning},
  pages={9913--9922},
  year={2021},
  organization={PMLR}
}

@article{picard2021torch,
  title={Torch. manual\_seed (3407) is all you need: On the influence of random seeds in deep learning architectures for computer vision},
  author={Picard, David},
  journal={arXiv preprint arXiv:2109.08203},
  year={2021}
}

@inproceedings{hidey-etal-2022-reducing,
    title = "Reducing Model Churn: Stable Re-training of Conversational Agents",
    author = "Hidey, Christopher  and
      Liu, Fei  and
      Goel, Rahul",
    editor = "Lemon, Oliver  and
      Hakkani-Tur, Dilek  and
      Li, Junyi Jessy  and
      Ashrafzadeh, Arash  and
      Garcia, Daniel Hern{\'a}ndez  and
      Alikhani, Malihe  and
      Vandyke, David  and
      Du{\v{s}}ek, Ond{\v{r}}ej",
    booktitle = "Proceedings of the 23rd Annual Meeting of the Special Interest Group on Discourse and Dialogue",
    month = sep,
    year = "2022",
    address = "Edinburgh, UK",
    publisher = "Association for Computational Linguistics",
    url = "https://aclanthology.org/2022.sigdial-1.2/",
    doi = "10.18653/v1/2022.sigdial-1.2",
    pages = "14--25"
}

@inproceedings{pecher-etal-2024-fighting,
    title = "Fighting Randomness with Randomness: Mitigating Optimisation Instability of Fine-Tuning using Delayed Ensemble and Noisy Interpolation",
    author = "Pecher, Branislav  and
      Cegin, Jan  and
      Belanec, Robert  and
      Simko, Jakub  and
      Srba, Ivan  and
      Bielikova, Maria",
    editor = "Al-Onaizan, Yaser  and
      Bansal, Mohit  and
      Chen, Yun-Nung",
    booktitle = "Findings of the Association for Computational Linguistics: EMNLP 2024",
    month = nov,
    year = "2024",
    address = "Miami, Florida, USA",
    publisher = "Association for Computational Linguistics",
    url = "https://aclanthology.org/2024.findings-emnlp.644/",
    doi = "10.18653/v1/2024.findings-emnlp.644",
    pages = "11005--11044"
}

@article{sener2018multi,
  title={Multi-task learning as multi-objective optimization},
  author={Sener, Ozan and Koltun, Vladlen},
  journal={Advances in neural information processing systems},
  volume={31},
  year={2018}
}

@article{buda2018systematic,
  title={A systematic study of the class imbalance problem in convolutional neural networks},
  author={Buda, Mateusz and Maki, Atsuto and Mazurowski, Maciej A},
  journal={Neural networks},
  volume={106},
  pages={249--259},
  year={2018},
  publisher={Elsevier}
}

@article{zhang2020revisiting,
  title={Revisiting few-sample BERT fine-tuning},
  author={Zhang, Tianyi and Wu, Felix and Katiyar, Arzoo and Weinberger, Kilian Q and Artzi, Yoav},
  journal={arXiv preprint arXiv:2006.05987},
  year={2020}
}

@inproceedings{lin2017focal,
  title={Focal loss for dense object detection},
  author={Lin, Tsung-Yi and Goyal, Priya and Girshick, Ross and He, Kaiming and Doll{\'a}r, Piotr},
  booktitle={Proceedings of the IEEE international conference on computer vision},
  pages={2980--2988},
  year={2017}
}

@article{he2024does,
  title={Does prompt formatting have any impact on llm performance?},
  author={He, Jia and Rungta, Mukund and Koleczek, David and Sekhon, Arshdeep and Wang, Franklin X and Hasan, Sadid},
  journal={arXiv preprint arXiv:2411.10541},
  year={2024}
}
